\tikzset{block/.style = {draw, text=black,rectangle, rounded corners,
		minimum height=2.5em, minimum width=5em}
}
 \newtheorem{theorem}{\bf Theorem}
 \newtheorem{lemma}{\bf Lemma}
 \newtheorem{proposition}{\bf Proposition}
\newtheorem{remark}{\bf Remark}
\newtheorem{problem}{\bf Problem}
\newtheorem{assumption}{\bf Assumption}
\newtheorem{definition}{\bf Definition}
\newcommand{\paren}[1]{\ensuremath{\left( #1\right)}}
\newcommand{\clint}[1]{\ensuremath{\left[ #1\right]}}
\newcommand{\set}[1]{\ensuremath{\left\{ #1\right\}}}
\newcommand{\matr}[1]{\ensuremath{\clint{\begin{array} #1 \end{array}}}}
\newcommand{\norm}[1]{\ensuremath{\left\| #1\right\|}}
\newcommand{\snorm}[1]{\ensuremath{\| #1\|}}
\newcommand{\abs}[1]{\ensuremath{\left| #1\right|}}
\newcommand{\mbf}[1]{\ensuremath{\bm{#1}}}
\newcommand{\K}{\ensuremath{\mathcal{K}}}
\newcommand{\poly}{\ensuremath{\mathrm{poly}}}
\newcommand{\rank}{\ensuremath{\mathrm{rank}}}
\newcommand{\N}{\ensuremath{\mathcal{N}}}
\newcommand{\R}{\ensuremath{\mathbb{R}}}
\renewcommand{\P}{\ensuremath{\mathbb{P}}}
\newcommand{\F}{\ensuremath{\mathcal{F}}}
\newcommand{\E}{\ensuremath{\mathbb{E}}}
\newcommand{\C}{\ensuremath{\mathcal{C}}}
\newcommand{\CC}{\ensuremath{\mathscr{C}}}
\newcommand{\VEC}{\ensuremath{\mathrm{vec}}}
\newcommand{\AUX}{\ensuremath{\mathrm{AUX}}}
\DeclareMathOperator{\Tr}{\mathrm{tr}}
\title{Learning to Control Linear Systems can be Hard}
\author{Anastasios~Tsiamis$^1$, Ingvar~Ziemann$^2$, Manfred Morari$^3$, Nikolai Matni$^3$, and George~J.~Pappas$^3$ 
	\date{
	$^1$ Automatic Control Laboratory, ETH Zurich, email: atsiamis@control.ee.ethz.ch\\
	$^2$ School of Electrical Engineering and Computer Science, KTH Royal Institute of Technology, email: ziemann@kth.se\\
	$^3$ Department of Electrical and Systems Engineering, University of Pennsylvania, email: \{morari,nmatni,pappasg\}@seas.upenn.edu}
		}
\begin{document}

\maketitle

\begin{abstract}%
In this paper, we study the statistical difficulty of learning to control linear systems. We focus on two standard benchmarks, the sample complexity of stabilization, and the regret of the online learning of the Linear Quadratic Regulator (LQR). Prior results state that the statistical difficulty for both benchmarks scales polynomially with the system state dimension up to system-theoretic quantities. However, this does not reveal the whole picture. By utilizing minimax lower bounds for both benchmarks, we prove that there exist non-trivial classes of systems for which learning complexity scales dramatically, i.e. exponentially, with the system dimension. This situation arises in the case of underactuated systems, i.e. systems with fewer inputs than states. Such systems are structurally difficult to control and their
system theoretic quantities can scale exponentially with the system dimension dominating learning complexity. Under some additional structural assumptions (bounding systems away from uncontrollability), we provide qualitatively matching upper bounds. We prove that learning complexity can be at most exponential with the controllability index of the system, that is the degree of underactuation.
\end{abstract}

\section{Introduction}\label{sec:introduction}
In stochastic linear control, the goal is to design a controller for a system of the form
\begin{equation}\label{CTRL_eq:system}
S:\qquad    x_{k+1}=Ax_k+Bu_k+Hw_k,
\end{equation}
where $x_k\in\R^{n}$ is the system internal state, $u_k\in \R^{p}$ is some exogenous input, and $w_k\in \R^r$ is some random disturbance sequence. Matrices $A,\,B,\,H$ determine the evolution of the state, based on the previous state, control input, and disturbance respectively. Control theory has a long history of studying how to design controllers for system~\eqref{CTRL_eq:system} when its model is \emph{known}~\citep{bertsekas2017dynamic}.
However, in reality system~\eqref{CTRL_eq:system} might be \emph{unknown} and we might not have access to its model. In this case, we have to learn how to control~\eqref{CTRL_eq:system} based on data. 

Controlling unknown dynamical systems has also been studied from the perspective of Reinforcement Learning (RL). Although the setting of tabular RL is relatively well-understood~\citep{jaksch2010near}, it has been challenging to analyze the continuous setting, where the state and/or action spaces are infinite~\citep{ortner2012online,kakade2020information}.
Recently, there has been renewed interest in learning to control linear systems. Indeed, linear systems are simple enough to allow for an in-depth theoretical analysis, yet exhibit sufficiently rich behavior so that we can draw conclusions about continuous control of more general system classes~\citep{recht2018tour}. In this paper we focus on the following two problems.

\textbf{Regret of online LQR.} A fundamental benchmark for continuous control is the Linear Quadratic Regulator (LQR) problem, where the goal is to compute a policy
\footnote{A policy decides the current control input $u_t$ based on past state-input values
--see Section~\ref{CTRL_sec:formulation} for details.
} 
$\pi$ that minimizes
\begin{equation}\label{CTRL_eq:LQR_objective}
    J^*(S)\triangleq \min_{\pi}\lim_{T\rightarrow \infty} \frac{1}{T} \E_{S,\pi} \clint{\sum^{T-1}_{t=0}(x'_tQx_t+u'_tRu_t)+x'_TQ_Tx_T},
\end{equation}
where $Q\in\R^{n\times n}$, $R\in\R^{p\times p}$ are the state and input penalties respectively; these penalties control the tradeoff between state regulation and control effort. 
 When model~\eqref{CTRL_eq:system} is known, LQR enjoys a closed-form solution; the optimal policy is a linear feedback law  $\pi_{\star,t}(x_t)=K_{\star}x_{t}$, where the control gain $K_{\star}$ is given by solving the celebrated Algebraic Riccati Equation (ARE)~\eqref{CTRL_eq:LQR_DARE}.
 If model~\eqref{CTRL_eq:system} is unknown, we have to learn the optimal policy from data. In the online learning setting, the goal of the learner
 is to find a policy that adapts online and competes with the optimal LQR policy that has access to the true model. The suboptimality of the online learning policy at time $T$ is captured by the \emph{regret}
\begin{equation}\label{CTRL_eq:regret}
    R_T(S)\triangleq \sum_{t=0}^{T-1}(x'_tQx_t+u'_tRu_t)+x'_TQ_Tx_T-TJ^*(S).
\end{equation}
The learning task is to find a policy with as small regret as possible.

\textbf{Sample Complexity of Stabilization} Another important benchmark is the problem of stabilization from data. The goal is to learn a linear gain $K\in\R^{m\times n}$ such that the closed-loop system $A+BK$ is stable, i.e., such that its spectral radius $\rho(A+BK)$ is less than one. 
Many algorithms for online LQR require the existence of such a stabilizing gain to initialize the online learning policy~\citep{simchowitz2020naive,jedra2021minimal}. Furthermore, stabilization is a problem of independent interest~\citep{faradonbeh2018stabilization}.
In this setting, the learner designs an exploration policy $\pi$ and an algorithm that uses batch state-input data $x_0,\dots,x_N,u_0,\dots,u_{N-1}$ to output a control gain $\hat{K}_N$, at the end of the exploration phase. Here we focus on \emph{sample complexity}, i.e., the minimum number of samples $N$ required to find a stabilizing gain.

Since the seminal papers by~\cite{abbasi2011regret} and~\cite{dean2017sample} both LQR and stabilization have been studied extensively in the literature -- see Section~\ref{CTLR_sec:related_work}. Current state-of-the-art results state that the regret of online LQR and the sample complexity of stabilization scale at most polynomially with system dimension $n$
\begin{equation}\label{CTRL_eq:existing_upper_bounds}
R_T(S)\lesssim C^{\mathrm{sys}}_{1}\poly(n)\sqrt{T},\quad N\lesssim C^{\mathrm{sys}}_{\mathrm{2}}\poly(n),
\end{equation}
where $C^{\mathrm{sys}}_{1},\,C^{\mathrm{sys}}_{2}$ are system specific constants that depend on several control theoretic quantities of system~\eqref{CTRL_eq:system}. 
However, the above statements might not reveal the whole picture. 

In fact, system theoretic parameters $C^{\mathrm{sys}}_{1},\,C^{\mathrm{sys}}_{2}$ can actually hide dimensional dependence on $n$. This dependence has been overlooked in prior work. As we show in this paper, there exist non-trivial classes of linear systems for which system theoretic parameters scale dramatically, i.e. exponentially, with the dimension $n$. As a result, the system theoretic quantities $C^{\mathrm{sys}}_{1},\,C^{\mathrm{sys}}_{2}$ might be very large and in fact \emph{dominate} the $\poly(n)$ term in the upper bounds~\eqref{CTRL_eq:existing_upper_bounds}. This phenomenon especially arises in systems which are structurally difficult to control, such as for example underactuated systems. Then, the upper bounds~\eqref{CTRL_eq:existing_upper_bounds} suggest that learning might be difficult for such instances. This brings up the following questions. \emph{Can learning LQR or stabilizing controllers indeed be hard for such systems? How does system structure affect difficulty of learning?}

To answer the first question, we need to establish lower bounds. As we discuss in Section~\ref{CTLR_sec:related_work}, existing lower bounds for online LQR~\citep{simchowitz2020naive} might not always reveal the dependence on control theoretic parameters. \cite{chen2021black}~provided exponential lower bounds for the start-up regret of stabilization. Still, to the best of our knowledge, there are no existing lower bounds for the \emph{sample complexity} of stabilization. Recently, it was shown that the sample complexity of system identification can grow exponentially with the dimension $n$~\citep{tsiamis2021linear}. However, it is not clear if difficulty of identification translates into difficulty of control. Besides, we do not always need to identify the whole system in order to control it~\citep{gevers2005identification}.
To answer the second question, we need to provide upper bounds for several control theoretic parameters.   Our contributions are the following: 

\textbf{Exp($n$) Stabilization Lower Bounds.}
We prove an information-theoretic lower bound for the problem of learning stabilizing controllers, showing that it can indeed be statistically hard for underactuated systems.  In particular, we show that the sample complexity of stabilizing an unknown underactuated linear system can scale exponentially with the state dimension $n$. To the best of our knowledge this is the first paper to address this issue and consider lower bounds in this setting. 

\textbf{Exp($n$) LQR Regret Lower Bounds.}
We show that the regret of online LQR can scale exponentially with the dimension as $\exp(n)\sqrt{T}$. In fact, even common integrator-like systems can exhibit this behavior.  
To prove our result, we leverage recent regret lower bounds~\citep{ziemann2022regret}, which provide a refined analysis linking regret to system theoretic parameters. \cite{chen2021black}~first showed that the start-up cost of the regret (terms of low order) can scale exponentially with $n$. Here, we show that this exponential dependence can also affect multiplicatively the dominant $\sqrt{T}$~term.

\textbf{Exponential Upper Bounds.} Under some additional structural assumptions (bounding systems away from uncontrollability), we provide matching global upper bounds. We show that the sample complexity of stabilization and the regret of online LQR can be at most exponential with the dimension $n$. In fact, we prove a stronger result, that they can be at most exponential with the \emph{controllability index} of the system, which captures the structural difficulty of control -- see Section~\ref{CTRL_sec:controllability}. This implies that if the controllability index is small with respect to the dimension $n$, then learning is guaranteed to be easy.

\subsection{Related Work}\label{CTLR_sec:related_work}
 \textbf{System Identification.}
A related problem is that of system identification, where the learning objective is to recover the model parameters $A,B,H$ from data~\citep{matni2019tutorial}. The sample complexity of system identification was studied extensively in the setting of fully observed linear systems~\citep{dean2017sample,simchowitz2018learning,faradonbeh2018finite,sarkar2018fast,fattahi2019learning,jedra2019sample,wagenmaker2020active,efroni2021sparsity} as well as partially-observed systems~\citep{oymak2018non,sarkar2019finite,simchowitz2019semi,tsiamis2019finite,lee2019non,zheng2020non,lee2020improved,lale2020logarithmic}. Recently, it was shown that the sample complexity of system identification can grow exponentially with the dimension $n$~\citep{tsiamis2021linear}.

\noindent\textbf{Learning Feedback Laws.}
The problem of learning stabilizing feedback laws from data was studied before in the case of stochastic~\citep{dean2017sample,tu2017non,faradonbeh2018stabilization, mania2019certainty} as well as adversarial~\citep{chen2021black} disturbances. The standard paradigm has been to perform system identification, followed by a robust control or certainty equivalent gain design. Prior work is limited to sample complexity upper bounds. To the best of our knowledge, there have been no sample complexity lower bounds.

\noindent\textbf{Online LQR.}
While adaptive control in the LQR framework has a rich history \citep{matni2019self}, the recent line of work on regret minimization in online LQR begins with \cite{abbasi2011regret}. They provide a computationally intractable algorithm based on optimism attaining $O(\sqrt{T})$ regret. Algorithms based on optimism have since been improved and made more tractable  \citep{ouyang2017control, abeille2018improved, abbasi2019model, cohen2019learning, abeille2020efficient}. In a closely related line of work, \cite{dean2018regret} provide an $O(T^{2/3})$ regret bound for robust adaptive LQR control, drawing inspiration from classical methods in system identification and robust adaptive control. It has since been shown that certainty equivalent control, without robustness, can attain the (locally) minimax optimal $O(\sqrt{T})$ regret \citep{mania2019certainty, faradonbeh2020adaptive,lale2020explore, jedra2021minimal}.  In particular, by providing nearly matching upper and lower bounds, \cite{simchowitz2020naive} refine this analysis and establish that the optimal rate, without taking system theoretic quantities into account, is $R_T = \Theta(\sqrt{p^2 n T})$. In this work, we rely on the lower bounds by \cite{ziemann2022regret}, which provide a refined instance specific analysis and also lower bounds for the partially observed setting. Here, we further refine their lower bounds to reveal a sharper dependence of the regret on control theoretic parameters. Hence, we how that certain non-local minimax complexities can be far worse than $R_T = \Omega( \sqrt{p^2 n T})$  and scale exponentially in the problem dimension. Indeed, an exponential start-up cost has already been observed by \cite{chen2021black}, in the case of adversarial disturbances. Here we show that this exponential dependency can persist multiplicatively even for large $T$, in the case of stochastic disturbances. Thus, our results complement the results of~\cite{chen2021black}.

\subsection{Notation}
The transpose of $X$ is denoted by $X'$. For vectors $v\in\R^d$, $\snorm{v}_2$ denotes the $\ell_2$-norm. For matrices $X\in\R^{d_1\times d_2}$, the spectral norm is denoted  by $\snorm{X}_2$. For comparison with respect to the positive semi-definite cone we will use $\succeq$ or $\succ$ for strict inequality. By $\P$ we will denote probability measures and by $\E$ expectation. By $\poly(\cdot)$ we denote a polynomial function of its arguments. By $\exp(\cdot)$ we denote a exponential function of its arguments. 
\section{Problem Statement}\label{CTRL_sec:formulation}
System~\eqref{CTRL_eq:system} is characterized by the matrices $A\in\R^{n\times n},\,B\in\R^{n\times p},\,H\in\R^{n\times r}$. We assume that $w_k\sim\mathcal{N}(0,I_r)$ is i.i.d. Gaussian with unit covariance. Without loss of generality the initial state is assumed to be zero $x_0=0$. In a departure from prior work, we do not necessarily assume that the noise is isotropic. Instead, we consider a more general model, where the noise $Hw_k$ is allowed to be degenerate--see also Remark~\ref{CTRL_rem:singular_noise}.

\begin{assumption}\label{CTRL_ass:general_setting}
Matrices $A,B,H$ and the noise dimension $r\le n$ are all unknown. The unknown matrices are bounded, i.e. $\snorm{A}_2,\snorm{B}_2,\snorm{H}_2\le M$, for some positive constant $M\ge 1$. Matrices $B,H$ have full column rank $\rank(B)=p\le n$, $\rank(H)=r\le n$. We also assume that the system is non-explosive $\rho(A)\le 1$.
\end{assumption}
The boundedness assumption on the state parameters allows us to argue about global sample complexity upper bounds. 
To simplify the presentation, we make the assumption that the system is non-explosive $\rho(A)\le 1$.
This setting includes marginally stable systems and is rich enough to provide insights about the difficulty of learning more general systems. 

A policy is a sequence of functions $\pi=\set{\pi_t}_{t=0}^{N-1}$. Every function $\pi_t$ maps previous state-input values $x_0,\dots,x_t,u_0,\dots,u_{t-1}$ and potentially an auxiliary randomization  signal $\AUX$ to the new input $u_t$. Hence all inputs $u_t$ are $\F_t$-measurable, where $\F_t\triangleq \sigma(x_0,\dots,x_t,u_0,\dots,u_{t-1},\AUX)$.
For brevity we will use the symbol $S$ to denote a system $S=(A,B,H)$. Let $\P_{S,\pi}$ ($\E_{S,\pi}(\cdot)$) denote the probability distribution (expectation) of the input-state data when the true system is equal to $S$ and we apply a policy $\pi$.

\subsection{Difficulty of Stabilization}
In the stabilization problem, the goal is to find a state-feedback control law $u=Kx$, where $K$ 
renders the closed-loop system $A+BK$ stable with spectral radius less than one, i.e., $\rho(A+BK)<1$. 
We assume that we collect data $x_0,\dots,x_N,u_0,\dots,u_{N}$, which are generated by system~\eqref{CTRL_eq:system} using any exploration policy $\pi$, e.g. white-noise excitation, active learning etc. Since we care only about sample complexity, the policy is allowed to be maximally exploratory. To make the problem meaningful, we restrict the average control energy.
\begin{assumption}\label{CTRL_ass:input_budget}
The control energy is bounded $\E_{S,\pi} \snorm{u_t}^2_2\le \sigma^2_u$, for some $\sigma_u>0$.
\end{assumption}

Next, we define a notion of learning difficulty for classes of linear systems.  By $\CC_n$ we will denote a class of systems with dimension $n$. We will define as easy, classes of linear system that exhibit $\poly(n)$ sample complexity.

\begin{definition}[Poly$(n)$-stabilizable classes]
	Let $\CC_n$ be a class of systems. Let $\hat{K}_N$ be a function that maps input-state data $(u_{0},x_1),\dots$,$(u_{N-1},x_{N})$ to a control gain. 
	We call the class $\CC_n$ $\poly(n)-$stabilizable if there exists an algorithm $\hat{K}_N$ and an exploration policy $\pi$ satisfying Assumption~\ref{CTRL_ass:input_budget}, such that for any confidence $0\le \delta<1$:
	\begin{align}
		&\sup_{S\in\CC_n}\P_{S,\pi}\paren{\rho(A+B\hat{K}_N)\ge 1}\le \delta\label{CTRL_eq:STAB_objective},\quad
		\text{if}\quad N\sigma^2_u\ge \mathrm{poly}(n,\log 1/\delta, M).
	\end{align}
\end{definition}
Our definition requires both the number of samples and the input energy to be polynomial with the arguments. 
The above class-specific definition can be turned into a local, instance-specific, definition of sample complexity by considering a neighborhood around an unknown system. 
The question then arises whether linear systems are generally poly$(n)$-stabilizable. 
\begin{problem}\label{CTRL_problem:STAB}
Are there linear system classes which are not $\poly(n)$-stabilizable? When can we guarantee $\poly(n)$-stabilizability?
\end{problem}
\subsection{Difficulty of Online LQR}
Consider the LQR objective~\eqref{CTRL_eq:LQR_objective}. Let
the state penalty matrix
$Q\in\R^{n\times n}\succ 0$ be positive definite, with the input penalty matrix $R\in\R^{p\times p}$ also positive definite. When the model is known, the optimal policy is a linear feedback law $\pi_{\star}=\set{K_{\star}x_{k}}^{T-1}_{k=0}$, where $K_{\star}$ is given by
\begin{equation}\label{CTRL_eq:LQR_gain}
K_{\star}=-(B'PB+R)^{-1}B'PA,
\end{equation}
and $P$ is the unique positive definite solution to the Algebraic Riccati Equation (ARE)
\begin{equation}\label{CTRL_eq:LQR_DARE}
P=A'PA+Q-A'PB(B'PB+R)^{-1}B'PA.
\end{equation}
Throughout the paper, we will assume that $Q_T=P$.
If the model of~\eqref{CTRL_eq:system} is unknown,
the goal of the learner is to find an online learning policy $\pi$ that leads to minimum regret $R_T(S)$.
In the setting of online LQR, the data are revealed sequentially, i.e. $x_{t+1}$ is revealed after we select $u_t$. Contrary to the stabilization problem, here we study regret, i.e. there is a tradoff between exploration and exploitation. 
We will define a class-specific notion of learning difficulty based on the ratio between the regret and $\sqrt{T}$.

\begin{definition}[Poly$(n)$-Regret]
Let $\CC_n$ be a class of systems of dimension $n$. We say that the class $\CC_n$ exhibits poly($n$) minimax expected regret if
\begin{equation}
\begin{aligned}
		&\min_{\pi}\sup_{S\in\CC_n}\E_{S,\pi} R_T(S)\le \poly(n,M,\log T)\sqrt{T}+\tilde{O}(1)\label{CTRL_eq:REG_objective_EXPE},
	\end{aligned}
	\end{equation}
	where $\tilde{O}(1)$ hides $\poly\log T$ terms.
\end{definition}
Our definition here is based on expected regret, but we could have a similar definition based on high probability regret guarantees -- see~\cite{dann2017unifying} for distinctions between the two definitions.
Similar to the stabilization problem, we pose the following questions.
\begin{problem}\label{CTRL_problem:REG}
Are there classes of systems for which poly$(n)$-regret is impossible? When is poly$(n)$-regret guaranteed?
\end{problem}

\section{Classes with Rich Controllability Structure}\label{CTRL_sec:controllability}
Before we present our learning guarantees, we need to find classes of systems, where learning is meaningful. To make sure that the stabilization and the LQR problems are well-defined, we assume that system~\eqref{CTRL_eq:system} is controllable\footnote{We can slightly relax the condition to $(A,B)$  stabilizable~\citep{lale2020explore,simchowitz2020naive,efroni2021sparsity}. To avoid technicalities we leave that for future work.}. 
\begin{assumption}\label{CTRL_ass:controllability}
System~\eqref{CTRL_eq:system} is $(A,B)$ \emph{controllable}, i.e. matrix
\begin{align}\label{CTRL_eq:controllability_matrix}
	\C_k(A,B)\triangleq \matr{{cccc}B&AB&\cdots&A^{k-1}B}
\end{align}
has full column rank $\rank(\C_k(A,B))=n$, for some $k\le n$.
\end{assumption}
Unsurprisingly, the class of all controllable systems
does not exhibit finite sample complexity/regret, let alone polynomial sample complexity/regret. The main issue is that there exist systems which satisfy the rank condition but are arbitrarily close to uncontrollability.
For example, consider the following controllable system, which we want to stabilize
\begin{equation*}
    x_{k+1}=\matr{{cc}1&\alpha\\0&0}x_k+\matr{{c}0\\1}u_k+w_k.
\end{equation*}
The only way to stabilize the system is indirectly by using the second state $x_{k,2}$, via the coupling coefficient $\alpha$. However, we need to know the sign of $\alpha$. If  $\alpha$ is allowed to be arbitrarily small, i.e. the system is arbitrarily close to uncontrollability, then an arbitrarily large number of samples is required to learn the sign of $\alpha$, leading to infinite complexity. 
To obtain classes with finite sample complexity/regret we need to bound the system instances away from uncontrollability. One way is to consider the least singular value of the controllability Gramian $\Gamma_k(A,B)$ at time $k$:
\begin{equation}\label{CTRL_eq:gramian}
    \Gamma_{k}(A,B)\triangleq \sum_{t=0}^{k-1}A^tBB'(A')^{t}.
\end{equation}
An implicit assumption in prior literature is that $\sigma^{-1}_{\min}(\Gamma_{k}(A,B))\le \poly(n)$. We will not assume this here, since it might exclude many systems of interest, such as integrator-like systems, also known as underactuated systems, or networks~\citep{pasqualetti2014controllability}. Instead, we will relax this requirement to allow richer system structures.

To avoid pathologies, we will lower bound the coupling between states in the case of indirectly controlled systems. To formalize this idea, let us review some notions from system theory. The \emph{controllability index} is defined as follows
\begin{align}\label{eq:controllability_idex}
	\kappa(A,B)\triangleq \min\set{ k\ge 1: \rank(\C_k(A,B))=n },
\end{align}
i.e., it is the minimum time such that the controllability rank condition is satisfied.
It captures the degree of underactuation and reflects the structural difficulty of control. 

Based on the fact that the rank of the controllability matrix at time $\kappa$ is $n$, we can show that the pair $(A,B)$ admits the following canonical representation, under a unitary similarity transformation~\citep{Dooren03}. It is called the Staircase or Hessenberg form of system~\eqref{CTRL_eq:system}.
\begin{proposition}[Staircase form]\label{CTRL_prop:Hessenberg}
	Consider a controllable pair $(A,B)$ with controllability index $\kappa$ and controllability matrix $\C_k$, $k\ge 0$. There exists a unitary similarity transformation $U\in\R^{n\times n}$ such that $U'U=UU'=I$ and:
	\begin{equation}
	\label{CTRL_eq:Hessenberg_form}
	U'B=\matr{{c}B_1\\0\\0\\0\\\vdots\\0},\qquad
U'AU=\matr{{ccccc}A_{1,1}&A_{1,2}&\cdots&A_{1,\kappa-1}&A_{1,\kappa}\\A_{2,1}&A_{2,2}&\cdots&A_{3,\kappa-1}&A_{2,\kappa}\\0&A_{3,2}&\cdots&A_{3,\kappa-1}&A_{3,\kappa}\\0&0&\cdots&A_{4,\kappa-1}&A_{4,\kappa}\\\vdots& & &\vdots&\\0&0&\cdots&A_{\kappa,\kappa-1}&A_{\kappa,\kappa}},
	\end{equation}
	where $A_{i,j}\in \R^{p_i\times p_j}$ are block matrices, with $p_i=\rank(\C_{i})-\rank(\C_{i-1})$, $p_1=p$, $B_1\in\R^{p\times p}$. Matrices $A_{i+1,i}$ have full row rank $\rank(A_{i+1,i})=p_{i+1}$ and the sequence $p_i$ is decreasing.
\end{proposition}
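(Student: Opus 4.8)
The plan is to construct the unitary $U$ by building a graded orthonormal basis adapted to the nested controllability subspaces, and then to read off the two displayed block patterns directly. First I would set $R_i\triangleq\mathrm{range}(\C_i(A,B))$, so that $\{0\}=R_0\subsetneq R_1\subsetneq\cdots\subsetneq R_\kappa=\R^n$ by the definition of $\kappa$ and Assumption~\ref{CTRL_ass:controllability}. The structural identity underlying everything is $R_{i+1}=\mathrm{range}(B)+A R_i$, obtained by factoring $A$ out of the trailing block columns of $\C_{i+1}$; in particular $A R_i\subseteq R_{i+1}$, and also $A R_{i-1}\subseteq R_i$. I would then choose, for each $i$, an orthonormal basis $W_i$ of the orthogonal complement of $R_{i-1}$ inside $R_i$, so this block has width $p_i=\rank(\C_i)-\rank(\C_{i-1})$ with $p_1=\rank(B)=p$. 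Stacking $U=[\,W_1\mid W_2\mid\cdots\mid W_\kappa\,]$ yields the desired unitary.

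Next I would verify the two forms. For $U'B$: since $\mathrm{range}(B)=R_1=\mathrm{span}(W_1)$ and every later block is orthogonal to $R_1$, the product $U'B$ vanishes outside the top $p\times p$ block $B_1=W_1'B$; because $W_1$ is an orthonormal basis of $\mathrm{range}(B)$ and $B$ has full column rank, $B_1$ is invertible. For $U'AU$: for any $v\in\mathrm{span}(W_j)\subseteq R_j$ we have $Av\in A R_j\subseteq R_{j+1}=\mathrm{span}(W_1,\ldots,W_{j+1})$, so the components of $Av$ along $W_i$ vanish whenever $i>j+1$. This is precisely the statement that the block $A_{i,j}=W_i'AW_j$ is zero for $i>j+1$, i.e.\ the block upper-Hessenberg (staircase) pattern.

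The main work, and the step I expect to be the crux, is showing that each subdiagonal block $A_{i+1,i}$ has full row rank $p_{i+1}$ and that the sequence $p_i$ is non-increasing. I would argue this through the quotient map induced by $A$. Decomposing $A R_i=A R_{i-1}+A\,\mathrm{span}(W_i)$ and reducing modulo $R_i$, the terms $\mathrm{range}(B)$ and $A R_{i-1}$ both lie in $R_i$, so the identity $R_{i+1}=\mathrm{range}(B)+A R_i$ shows that the quotient $R_{i+1}/R_i$ is generated by the image of $\mathrm{span}(W_i)$ under $A$. Hence the induced linear map $\mathrm{span}(W_i)\to R_{i+1}/R_i$, $w\mapsto Aw+R_i$, is surjective. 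Identifying $R_{i+1}/R_i$ with $\mathrm{span}(W_{i+1})$ via orthogonal projection onto the complement of $R_i$, this map is represented exactly by $A_{i+1,i}$. Surjectivity of a map from a $p_i$-dimensional space onto a $p_{i+1}$-dimensional space then yields both $\rank(A_{i+1,i})=p_{i+1}$ (full row rank) and $p_{i+1}\le p_i$ (monotonicity), which closes the argument.

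The remaining routine checks are bookkeeping: confirming that the chosen $W_i$ indeed have the claimed dimensions $p_i$ from the rank increments, and that the orthonormality of the blocks makes $U$ genuinely unitary. The conceptual content is entirely in the graded identity $R_{i+1}=\mathrm{range}(B)+AR_i$, which simultaneously forces the Hessenberg zero pattern above and drives the surjectivity argument for the subdiagonal blocks.
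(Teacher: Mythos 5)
Your proof is correct and follows essentially the same route as the paper: your stacked basis $U=[\,W_1\mid\cdots\mid W_\kappa\,]$, built from orthonormal bases of the successive complements $R_i\ominus R_{i-1}$ of the nested ranges of $\C_i(A,B)$, is exactly the orthonormal QR factor of the first $n$ independent columns of $\C_\kappa(A,B)$ that the paper identifies (citing Van Dooren), and the graded identity $R_{i+1}=\mathrm{range}(B)+AR_i$ is the standard engine behind the staircase pattern. Your quotient-map argument for $\rank(A_{i+1,i})=p_{i+1}$ and the monotonicity of $p_i$ correctly fills in the details the paper delegates to the citation.
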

Matrix $U$ is the orthonormal matrix of the QR decomposition of the first $n$ independent columns of $\C_{\kappa}(A,B)$. It is unique up to sign flips of its columns. The above representation captures the coupling between the several sub-states via the matrices $A_{i+1,i}$. It has been used before as a test of controllability~\cite{Dooren03}. 
This motivates the following definition, wherein we bound the coupling matrices $A_{i+1,i}$ away from zero.
\begin{definition}[Robustly coupled systems]
Consider a controllable system $(A,B)$ with controllability index $\kappa$. It is called $\mu-$robustly coupled if and only if for some positive $\mu>0$:
\begin{equation}
  \sigma_{p}(B_{1})\ge\mu,\quad \sigma_{p_{i+1}}(A_{i+1,i})\ge \mu,\,\text{ for all }1\le i\le \kappa-1,
\end{equation}
where $B_1$, $A_{i+1,i}$ are defined as in the Staircase form~\eqref{CTRL_eq:Hessenberg_form}.
\end{definition}
In the previous example, by introducing the $\mu-$robust coupling requirement, we enforce a lower bound on the coupling coefficient $\alpha\ge \mu$, thus, avoiding pathological systems.  

In the following sections, we connect the controllability index to the hardness/ease of control. We prove rigorously why performance might degrade as the index becomes $\kappa=O(n)$, as, e.g., in the case of integrator-like systems or networks.  This cannot be explained based on prior work or based on global lower-bounds on the least singular value of the controllability Gramian. The controllability index and the controllability Gramian are two  different measures that are suitable for different types of guarantees.
The controllability index captures the structural difficulty of control, so it might be more suitable for class-specific guarantees versus instance-specific local guarantees.

\section{Difficulty of Stabilization}\label{sec:stabilization}
In this section, we show that there exist non-trivial classes of linear systems for which the problem of stabilization from data is hard. 
In fact, the class of robustly coupled systems requires at least an exponential, in the state dimension $n$, number of samples.

 \begin{theorem}[Stabilization can be Hard]\label{CTRL_thm:STAB_lower_exponential}
Consider the class $\CC^{\mu}_{n,\kappa}$ of all $\mu$-robustly coupled systems $S=(A,B,H)$ of dimension $n$ and controllability index $\kappa$. Let Assumption~\ref{CTRL_ass:input_budget} hold and let $\mu<1$. Then, for any stabilization algorithm, the sample complexity is exponential in the index $\kappa$. For any confidence $0\le \delta<1/2$ the requirement
\begin{align}
		&\sup_{S\in\CC^{\mu}_{n,\kappa}}\P_{S,\pi}\paren{\rho(A+B\hat{K}_N)\ge 1}\le \delta \nonumber
	\end{align}
is satisfied only if
\[
N\sigma^2_u  \ge \frac{1}{2}\paren{\frac{1}{\mu}}^{2\kappa-2}\paren{\frac{1-\mu}{\mu}}^{2}\log\frac{1}{3 \delta}.
\]
\end{theorem}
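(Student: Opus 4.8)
The plan is to use Le~Cam's two-point method, reducing stabilization to a binary hypothesis test between two systems lying in $\CC^{\mu}_{n,\kappa}$. Concretely, I would exhibit $S_0=(A_0,B,H)$ and $S_1=(A_1,B,H)$, both $\mu$-robustly coupled with controllability index exactly $\kappa$, sharing the same $B$ and $H$ but differing in a single entry of $A$, and arranged so that no gain stabilizes both, i.e. the sets $\set{K:\rho(A_0+BK)<1}$ and $\set{K:\rho(A_1+BK)<1}$ are disjoint. Then any algorithm returning $\hat K_N$ with $\rho(A_i+B\hat K_N)<1$ simultaneously serves as a test identifying the true system, so a uniform error $\le\delta$ over $\CC^{\mu}_{n,\kappa}$ forces a test whose type-I and type-II errors are both at most $\delta$.

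\textbf{The hard instances.} In Staircase form the input drives the first block at full strength, while the sub-diagonal couplings $A_{i+1,i}$, taken of size exactly $\mu$, propagate influence down the chain, so the last block is reachable only through the product of $\kappa-1$ weak couplings and is excited at order $\mu^{\kappa-1}$. I would place a sign-ambiguous feedback coupling of magnitude on the order of $\mu/(1-\mu)$ at the $(1,\kappa)$ position, closing a loop from the weakly reachable last block back to the directly actuated one, and take the two sign assignments $+\theta$ and $-\theta$. As in the scalar example of Section~\ref{CTRL_sec:controllability}, the sign of this coupling dictates the direction in which the input must push to stabilize, rendering the stabilizing-gain sets disjoint. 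Crucially, I would exploit the degenerate-noise model (Remark~\ref{CTRL_rem:singular_noise}) and choose $H$ supported on the actuated block, so that the last state receives information only through the exponentially attenuating chain rather than directly from the noise; one checks that both instances satisfy $\rho(A_i)\le1$, the norm bound $M$, the $\mu$-robust coupling, and index exactly $\kappa$, hence $S_0,S_1\in\CC^{\mu}_{n,\kappa}$.

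\textbf{Information-theoretic bound.} Since $S_0,S_1$ share $B,H$, the chain rule for the trajectory laws gives
\begin{equation*}
\KL\paren{\P_{S_0,\pi}\,\|\,\P_{S_1,\pi}}=\tfrac12\sum_{t=0}^{N-1}\E_{S_0,\pi}\snorm{(HH')^{-1/2}(A_0-A_1)x_t}_2^2,
\end{equation*}
because conditioned on $\F_t$ the one-step predictive laws are Gaussians with common covariance $HH'$ and means differing by $(A_0-A_1)x_t$. As $A_0-A_1$ is supported on the $(1,\kappa)$ entry with magnitude $2\theta$, and the output direction $e_1$ lies in the range of $H$ by construction, each term reduces to a multiple of $\theta^2\,\E_{S_0,\pi}[x_{t,\kappa}^2]$. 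The heart of the argument is to bound this last quantity uniformly over all adaptive exploration policies obeying Assumption~\ref{CTRL_ass:input_budget}: propagating input and the confined noise through the staircase and using $\E_{S_0,\pi}\snorm{u_t}_2^2\le\sigma_u^2$ yields $\E_{S_0,\pi}[x_{t,\kappa}^2]\lesssim\mu^{2(\kappa-1)}\sigma_u^2$, which is exactly where the exponential attenuation enters. Summing over $t$ and substituting $\theta\asymp\mu/(1-\mu)$ gives $\KL\lesssim N\sigma_u^2\,\mu^{2\kappa}/(1-\mu)^2$.

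\textbf{Conclusion and main obstacle.} Combining the reduction with a standard testing inequality (Bretagnolle--Huber / Le~Cam), an error $\le\delta<1/2$ on the two-point problem forces $\KL\paren{\P_{S_0,\pi}\|\P_{S_1,\pi}}\ge\log\tfrac{1}{3\delta}$; chaining this against the upper bound and rearranging yields $N\sigma^2_u\ge\tfrac12(1/\mu)^{2\kappa-2}((1-\mu)/\mu)^2\log\tfrac{1}{3\delta}$, as claimed. I expect the main obstacle to be the energy-propagation estimate: showing that no adaptive, maximally exploratory policy can excite the weakly controllable last state beyond the $\mu^{2(\kappa-1)}$ barrier, while simultaneously keeping $\E_{S_0,\pi}[x_{t,\kappa}^2]$ bounded uniformly in $t$ despite the marginal stability imposed by $\rho(A)\le1$. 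A secondary delicate point is verifying that the two sign choices genuinely produce disjoint stabilizing-gain sets while the instances remain KL-close, i.e. that the structural hardness and the statistical hardness coincide at the same construction.
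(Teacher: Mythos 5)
Your high-level skeleton—two-point reduction to hypothesis testing, disjoint stabilizing-gain sets, the KL chain rule for adaptive policies, and a Birg\'e/Bretagnolle--Huber testing inequality—is exactly the paper's. But your concrete construction is oriented the wrong way round, and this breaks the reduction itself. You place the sign-ambiguous entry at position $(1,\kappa)$, a feedback loop from the weakly reachable last block back into the \emph{directly actuated} first block. That coupling is not on the control path to any unstable mode, so nothing forces the two stabilizing-gain sets to be disjoint. Concretely, take $n=\kappa=2$, $A_{\pm}=\matr{{cc}1&\pm\theta\\\mu&0}$, $B=\matr{{c}b\\0}$: the single gain $K=(-1/b,\;0)$ gives closed loop $\matr{{cc}0&\pm\theta\\\mu&0}$ with spectral radius $\sqrt{\theta\mu}<1$ for both signs, so one gain stabilizes both instances and the test never materializes. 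The point you deferred as ``secondary'' is precisely where your construction fails. The paper's construction reverses the chain: the input enters only at the \emph{bottom} state ($B=(0,\dots,0,\mu)'$), the marginally stable mode $A_{11}=1$ sits \emph{unactuated} at the top, and the ambiguous coupling $A_{12}=\pm\mu$ is the unique final link through which control can reach it. Jury's criterion then yields $\phi_i(1)=\mp\hat K_{N,1}\mu^{n-1}$, so stability forces $\hat K_{N,1}<0$ for one instance and $\hat K_{N,1}>0$ for the other—genuinely disjoint events.

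Your excitation estimate also fails in your orientation, on two counts. First, you place the noise on the actuated block, \emph{upstream} of $x_{t,\kappa}$, so $\E_{S_0,\pi}[x_{t,\kappa}^2]$ acquires an input-independent noise contribution of order $\mu^{2(\kappa-1)}$; the claimed bound $\lesssim\mu^{2(\kappa-1)}\sigma_u^2$ is false unless $\sigma_u^2\gtrsim 1$, and the conclusion degrades to a lower bound on $N(\sigma_u^2+O(1))$ rather than $N\sigma_u^2$. Second and worse, the marginally stable mode must live somewhere: if it lies in or upstream of the chain feeding $x_{t,\kappa}$, an adaptive policy lets energy accumulate in it and leak down, and if it is $x_\kappa$ itself, the KL-relevant state integrates noise directly—either way $\E[x_{t,\kappa}^2]$ can grow with $t$, so the uniform-in-$t$ bound you correctly identified as the main obstacle is not attainable in your layout. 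The paper's layout dissolves the obstacle rather than overcoming it: the KL-relevant state $x_{k,2}$ is fed \emph{only} by the input through a noise-free nilpotent delay line (take $\alpha=0$), so $x_{k,2}=\mu^{n-1}u_{k-n+1}$, and Lemma~\ref{CTRL_lem:S1_Gramian_like_bound_STAB} gives $\E_{\P_1}x_{k,2}^2\le\sigma_u^2\,\mu^{2n-2}(1-\mu)^{-2}$ by Cauchy--Schwarz (the factor $(1-\mu)^{-1}$ arising from the geometric sum, not from inflating the coupling to $\mu/(1-\mu)$ as you propose), while the marginal, noise-carrying mode sits strictly downstream and never enters the divergence. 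Repairing your proposal amounts to adopting the paper's reversed orientation wholesale.
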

Theorem~\ref{CTRL_thm:STAB_lower_exponential} implies that system classes with large controllability index, e.g. $\kappa=n$, suffer in general from sample complexity which is exponential with the dimension $n$.
In other words, learning difficulty arises in the case of under-actuated systems. 
Only a limited number of system states are directly driven by inputs and the remaining states are only indirectly excited, 
leading to a hard learning and stabilization problem. 
Consider now systems
\begin{equation}\label{CTRL_eq:difficult_example_STAB}
\begin{aligned}
  S_i:\qquad  	x_{k+1}=\matr{{ccccc}1 &\alpha_i\mu&0&\cdots&0\\0& 0&\mu&\cdots&0\\& &\ddots &\ddots&\\0&0&0&\cdots&\mu\\0&0&0&\cdots&0}x_k+\matr{{c}0\\0\\\vdots\\0\\\mu}u_k+ \matr{{c}1\\0\\\vdots\\0\\0}w_k,\,i\in\set{1,2},
\end{aligned}
\end{equation}
where $0<\mu<1$, $\alpha_1=1$, $\alpha_2=-1$.
Systems $S_1$, $S_2$ are almost identical with the exception of element $A_{12}$ where they have different signs. Both systems have one marginally stable mode corresponding to state $x_{k,1}$. The only way to stabilize $x_{k,1}$ with state feedback is indirectly, via $x_{k,2}$. Given system $S_1$, since $\alpha_1\mu>0$, it is necessary that the first component of the gain is negative $\hat{K}_{N,1}<0$. This follows from the Jury stability criterion, a standard stability test in control theory~\citep[Ch. 4.5]{fadali2013digital}. 
Let $\phi_1(z)=\det(zI-A_1-B\hat{K}_N)$ be the characteristic polynomial of system $S_1$. Then one of the necessary conditions in Jury's criterion requires:
\[
\phi_1(1)>0,
\]
which can only be satisfied if $\hat{K}_{N,1}<0$  (see Appendix~\ref{CTRL_app_sec:STAB_lower_bounds} for details).
On the other hand, we can only stabilize $S_2$
if $\hat{K}_{N,1}>0$. Hence, the only way to stabilize the system is to identify the sign of $\alpha_i$. In other words, we transform the stabilization problem into a system identification problem. However, identification of the correct sign is very hard since the excitation of $x_{k,2}=\mu^{n-1}u_{k-n+1}$ scales with $\mu^{n-1}$. 
The proof relies on Birgé's inequality~\citep{boucheron2013concentration}.
In Section~\ref{CTRL_app_sec:STAB_lower_bounds} we construct a slightly more general example with non-zero diagonal elements.
Our construction relies on the fact that $\mu<1$. It is an open question whether we can construct hard learning instances for $\mu\ge 1$.

One insight that we obtain from the above example is that lack of excitation might lead to large sample complexity of stabilization. In particular, this can happen when we have an unstable/marginally stable mode, which can only be controlled via the system identification bottleneck, like $A_{1,2}$ in the above example. 

\begin{remark}[Singular noise]\label{CTRL_rem:singular_noise}
Our stabilization lower bound exploits the fact that the constructed system~\eqref{CTRL_eq:difficult_example_STAB} has low-rank noise, such that system identification is hard. It is an open problem whether we can construct examples of systems that are not $\poly(n)-$stabilizable even though they are excited by full-rank noise. Nonetheless, in our regret lower bounds, we allow the noise to be full-rank.
\end{remark}
\subsection{Sample complexity upper bounds}\label{CTRL_sec:STAB_upper_bounds}
 As we show below, sample complexity cannot be worse than exponential under the assumption of robust coupling. If the exploration policy is a white noise input sequence, then using a least squares identification algorithm~\citep{simchowitz2018learning}, and a robust control design scheme~\citep{dean2017sample}, the sample complexity can be upper bounded by a function which is at most exponential with the dimension $n$. In fact, we provide a more refined result, directly linking sample complexity to the controllability index $\kappa$. Our proof relies on bounding control theoretic quantities like the least singular value of the controllablility Gramian.
The details of the proof and the algorithm can be found in Section~\ref{CTRL_app_sec:STAB_upper_bounds}. 

\begin{theorem}[Exponential Upper Bounds]\label{CTRL_thm:upper_bounds_STAB}
Consider the class $\CC^{\mu}_{n,\kappa}$ of all $\mu$-robustly coupled systems $S=(A,B,H)$ of dimension $n$ and controllability index $\kappa$. Let Assumption~\ref{CTRL_ass:input_budget} hold. Then, the sample complexity is at most exponential with $\kappa$. There exists an exploration policy $\pi$ and algorithm $\hat{K}_N$ such that for any $\delta<1$:
\begin{align*}
		&\sup_{S\in\CC^{\mu}_{n,\kappa}}\P_{S,\pi}\paren{\rho(A+B\hat{K}_N)\ge 1}\le \delta,\quad \text{if} \quad
N\sigma^2_u \ge   \poly\paren{\Big(\frac{M}{\mu}\Big)^\kappa,M^{\kappa},n,\log 1/\delta}.
\end{align*}
\end{theorem}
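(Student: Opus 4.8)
The plan is to follow the classical three-stage explore--identify--design pipeline and to show that, under robust coupling, every control-theoretic quantity entering the sample complexity degrades \emph{at most} exponentially in $\kappa$. Concretely, I would (i)~excite the system with an i.i.d.\ Gaussian input respecting the energy budget, (ii)~run ordinary least squares to estimate $[A\ B]$, and (iii)~feed the estimate into a robust synthesis procedure that returns a stabilizing gain whenever the estimation error falls below a computable robustness margin. The sample complexity is then governed by two quantities: the least singular value of the controllability Gramian $\Gamma_\kappa(A,B)$, which controls how fast the least squares error shrinks per sample, and the stabilization margin, which sets the target accuracy. Both can be made small by underactuation, and the crux of the proof is to lower bound them in terms of $\mu$, $M$ and $\kappa$.

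First I would set $u_t\sim\N(0,(\sigma^2_u/p)I_p)$, so that $\E\snorm{u_t}_2^2=\sigma^2_u$ and Assumption~\ref{CTRL_ass:input_budget} holds with equality. Invoking finite-sample least squares guarantees for linear systems \citep{simchowitz2018learning}, the estimation error obeys, with probability at least $1-\delta$,
\[
\snorm{[\hat A\ \hat B]-[A\ B]}_2 \lesssim \sqrt{\frac{\poly(n)\log(1/\delta)}{\lambda_{\min}(V_N)}},\qquad V_N=\sum_{t=0}^{N-1} z_t z_t',\ z_t=\begin{bmatrix}x_t\\u_t\end{bmatrix}.
\]
A block-martingale small-ball argument then lower bounds $\lambda_{\min}(V_N)$ by $N$ times the least eigenvalue of the per-step data covariance, which once $N\gtrsim\kappa$ is itself at least $\sigma^2_u\,\sigma_{\min}(\Gamma_\kappa(A,B))/\poly(n)$.

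The hard part will be the two control-theoretic estimates. For the Gramian, I would pass to the Staircase form of Proposition~\ref{CTRL_prop:Hessenberg} and track how input energy propagates down the staircase: excitation reaches the $i$-th block only after $i$ compositions of the coupling maps $A_{i+1,i}$, each contributing a factor at least $\mu$ in the worst direction, while the upper-triangular blocks are bounded by $M$ and transient growth $\snorm{A^t}_2\le M^t$ enters the upper estimates. Carefully bounding the least singular value of $\C_\kappa(A,B)$ through this recursion should yield $\sigma_{\min}(\Gamma_\kappa(A,B))\gtrsim (\mu/M)^{2\kappa}/\poly(n)$, the dominant exponential factor. For the design stage I would invoke a robust (system-level or certainty-equivalent) synthesis \citep{dean2017sample}: since $\rho(A)\le 1$ and the norms are bounded, a stabilizing gain of bounded norm exists uniformly over the class, and there is a robustness radius $\epsilon_{\mathrm{stab}}$ such that any $[\hat A\ \hat B]$ within $\epsilon_{\mathrm{stab}}$ of $[A\ B]$ produces a gain with $\rho(\hat A+\hat B\hat K_N)<1$, and hence $\rho(A+B\hat K_N)<1$.

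Finally I would lower bound $\epsilon_{\mathrm{stab}}$ in the same staircase coordinates, where stabilizing a deeply underactuated system requires resolving couplings of size $\sim(\mu/M)^{\kappa}$, giving $\epsilon_{\mathrm{stab}}\gtrsim(\mu/M)^{\Theta(\kappa)}/\poly(n)$. Requiring the identification error to fall below $\epsilon_{\mathrm{stab}}$ and substituting the Gramian bound yields
\[
N\sigma^2_u \gtrsim \frac{\poly(n)\log(1/\delta)}{\sigma_{\min}(\Gamma_\kappa(A,B))\,\epsilon_{\mathrm{stab}}^2}\ \ge\ \poly\paren{(M/\mu)^\kappa, M^\kappa, n, \log(1/\delta)},
\]
matching the claim. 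The two exponential estimates on $\sigma_{\min}(\Gamma_\kappa(A,B))$ and on $\epsilon_{\mathrm{stab}}$ are the main obstacle, as both demand a quantitative propagation of the robust coupling through the staircase structure; the remaining concentration-plus-robust-control assembly is standard.
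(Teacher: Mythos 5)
Your pipeline coincides with the paper's in every structural respect: white-noise excitation with $\bar{\sigma}^2_u=\sigma^2_u/p$, least squares analyzed via the small-ball machinery of Simchowitz et al.\ with $\sigma_{\min}(\Gamma_\kappa(A,B))$ governing excitation (the paper's identification theorem handles the possibly singular noise $H$ by the crude bound $\Gamma_\kappa\big(A,\clint{\bar{\sigma}_u B\;\; H}\big)\succeq \bar{\sigma}^2_u\,\Gamma_\kappa(A,B)$, matching your implicit reduction), robust synthesis in the style of Dean et al., and the Gramian estimate $\sigma^{-1}_{\min}(\Gamma_{\kappa})\le \mu^{-2}(3M/\mu)^{2\kappa}$, which the paper obtains through exactly the staircase-form recursion you describe (imported, with adaptation, from prior work rather than reproved from scratch).

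The genuine gap is your final step, the lower bound on the robustness radius. You assert $\epsilon_{\mathrm{stab}}\gtrsim(\mu/M)^{\Theta(\kappa)}/\poly(n)$ because ``stabilizing a deeply underactuated system requires resolving couplings of size $\sim(\mu/M)^\kappa$,'' but this restates the claim rather than proving it: to certify feasibility of the synthesis problem you must exhibit, for \emph{every} system in the class, a concrete gain $K$ together with quantitative bounds on $\snorm{K}_2$ and $\snorm{(zI-A-BK)^{-1}}_{\mathcal{H}_{\infty}}$, and it is not clear how coupling propagation alone controls the closed-loop transient (for instance, a static deadbeat feedback with bounded transients need not exist, and time-varying deadbeat control does not plug into the $\mathcal{H}_{\infty}$ condition). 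The paper closes this with a Riccati certificate instead of a second staircase argument: feasibility is witnessed by the LQR gain $K_\star$ for $Q=I_n$, $R=I_p$; the Riccati solution is bounded, $\snorm{P}_2\le \poly\big((M/\mu)^{\kappa},M^{\kappa},\kappa\big)$, by evaluating the cost of the minimum-energy input $u_{0:\kappa-1}=-\C^{\dagger}_{\kappa}A^{\kappa}x_0$ that drives the noiseless system to zero in $\kappa$ steps — a bound that reduces to the \emph{same} Gramian quantity; and the Lyapunov inequality $P\succeq (A+BK_\star)'P(A+BK_\star)+I$ then yields $\rho(A+BK_\star)\le\sqrt{1-\snorm{P}^{-1}_2}$ and $\snorm{(zI-A-BK_\star)^{-1}}_{\mathcal{H}_{\infty}}\le 2\snorm{P}_2^{3/2}$, whence the margin $\epsilon_0=\big(5(1+\snorm{K_\star}_2)\snorm{(zI-A-BK_\star)^{-1}}_{\mathcal{H}_{\infty}}\big)^{-1}$ is at worst exponentially small in $\kappa$. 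In short, only one staircase estimate (on $\sigma_{\min}(\Gamma_\kappa)$) is needed and the margin inherits it through $P$; your plan leaves the margin resting on an independent, unproven propagation argument, and that is precisely the step you would have to supply.
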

Assume that the constants $\mu$ and $M$ are dimensionless. Then, our upper and lower bounds match qualitatively with respect to the dependence on $\kappa$. Theorem~\ref{CTRL_thm:upper_bounds_STAB} implies that if the degree of underactuation is mild, i.e. $\kappa=O(\log n)$, then robustly coupled systems are guaranteed to be poly$(n)$-stabilizable. 
Our upper bound picks up a dependence on the quantity  $M/\mu$. Recall that $M$ upper-bounds the norm of $A$. Hence, it captures a notion of sensitivity of the dynamics $A$ to inputs/noise. In the lower bounds only the coupling term $\mu$ appears. It is an open question to prove or disprove whether the sensitivity of $A$ affects stabilization or it is an artifact of our analysis. 
Another important open problem is to determine the optimal constant that multiplies $\kappa$ in the exponent. Our lower bound suggests that the exponent can be at least of the order of $2$ times $\kappa$. In our upper bounds, by following the proof, we get an exponent which is larger than $2$.

\section{Difficulty of online LQR}\label{sec:onlineLQR}
In the following theorem, we prove that classes of robustly coupled systems can exhibit minimax expected regret which grows at least exponentially with the dimension $n$.
Let $\CC^{\mu}_{n,\kappa}$ denote the class of $\mu$-robustly coupled systems $S=(A,B,H)$ of state dimension $n$ and controllability index $\kappa$. Define the $\epsilon$-dilation $\CC^{\mu}_{n,\kappa}(\epsilon)$ of $\CC^{\mu}_{n,\kappa}$ as
\[
\CC^{\mu}_{n,\kappa}(\epsilon)\triangleq \set{(A,B,H):\: \snorm{\matr{{cc}A-\tilde{A}&B-\tilde{B}}}_2\le \epsilon,\text{ for some }(\tilde{A},\tilde{B},H)\in \CC^{\mu}_{n,\kappa}},
\]
which consists of every system in $\CC^{\mu}_{n,\kappa}$ along with its $\epsilon-$ball around it.
 \begin{theorem}[Exponential Regret Lower Bounds]\label{CTRL_thm:REG_lower_exponential}
Consider the class $\CC^{\mu}_{n,\kappa}$ of all $\mu$-robustly coupled systems $S=(A,B,H)$ of state dimension $n$ and controllability index $\kappa$, with $\kappa\le n-1$. For every $\epsilon>0$ define the $\epsilon$-dilation $\CC^{\mu}_{n,\kappa}(\epsilon)$. Let $Q_T=P$, the solution to the ARE \eqref{CTRL_eq:LQR_DARE}, and assume $\mu<1$. Let $0<\alpha<1/4$.  For any policy $\pi$
\begin{equation*}
\begin{aligned}
		&\liminf_{T\rightarrow \infty}\sup_{S\in\CC^{\mu}_{n,\kappa}(T^{-\alpha})}\E_{S,\pi} \frac{R_T(S)}{\sqrt{T}}\ge \frac{1}{4\sqrt{n}} 2^{\frac{\kappa-1}{2}}.
	\end{aligned}
	\end{equation*}
\end{theorem}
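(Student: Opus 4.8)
The plan is to derive the bound from the instance-specific regret lower bound of \cite{ziemann2022regret}, specialized and then sharpened to expose the controllability structure. Recall that such van Trees--type bounds lower bound the Bayes (hence minimax) regret over a local neighborhood of a nominal system $S_0=(A_0,B_0,H)$ by an expression of the form $\liminf_T \E R_T/\sqrt T \gtrsim \Tr\!\big(\mathcal H\, \mathcal I^{-1}\big)$, where $\mathcal H$ is the curvature of the LQR cost in the control gain (governed by $R+B'PB$ and the closed-loop state covariance) and $\mathcal I$ is the per-sample Fisher information for estimating the dynamics under the optimal policy (governed by the stationary state--input covariance, and hence by the controllability Gramian $\Gamma_\kappa(A,B)$). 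The first step is therefore to invoke this bound and to reduce the theorem to a purely control-theoretic computation: exhibiting one nominal robustly coupled system of index $\kappa$ for which the cost-curvature-to-information ratio is at least of order $2^{\kappa-1}/n$.

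The second step is to construct the nominal instance. I would take a staircase/chain system in the canonical form of Proposition~\ref{CTRL_prop:Hessenberg}, with a single marginally stable mode ($\rho=1$) at the top of the chain, coupling blocks scaled by $\mu$, and the input entering only at the bottom block, embedded in $\R^n$ with $\kappa\le n-1$ stages. Crucially, unlike the stabilization example~\eqref{CTRL_eq:difficult_example_STAB}, here the driving noise $Hw_k$ is taken full rank (cf. Remark~\ref{CTRL_rem:singular_noise}), so the lower bound applies to nondegenerate systems. For this instance I would compute the two competing scalings through the chain: (i) the excitation of the deep coordinates decays geometrically, so the Fisher information for the cost-critical coupling parameters is small; and (ii) optimally controlling the marginally stable mode through the chain forces the Riccati solution $P$ and the optimal gain $K_{\star}$ to be correspondingly large, so the cost curvature $R+B'PB$ in those same directions is large. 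The $\mu$-dependence in $\mathcal I$ and in $\mathcal H$ cancels, and the recursive Riccati structure contributes a geometric factor per stage, leaving a net ratio of order $2^{\kappa-1}$; normalizing by the ambient dimension $n$ and taking the square root yields the claimed $2^{(\kappa-1)/2}/(4\sqrt n)$.

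The third step handles the localization. The bound of \cite{ziemann2022regret} is a local asymptotic statement: it perturbs the nominal system by a prior supported on a neighborhood shrinking at the statistically natural rate $T^{-1/2}$. I would place this perturbation family inside the dilation $\CC^\mu_{n,\kappa}(T^{-\alpha})$ and verify that every perturbed system remains admissible, i.e. controllable and robustly coupled with index $\kappa$, so that the supremum in the theorem is genuinely over systems in the dilated class. The restriction $0<\alpha<1/4$ enters here: it ensures the $T^{-\alpha}$-ball is wide enough to contain the $T^{-1/2}$-scale prior (since $\alpha<1/2$) while being narrow enough that the quadratic (van Trees) approximation of the excess cost dominates the higher-order remainder as $T\to\infty$, which is what validates the $\liminf$ bound uniformly over $\CC^\mu_{n,\kappa}(T^{-\alpha})$.

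The main obstacle I anticipate is the explicit chain computation in the second step: tracking how $P$, $K_{\star}$, and $\Gamma_\kappa$ propagate through the $\kappa$ coupling stages and showing that their combination produces \emph{exactly} a geometric factor of order $2^{\kappa-1}$, rather than merely an uncontrolled exponential. This demands a careful recursive analysis of the staircase Riccati equation and its interaction with the geometrically decaying excitation, and it is also where the cancellation of the $\mu$-dependence---so that the final constant is $\mu$-free---must be made precise.
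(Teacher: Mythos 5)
There is a genuine gap, and it sits exactly where you flag your ``main obstacle'': the mechanism you propose for the exponential factor is not the one that makes the bound work, and as written your chain computation would not come out. You attribute the factor $2^{\kappa-1}$ to geometrically \emph{decaying} excitation (small Fisher information for the coupling parameters) in a chain with $\mu$-scaled couplings, plus a claimed exact cancellation of the $\mu$-dependence. That decaying-excitation mechanism is the one behind the \emph{stabilization} lower bound (Theorem~\ref{CTRL_thm:STAB_lower_exponential}), and it relies on rank-one noise; it is incompatible with the full-rank noise $H=I_n$ you correctly insist on, since isotropic noise excites every coordinate directly (cf.\ Remark~\ref{CTRL_rem:singular_noise}). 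The paper's mechanism is the opposite: too \emph{much} excitation. The hard instance is the $(n-1)$-th order integrator with \emph{unit} couplings (second subsystem of~\eqref{CTRL_eq:REG_difficult_example_integrator}), which lies in $\CC^{\mu}_{n,\kappa}$ simply because $\mu<1$ --- no $\mu$-cancellation is needed, which dissolves your anticipated obstacle. Noise amplified up the marginally stable chain makes the closed-loop steady-state covariance exponentially large: Lemma~\ref{CTRL_lem:integrator_system_theoretic_parameters} uses $e_1'(A_0+B_0K_{\star,0})^k=e_1'A_0^k$ and $e_1'A_0^k(A_0')^ke_1=\binom{2k}{k}$ to get $\snorm{P_0[\Sigma_{0,x}-I_{n-1}]P_0}_2\ge 2^{n-1}$; the ``2 per stage'' is central-binomial growth of $\Sigma_{0,x}$, not a per-stage Riccati factor, and the quantity fed into the \cite{ziemann2022regret} bound is $F\sim\Delta_1'P_0[\Sigma_{0,x}-I_{n-1}]P_0\Delta_1$. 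Also, the family is uninformative because of exact closed-loop invariance, $A(\theta)+B(\theta)K_\star=A+BK_\star$, not because of weak excitation.

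The second missing idea is the two-subsystem construction, without which your localization step fails. The bound has the form $\sqrt{F/L}$ with $L\propto n(\snorm{\Delta K_\star}_2^2+\snorm{\Delta}_2^2)\snorm{(B'PB+R)^{-1}}_2$, and the perturbed systems must stay inside the $T^{-\alpha}$ dilation, so both $\snorm{\Delta}_2$ and $\snorm{\Delta K_\star}_2$ must be small. For your single chain, $P_0$ and hence $K_\star$ have exponentially large entries, so a generic informative direction has $\snorm{\Delta K_\star}_2\approx\exp(n)\snorm{\Delta}_2$: either the family leaves $\CC^{\mu}_{n,\kappa}(T^{-\alpha})$ or $L$ eats the exponential in $F$. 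The paper sidesteps this via the composite system~\eqref{CTRL_eq:composite_system}: appending a memoryless, directly actuated scalar subsystem forces $[K_\star]_{11}=0$, and the structured perturbation~\eqref{CTRL_eq:perturbation_structure} touching only the first input's column gives $\Delta K_\star=0$ exactly, hence $L\le n$, $A(\theta)=A$, and $\CC_S(\epsilon)\subseteq\CC^{\mu}_{n,\kappa}(\epsilon)$ trivially (Lemma~\ref{CTRL_lem:modular_bound_two_subsystems}, plus identity-padding when $n>\kappa+1$). Your step three asserts admissibility can be ``verified'' but provides no device to control $\snorm{\Delta K_\star}_2$; this structural trick is the load-bearing idea your proposal lacks.
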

When the controllability index is large, e.g. $\kappa=n$, then the lower bounds become exponential with $n$. Hence, achieving poly($n$)-regret is impossible in the case of general linear systems.
In general, learning difficulty depends on fundamental control theoretic parameters, i.e. on the solution $P$ to the  ARE~\eqref{CTRL_eq:LQR_DARE} or the steady-state covariance of the closed-loop system, both of which can scale exponentially with the controllability index. 
Existing regret upper-bounds depend on such quantities in a transparent way~\cite{simchowitz2020naive}. Here, we reveal the dependence on such parameters in the regret lower-bounds as well (Lemma~\ref{CTRL_lem:modular_bound_two_subsystems}).

Let us now explain when learning can be difficult.
Consider the following $1-$strongly coupled system, which consists of two independent subsystems
\begin{equation}\label{CTRL_eq:REG_difficult_example_integrator}
A=\matr{{c|ccccc}0&0&0&&0&0\\\hline 0&1&1&&0&0\\& &&\ddots&\\0&0&0& &1&1\\0&0&0& &0&1},\,B=\matr{{c|c}1&0\\0&0\\\vdots\\0&1}u_k,\,H=I_n,\,Q=I_n,\,R=I_2,
\end{equation}
where the first subsystem is a memoryless system, while the second one is the discrete integrator of order $n-1$.
Since the sub-systems are decoupled, the optimal LQR controller will also be decoupled and structured
\[
K_{\star}=\matr{{cc}0 & 0\\0&K_{\star,0}},
\]
where $K_{\star,0}$ is the optimal gain of the second subsystem.
The first subsystem (upper-left) is memoryless and does not require any regulation, that is, $[K_{\star}]_{11}=0$. 

Consider now a perturbed system $\tilde{A}=A-\Delta K_{\star}$, $\tilde{B}=B+\Delta$, for some $\Delta\in\R^{p\times n}$. Such perturbations are responsible for the $\sqrt{T}$ term in the regret of LQR~\citep{simchowitz2020naive,ziemann2022regret}; systems $(A,B)$ and $(\tilde{A},\tilde{B})$ are indistinguishable under the control law $u_t=K_{\star}x_{t}$ since
$
A+BK_{\star}=\tilde A+\tilde BK_{\star}.
$
Now, informally, to get an $\exp(n)\sqrt{T}$ regret bound it is sufficient to satisfy two conditions: i) the system is sensitive to inputs or noise, in the sense that any exploratory signal can incur extra cost, which grows exponentially with $n$. ii) the difference $\tilde{A}-A$, $\tilde{B}-B$ is small enough, i.e. polynomial in $n$, so that identification of $\Delta$ requires significant deviation from the optimal policy.

The $n-1$-th integrator is very sensitive to inputs or noises. As inputs $u_{k,2}$ and noises $w_k$ get integrated $(n-1)$-times, this will result in accumulated values that grow exponentially as we move up the integrator chain. Hence, the first informal condition is satisfied. To satisfy the second condition we let
the perturbation $\Delta$ have the following structure
\begin{equation}\label{CTRL_eq:perturbation_structure}
\Delta=\matr{{cc}0&0\\\Delta_1&0},
\end{equation}
where we only perturb the matrix of the first input $u_{k,1}$. 
By using two subsystems and the above construction, we make it harder to detect $\Delta$. In particular, because of the structure of the system ($[K_{\star}]_{11}=0$) and the perturbation $\Delta$, we have
$\tilde{A}=A-\Delta K_{\star}=A$. Hence $\snorm{\matr{{cc}A&B}-\matr{{cc}\tilde{A}&\tilde{B}}}_2= \snorm{\Delta}_2\le  \poly(n)\snorm{\Delta}_2 ,$ i.e., the perturbed system does not lie too far away from the nominal one. This last condition might be crucial. If $\snorm{\Delta K_{\star}}\ge \exp(n)\snorm{\Delta}_2$, then it might be possible to distinguish between $(A,B)$ and $(\tilde{A},\tilde{B})$ without deviating too much from the optimal policy. This may happen if we use only one subsystem, since $\snorm{K_{\star,0}}_2$ might be large. By using two subsystems, we cancel the effect of $K_{\star,0}$ in $\Delta K_{\star}$. 

In the stabilization problem, we show that the lack of excitation during the system identification stage might hurt sample complexity. Here, we show that if a system is too sensitive to inputs and noises, i.e. some state subspaces are too easy to excite, this can lead to large regret. Both lack of excitation and too much excitation of certain subspaces can hurt learning performance. This was observed before in control~\citep{skogestad1988robust}. 

\subsection{Sketch of Lower Bound Proof}
Let $S_0=(A_0,B_0,I_{n-1})\in \CC^{\mu}_{n-1,\kappa}$ be a $\mu-$robustly coupled system of state dimension $n-1$, input dimension $p-1$ and controllability index $\kappa\le n-1$. Let $P_0$ be the solution of the Riccati equation for $Q_0=I_{n-1}$, $R_0=I_{p-1}$, with $K_{\star,0}$ the corresponding optimal gain. 
Define the steady-state covariance of the closed-loop system
\begin{equation}\label{CTRL_eq:Covariance_steady_state}
\Sigma_{0,x}=(A_0+B_0K_{\star,0})\Sigma_{0,x}(A_0+B_0K_{\star,0})'+I_{n-1}.
\end{equation}
Now, consider the composite system:
\begin{equation}\label{CTRL_eq:composite_system}
A=\matr{{cc}0&0\\0&A_0},\, B=\matr{{cc}1&0\\0&B_0},\, H=I_n,
\end{equation}
with $Q=I_n,\,R=I_p$. 
Let $\Delta$ be structured as in~\eqref{CTRL_eq:perturbation_structure}, for some arbitrary  $\Delta_1$ of unit norm $\snorm{\Delta_1}_2=1$.  The Riccati matrix of the composite system is denoted by $P$ and the corresponding gain by $K_{\star}$. 
Consider the parameterization:
\begin{equation}\label{CTRL_eq:parameterized_system_family}
A(\theta)=A-\theta \Delta K_{\star}, \qquad B(\theta)=B+\theta \Delta,
\end{equation}
for any $\theta \in\R$. 
Let $\mathcal{B}(\theta,\epsilon)$ denote the open Euclidean ball of radius $\epsilon$ around $\theta$. 
For every $\epsilon>0$, define the local class of systems around $S$ as $\CC_S({\epsilon})\triangleq\set{(A(\theta),B(\theta),I_n),\,\theta\in\mathcal{B}(0,\epsilon)}$. 
Based on the above construction and Theorem~1 of~\cite{ziemann2022regret}, a general information-theoretic regret lower bound, we prove the following lemma.

\begin{lemma}[Two-Subsystems Lower Bound]\label{CTRL_lem:modular_bound_two_subsystems}
Consider the parameterized family of linear systems defined in~\eqref{CTRL_eq:parameterized_system_family}, for $n,p \ge 2$ where $\Delta$ is structured as in~\eqref{CTRL_eq:perturbation_structure}. Let $Q=I_n$, $R=I_p$. Let $Q_T=P(\theta)$, where $P(\theta)$ is the solution to the Riccati equation for $(A(\theta),B(\theta))$. Then, for any policy $\pi$ and any $0<a<1/4$ the expected regret is lower bounded by
\begin{align*}
&\lim\inf_{T\rightarrow \infty}\sup_{\hat{S}\in \CC_S({T^{-a}})}\E_{\hat{S},\pi}\frac{R_{T}(\hat{S})}{\sqrt{T}}\ge \frac{1}{4\sqrt{n}}\sqrt{\Delta'_1 P_0 \clint{\Sigma_{0,x}-I_{n-1}}P_0\Delta_1}.
\end{align*}
\end{lemma}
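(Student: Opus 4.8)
The plan is to specialize the general information-theoretic lower bound of \cite{ziemann2022regret} (their Theorem~1) to the one-parameter family $\set{(A(\theta),B(\theta),I_n):\theta\in\B(0,T^{-a})}$ from \eqref{CTRL_eq:parameterized_system_family} and then reduce everything to a few system-theoretic quantities of the composite system \eqref{CTRL_eq:composite_system}. That theorem lower-bounds, uniformly over all policies, the worst-case regret rate over the dilation by a constant times the square root of a quantity combining the Fisher information the optimal closed loop carries about $\theta$ with the curvature of the optimal value $J^*(\theta)$; the work is to evaluate these for this specific construction and show they collapse to $\Delta_1'P_0\clint{\Sigma_{0,x}-I_{n-1}}P_0\Delta_1$.

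First I would solve the Riccati equation \eqref{CTRL_eq:LQR_DARE} for the composite system. Because the first subsystem is memoryless (its $A$-block is $0$, its $B$-block is $1$) with unit penalties, it decouples and contributes $P_{11}=1$ and, crucially, a vanishing first block-row of the gain, so that $P=\diag{1,P_0}$ and $K_\star=\diag{0,K_{\star,0}}$. The heart of the construction is then the identity $\Delta K_\star=0$: the only nonzero block of $\Delta$ sits in its first block-column (the structure \eqref{CTRL_eq:perturbation_structure}), which meets the zero first block-row of $K_\star$. Consequently $A(\theta)=A-\theta\Delta K_\star=A$ for every $\theta$, and the closed loop $A(\theta)+B(\theta)K_\star=A+BK_\star=:L=\diag{0,L_0}$ with $L_0=A_0+B_0K_{\star,0}$ is \emph{independent} of $\theta$. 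This is exactly the indistinguishability needed: every member of the family produces the same law under the gain $K_\star$, so $J(\theta,K_\star)$ is constant in $\theta$, whereas the true optimal cost $J^*(\theta)=\Tr P(\theta)$ strictly improves as the extra control authority $\theta\Delta$ is added. The learner must therefore pay for a perturbation it cannot detect without deviating from $K_\star$, which is what forces the $\sqrt{T}$ regret.

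Next I would expand $P(\theta)$ to second order about $\theta=0$. Writing the value in Lyapunov form $P(\theta)=L(\theta)'P(\theta)L(\theta)+Q+K_\star(\theta)'RK_\star(\theta)$ and differentiating once, the explicit first-order term is proportional to $\Delta K_\star=0$, so by optimality of $K_\star(\theta)$ (envelope theorem) $\dot P=0$ and $\theta=0$ is stationary. Differentiating again yields a Lyapunov equation $\ddot P=L'\ddot P L+M$ whose source $M$ is built from $P$, $\Delta$, and the first-order gain sensitivity $\dot K_\star$. Pairing with the unit noise covariance and using $\Tr(\ddot P\,I_n)=\Tr(M\Sigma_x)$, where $\Sigma_x=\diag{1,\Sigma_{0,x}}$ is the stationary closed-loop covariance solving the composite version of \eqref{CTRL_eq:Covariance_steady_state}, is precisely where $\Sigma_{0,x}$ enters. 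Finally the identity $\Sigma_{0,x}-I_{n-1}=L_0\Sigma_{0,x}L_0'$ reduces the value curvature to $\Delta_1'P_0\clint{\Sigma_{0,x}-I_{n-1}}P_0\Delta_1$. Combining this with the Fisher information --- which, since $\snorm{\Delta_1}_2=1$, is directly the exploration energy injected through the first input --- as prescribed by Theorem~1, optimizing the hardest $\theta\sim T^{-1/4}$ inside the $T^{-a}$ ball (forcing $a<1/4$), and collecting constants produces the factor $\tfrac{1}{4\sqrt n}$ and the stated bound.

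The main obstacle I anticipate is the second-order Riccati/Lyapunov perturbation: correctly identifying the source $M$ (in particular its dependence on $\dot K_\star$) and verifying that its trace against $\Sigma_x$ collapses to exactly $\Delta_1'P_0\clint{\Sigma_{0,x}-I_{n-1}}P_0\Delta_1$, including the appearance of $\Sigma_{0,x}-I_{n-1}$ rather than $\Sigma_{0,x}$. A secondary difficulty is faithfully matching the abstract curvature and Fisher-information objects in Theorem~1 of \cite{ziemann2022regret} to these computed quantities and tracking the normalization constants through the $\liminf$ and the $T^{-a}$-dilation.
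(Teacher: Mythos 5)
Your proposal follows essentially the same route as the paper's proof: both specialize Theorem~1 of \cite{ziemann2022regret} to this family via the decoupled structure $P=\diag{1,P_0}$, $K_{\star}=\diag{0,K_{\star,0}}$, the key uninformativeness identity $\Delta K_{\star}=0$, the bound $L\le n$ on the information term (using $\snorm{\Delta_1}_2=1$ and $R=I_p$), and the Lyapunov identity $\Sigma_{0,x}-I_{n-1}=(A_0+B_0K_{\star,0})\Sigma_{0,x}(A_0+B_0K_{\star,0})'$. Your second-order expansion of $P(\theta)$ with $\dot{P}=0$ is merely an equivalent repackaging of the paper's curvature term $F$, which it computes directly by plugging the gain derivative $\frac{d}{d\theta}K_{\star}(\theta)\vert_{\theta=0}=-(B'PB+R)^{-1}\Delta'P(A+BK_{\star})$ (Lemma~2.1 of \cite{simchowitz2020naive}) into the quadratic form $\Tr\paren{(B'PB+R)^{-1}\Delta'P\clint{\Sigma_x-I_n}P\Delta}$, whose $(1,1)$-block factor $\tfrac12$ (from $B'PB+R$ having leading entry $2$) combines with $\tfrac{1}{2\sqrt{2}}\sqrt{F/L}$ and $L\le n$ to yield exactly the stated constant $\tfrac{1}{4\sqrt{n}}$.
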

Optimizing over $\Delta_1$, we obtain a lower bound on the order of $\snorm{P_0 \clint{\Sigma_{0,x}-I_{n-1}}P_0}_2$. 
What remains to show is that for the $(n-1)$-th order integrator (second subsystem in~\eqref{CTRL_eq:REG_difficult_example_integrator}) the product $\snorm{P_0 \clint{\Sigma_{0,x}-I_{n-1}}P_0}_2$ is exponentially large with $n$.
\begin{lemma}[System Theoretic Parameters can be Large]\label{CTRL_lem:integrator_system_theoretic_parameters}
Consider the $(n-1)-th$ order integrator (second subsystem in~\eqref{CTRL_eq:REG_difficult_example_integrator}).
Let $P_0$ be the Riccati matrix for $Q_0=I_{n-1},R_0=1$, with $K_{\star,0}$, $\Sigma_{0,x}$ the corresponding LQR control gain and steady-state covariance. Then 
\[\snorm{P_0 \clint{\Sigma_{0,x}-I_{n-1}}P_0}_2\ge \sum_{j=1}^{n-1} \sum_{i=0}^{j}\binom{j}{i}^2\ge 2^{n-1}\]
\end{lemma}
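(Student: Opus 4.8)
The plan is to lower bound the spectral norm by evaluating the associated quadratic form along the head of the integrator chain, and to show that this one direction already sees the full binomial growth of a Jordan block. First I would record two facts that drop straight out of the defining equations. Writing $A_{\mathrm{cl}}=A_0+B_0K_{\star,0}$ for the (stable) closed loop, the Riccati identity $P_0=A_{\mathrm{cl}}'P_0A_{\mathrm{cl}}+Q_0+K_{\star,0}'R_0K_{\star,0}$ with $Q_0=I_{n-1}$ gives $P_0\succeq I_{n-1}$, and the Lyapunov equation gives $\Sigma_{0,x}=A_{\mathrm{cl}}\Sigma_{0,x}A_{\mathrm{cl}}'+I_{n-1}\succeq I_{n-1}$. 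The first fact lets me peel off the Riccati factors: since $P_0\succeq I$ forces $\snorm{P_0^{-1}}_2\le 1$, a one-line top-eigenvector argument yields $\snorm{P_0MP_0}_2\ge\snorm{M}_2$ for every $M\succeq 0$, so it suffices to bound $\snorm{\Sigma_{0,x}-I_{n-1}}_2$. Unrolling the Lyapunov equation gives $\Sigma_{0,x}-I_{n-1}=\sum_{t\ge 1}A_{\mathrm{cl}}^t(A_{\mathrm{cl}}^t)'$, so testing against the first basis vector $e_1$ (the top of the chain),
\[
\snorm{\Sigma_{0,x}-I_{n-1}}_2\ge e_1'(\Sigma_{0,x}-I_{n-1})e_1=\sum_{t\ge 1}\snorm{(A_{\mathrm{cl}}')^t e_1}_2^2 .
\]

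The key step is that the closed loop agrees with the \emph{open} loop along this trajectory for as long as it matters. Writing $A_0=I+N$ with $N$ the nilpotent super-diagonal shift and $B_0=e_{n-1}$, the feedback correction $A_{\mathrm{cl}}'=A_0'+K_{\star,0}'e_{n-1}'$ alters $A_0'$ only in its action on $e_{n-1}$. Because $(A_0')^t e_1=\sum_i\binom{t}{i}e_{1+i}$ carries no $e_{n-1}$-component for $t\le n-3$, an induction shows $(A_{\mathrm{cl}}')^t e_1=(A_0')^t e_1$ for all $t\le n-2$ (the controllability index minus one). For those $t$ the squared norm is exactly the squared length of a Pascal-triangle row, so by Vandermonde $\snorm{(A_0')^t e_1}_2^2=\sum_{i=0}^t\binom{t}{i}^2=\binom{2t}{t}$. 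Summing over $1\le t\le n-2$ and using $\binom{2t}{t}=\prod_{i=1}^t(1+t/i)\ge 2^t$ already produces an exponentially large bound, and in particular delivers the combinatorial identity $\sum_i\binom{j}{i}^2=\binom{2j}{j}$ together with $\sum_j\binom{2j}{j}\ge 2^{n-1}$ that appear in the statement.

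The hard part is the bookkeeping at the \emph{end} of the chain. Dropping both Riccati factors and truncating the series at $t=n-2$ is clean, but it loses the top ($j=n-1$) term precisely because $(A_0')^{n-1}e_1$ is truncated in dimension $n-1$, so this route yields $\sum_{j=1}^{n-2}\binom{2j}{j}\ge 2^{n-2}$ rather than the stated $\sum_{j=1}^{n-1}$. To recover the full sum I would keep the factors and evaluate the quadratic form at $u=P_0e_1$, bounding $\snorm{P_0(\Sigma_{0,x}-I_{n-1})P_0}_2\ge\sum_{t\ge 1}\snorm{(A_{\mathrm{cl}}')^t P_0 e_1}_2^2$; here the rearranged ARE $A_{\mathrm{cl}}'P_0A_0=P_0-I_{n-1}$, equivalently $A_{\mathrm{cl}}'P_0=(P_0-I_{n-1})A_0^{-1}$ with $A_0^{-1}e_1=e_1$, makes the iterates $(A_{\mathrm{cl}}')^tP_0e_1$ explicitly computable and is what supplies the extra growth. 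The delicate point is that this must be argued \emph{in aggregate} — the inequality fails term by term, since individual $\snorm{(A_{\mathrm{cl}}')^tP_0e_1}_2^2$ can sit below $\binom{2t}{t}$ — so one shows the whole sum still dominates $\sum_{j=1}^{n-1}\binom{2j}{j}$. Everything else reduces to the explicit Jordan-block algebra above.
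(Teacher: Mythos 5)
The completed portion of your proposal follows the paper's proof essentially step for step: both discard the Riccati factors via $P_0\succeq Q_0=I_{n-1}$ (giving $\snorm{P_0MP_0}_2\ge\snorm{M}_2$ for $M\succeq 0$), both test $\Sigma_{0,x}-I_{n-1}=\sum_{t\ge 1}A_{\mathrm{cl}}^t(A_{\mathrm{cl}}^t)'$ against $e_1$ (the paper via the monotone recursion $\Sigma_k$, you via unrolling the Lyapunov equation — equivalent), and both use the open-loop agreement $e_1'A_{\mathrm{cl}}^t=e_1'A_0^t$ together with $\sum_{i}\binom{t}{i}^2=\binom{2t}{t}$. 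Where you deviate — insisting the agreement holds only for $t\le n-2$ — you are right and the paper is wrong: since $e_1'A_0^{n-2}e_{n-1}=\binom{n-2}{n-2}=1\neq 0$, the feedback term enters at $t=n-1$, so the paper's claim that $e_1'(A_0+B_0K_{0,\star})^i=e_1'(A_0)^i$ for $i\le n-1$ fails at the top index (and, as you also note, $(A_0')^{n-1}e_1$ is truncated in dimension $n-1$, costing an additional unit even open-loop). This is not pedantry: for $n=3$ one can solve the ARE and the Lyapunov equation explicitly and finds $e_1'(\Sigma_{0,x}-I_2)e_1\approx 3.17$, strictly below the value $7$ that the paper's display asserts for that quantity, even though the lemma's conclusion does hold there ($\snorm{P_0[\Sigma_{0,x}-I_2]P_0}_2\approx 13.9\ge 8$, with $e_1'P_0[\Sigma_{0,x}-I_2]P_0e_1\approx 12.7$) — precisely because the discarded $P_0$ factors do the missing work. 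So the bare $e_1$-test provably cannot reach the stated $\sum_{j=1}^{n-1}$.

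That said, your proposal does not prove the lemma as stated either: what you actually complete is $\snorm{P_0[\Sigma_{0,x}-I_{n-1}]P_0}_2\ge\sum_{j=1}^{n-2}\binom{2j}{j}\ge 2^{n-2}$, and your proposed repair is a plan, not a proof. The ingredients you cite are correct — $A_{\mathrm{cl}}'P_0A_0=P_0-Q_0$ follows from the ARE, $A_0^{-1}e_1=e_1$, and $\snorm{P_0[\Sigma_{0,x}-I_{n-1}]P_0}_2\ge\sum_{t\ge1}\snorm{(A_{\mathrm{cl}}')^tP_0e_1}_2^2$ — but the aggregate inequality $\sum_{t\ge1}\snorm{(A_{\mathrm{cl}}')^tP_0e_1}_2^2\ge\sum_{j=1}^{n-1}\binom{2j}{j}$ is exactly the hard part, and you supply no argument controlling the cross terms once the feedback activates; as you yourself concede, the bound fails termwise. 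So judged against the statement there is a genuine gap — but it is the same gap that sits, unacknowledged, in the paper's own proof, and your diagnosis of it is the most valuable part of the proposal. Two mitigating observations: first, your rigorous weaker constant is fully sufficient downstream, since Theorem~\ref{CTRL_thm:REG_lower_exponential} asserts $\frac{1}{4\sqrt n}2^{(\kappa-1)/2}$ with $\kappa=n-1$, which carries exactly a factor-$\sqrt 2$ of slack relative to the lemma and is therefore recovered verbatim from $2^{n-2}$; second, the cleanest fix is simply to weaken the lemma's statement to $\sum_{j=1}^{n-2}\binom{2j}{j}\ge 2^{n-2}$, which your argument (and a corrected version of the paper's) proves completely.
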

Our lemma shows that control theoretic parameters can scale exponentially with the dimension $n$. 
The $(n-1)-$th order integrator is a system which is mildly unstable. In Section~\ref{CTRL_sec:REG_stable}, we show that \textbf{stable} systems can also suffer from the same issue. 

\subsection{Regret Upper Bounds}
Similar to the stabilization problem, we show that under the assumption of robust coupling, the regret cannot be worse than $\exp(\kappa)\sqrt{T}$ with high probability. 
As we prove in~Lemma~\ref{CTRL_lem:Riccati_Upper}, the solution $P$ to the Riccati equation has norm $\snorm{P}_{2}$ that scales at most exponentially with the index $\kappa$ in the case of robustly-coupled systems. This result combined with the regret upper bounds of~\cite{simchowitz2020naive},  give us the following result.

\begin{theorem}[Exponential Upper Bounds]
Consider a $\mu$-robustly coupled system $S=(A,B,H)$ of dimension $n$, controllability index $\kappa$. Assume that we are given an initial stabilizing gain $K_0$. Let $Q=I_n$, $R\succeq I_p$, and $Q_T=0$. Assume that the noise is non-singular $HH'=I_n$\footnote{It is possible to relax some of the assumptions on the noise--see~
\cite{simchowitz2020naive}}. Let $\delta\in (0,1/T)$. Using the Algorithm~1 of~\cite{simchowitz2020naive} with probability at least $1-\delta$:
\[
R_T(A,B)\le \poly(n,\big(\frac{M}{\mu}\big)^{\kappa},M^{\kappa},\log 1/\delta)\sqrt{T}+\poly(n,\big(\frac{M}{\mu}\big)^{\kappa},M^{\kappa},\log 1/\delta,P(K_0)),
\]
where $P(K_0)=(A+BK_0)'P(K_0)(A+BK)+Q+K'_0RK_0$.
\end{theorem}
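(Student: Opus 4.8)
The plan is to directly invoke the regret guarantee of Algorithm~1 in~\cite{simchowitz2020naive} and then control every system-theoretic constant appearing therein using the robust-coupling structure. The bound of~\cite{simchowitz2020naive} has the schematic form $R_T\le C^{\mathrm{sys}}_1\poly(n,\log 1/\delta)\sqrt{T}+(\text{burn-in terms})$, where the leading constant $C^{\mathrm{sys}}_1$ is an explicit polynomial in the control-theoretic quantities of the system---most importantly the operator norm $\snorm{P}_2$ of the Riccati solution, the norm $\snorm{K_\star}_2$ of the optimal gain, and the stability/excitation quantities of the closed loop---while the lower-order burn-in terms additionally depend on the cost $P(K_0)$ of the supplied stabilizing gain. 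The entire proof therefore reduces to showing that each such quantity is at most $\poly(n,(M/\mu)^\kappa,M^\kappa)$.

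First I would invoke Lemma~\ref{CTRL_lem:Riccati_Upper}, which bounds the Riccati solution as $\snorm{P}_2\le\poly(n,(M/\mu)^\kappa,M^\kappa)$ for $\mu$-robustly coupled systems; this is the single dominant source of exponential growth and is already established earlier in the paper. Given this, the optimal gain is controlled through its closed form $K_\star=-(B'PB+R)^{-1}B'PA$: since $R\succeq I_p$ and $P\succ 0$ we have $B'PB+R\succeq I_p$, hence $\snorm{(B'PB+R)^{-1}}_2\le 1$, so $\snorm{K_\star}_2\le\snorm{B}_2\snorm{P}_2\snorm{A}_2\le M^2\snorm{P}_2$, which inherits the same exponential-in-$\kappa$ bound.

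Next I would bound the remaining closed-loop quantities entering $C^{\mathrm{sys}}_1$. The Lyapunov identity $P=(A+BK_\star)'P(A+BK_\star)+Q+K_\star'RK_\star$ together with $Q=I_n\succeq 0$ certifies a decay rate for $A+BK_\star$ and simultaneously bounds the steady-state closed-loop covariance and the stability margin in terms of $\snorm{P}_2$, $M$, and $\snorm{K_\star}_2$, all of which are already controlled. The excitation/identifiability factor (the least singular value of the relevant Gramian) appearing in the upper bound can likewise be lower-bounded using the $\mu$-robust coupling assumption, exactly as in the proof of the stabilization upper bound (Theorem~\ref{CTRL_thm:upper_bounds_STAB}). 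Substituting all these $\poly(n,(M/\mu)^\kappa,M^\kappa)$ bounds into $C^{\mathrm{sys}}_1$ yields the claimed $\sqrt{T}$ coefficient, while the burn-in terms collect the extra $P(K_0)$ dependence.

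The main obstacle I anticipate is bookkeeping rather than any single hard estimate: one must verify that the aggregate constant in~\cite{simchowitz2020naive} is genuinely \emph{polynomial} in the individual quantities $\snorm{P}_2$, $\snorm{K_\star}_2$, and the closed-loop margin, so that a single exponential-in-$\kappa$ factor is not compounded into a doubly-exponential one. Care is especially needed because the system is only marginally stable, $\rho(A)\le 1$, so the closed-loop decay rate and covariance bounds must be extracted from the Riccati/Lyapunov relation rather than from any a priori gap $1-\rho(A)$; this is precisely where the robust-coupling assumption, via Lemma~\ref{CTRL_lem:Riccati_Upper}, does the essential work.
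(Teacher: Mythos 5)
Your proposal is correct and takes essentially the same route as the paper: the paper's proof is precisely the one-line combination of Theorem~2 of \cite{simchowitz2020naive} (whose leading constant is polynomial in $\snorm{P}_2$ and whose burn-in is polynomial in $P(K_0)$) with Lemma~\ref{CTRL_lem:Riccati_Upper}, which gives $\snorm{P}_2\le\poly\big(n,(M/\mu)^{\kappa},M^{\kappa}\big)$ for $\mu$-robustly coupled systems. Your extra bookkeeping on $\snorm{K_{\star}}_2$ and the closed-loop stability margin is exactly the content of the paper's Lemma~\ref{CTRL_lem:margins} (also derived from $P\succeq I$ and the Lyapunov identity), and is already subsumed in the polynomial dependence on $\snorm{P}_2$.
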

The result follows immediately by our Lemma~\ref{CTRL_lem:Riccati_Upper} and the upper bounds of Theorem~2 in~\cite{simchowitz2020naive}.
Assuming that the plant sensitivity $M$ and the coupling coefficient $\mu$  are dimensionless, then if we have a mild degree of underactuation, i.e. $\kappa=O(\log n)$, we get poly($n$)-regret with high probability.  Note that the above guarantees are for high probability regret which is not always equivalent to expected regret~\citep{dann2017unifying}. 
Our upper-bounds are almost global for all robustly coupled systems, in the sense that the dominant $\sqrt{T}$-term is globally bounded. To provide truly global regret guarantees it is sufficient to add an initial exploration phase to Algorithm~1 of \cite{simchowitz2020naive}, which first learns a stabilizing gain $K_0$. For this stage we could use the results of Section~\ref{CTRL_sec:STAB_upper_bounds}, and Section~\ref{CTRL_app_sec:STAB_upper_bounds}. We leave this for future work.

\section{Conclusion}\label{CRTL_sec:conclusion}
We prove that learning to control linear systems can be hard for non-trivial system classes. The problem of stabilization might require sample complexity which scales exponentially with the system dimension $n$. Similarly, online LQR might exhibit regret which scales exponentially with $n$. This difficulty arises in the case of underactuated systems. Such systems are structurally difficult to control; they can be very sensitive to inputs/noise or very hard to excite. If the system is robustly coupled and has a mild degree of underactuation (small controllability index), then we can guarantee that learning will be easy. 

We stress that system theoretic quantities might not be dimensionless. On the contrary, they might grow very large with the dimension and dominate any poly$(n)$ terms. Hence, going forward, an important direction of future work is to find policies with optimal dependence on such system theoretic quantities. Although the optimal dependence is known for the problem of system identification~\citep{simchowitz2018learning,jedra2019sample}, it is still not clear what is the optimal dependence in the case of control. For example, an interesting open problem is to find the optimal dependence of the regret $R_T$ on the Riccati equation solution $P$. For the problem of stabilization, it is open to find how sample complexity optimally scales with the least singular value of the controllability Gramian.




\section*{Acknowledgment}
This work was supported by the AFOSR Assured Autonomy grant.
\bibliographystyle{plainnat}
\bibliography{literature}

\newcommand{\noopsort}[1]{} \newcommand{\printfirst}[2]{#1}
  \newcommand{\singleletter}[1]{#1} \newcommand{\switchargs}[2]{#2#1}
\begin{thebibliography}{50}
\providecommand{\natexlab}[1]{#1}
\providecommand{\url}[1]{\texttt{#1}}
\expandafter\ifx\csname urlstyle\endcsname\relax
  \providecommand{\doi}[1]{doi: #1}\else
  \providecommand{\doi}{doi: \begingroup \urlstyle{rm}\Url}\fi

\bibitem[Abbasi-Yadkori and Szepesv{\'a}ri(2011)]{abbasi2011regret}
Yasin Abbasi-Yadkori and Csaba Szepesv{\'a}ri.
\newblock Regret bounds for the adaptive control of linear quadratic systems.
\newblock In \emph{Proceedings of the 24th Annual Conference on Learning
  Theory}, pages 1--26, 2011.

\bibitem[Abbasi-Yadkori et~al.(2019)Abbasi-Yadkori, Lazic, and
  Szepesv{\'a}ri]{abbasi2019model}
Yasin Abbasi-Yadkori, Nevena Lazic, and Csaba Szepesv{\'a}ri.
\newblock {Model-{F}ree {L}inear Quadratic Control via Reduction to Expert
  Prediction}.
\newblock In \emph{The 22nd International Conference on Artificial Intelligence
  and Statistics}, pages 3108--3117, 2019.

\bibitem[Abeille and Lazaric(2018)]{abeille2018improved}
Marc Abeille and Alessandro Lazaric.
\newblock {Improved Regret Bounds for Thompson Sampling in Linear Quadratic
  Control Problems}.
\newblock \emph{Proceedings of Machine Learning Research}, 80, 2018.

\bibitem[Abeille and Lazaric(2020)]{abeille2020efficient}
Marc Abeille and Alessandro Lazaric.
\newblock {Efficient Optimistic Exploration in Linear-Quadratic Regulators via
  Lagrangian Relaxation}.
\newblock \emph{arXiv preprint arXiv:2007.06482}, 2020.

\bibitem[Anderson and Moore(2005)]{anderson2005optimal}
B.D.O. Anderson and J.B. Moore.
\newblock \emph{Optimal Filtering}.
\newblock Dover Publications, 2005.

\bibitem[Bertsekas(2017)]{bertsekas2017dynamic}
Dimitri~P. Bertsekas.
\newblock \emph{Dynamic Programming and Optimal Control}, volume~1.
\newblock Athena Scientific, 4th edition, 2017.

\bibitem[Boucheron et~al.(2013)Boucheron, Lugosi, and
  Massart]{boucheron2013concentration}
St{\'e}phane Boucheron, G{\'a}bor Lugosi, and Pascal Massart.
\newblock \emph{Concentration inequalities: A nonasymptotic theory of
  independence}.
\newblock Oxford university press, 2013.

\bibitem[Chan et~al.(1984)Chan, Goodwin, and Sin]{chan1984convergence}
Siew Chan, GC~Goodwin, and Kwai Sin.
\newblock Convergence properties of the {R}iccati difference equation in
  optimal filtering of nonstabilizable systems.
\newblock \emph{IEEE Transactions on Automatic Control}, 29\penalty0
  (2):\penalty0 110--118, 1984.

\bibitem[Chen and Hazan(2021)]{chen2021black}
Xinyi Chen and Elad Hazan.
\newblock Black-{B}ox {C}ontrol for {L}inear {D}ynamical {S}ystems.
\newblock In \emph{Conference on Learning Theory}, pages 1114--1143. PMLR,
  2021.

\bibitem[Cohen et~al.(2018)Cohen, Hasidim, Koren, Lazic, Mansour, and
  Talwar]{cohen2018online}
Alon Cohen, Avinatan Hasidim, Tomer Koren, Nevena Lazic, Yishay Mansour, and
  Kunal Talwar.
\newblock Online {L}inear {Q}uadratic {C}ontrol.
\newblock In \emph{International Conference on Machine Learning}, pages
  1029--1038. PMLR, 2018.

\bibitem[Cohen et~al.(2019)Cohen, Koren, and Mansour]{cohen2019learning}
Alon Cohen, Tomer Koren, and Yishay Mansour.
\newblock Learning {L}inear-{Q}uadratic {R}egulators {E}fficiently with only
  {$\sqrt{T}$} {R}egret.
\newblock \emph{arXiv preprint arXiv:1902.06223}, 2019.

\bibitem[Dann et~al.(2017)Dann, Lattimore, and Brunskill]{dann2017unifying}
Christoph Dann, Tor Lattimore, and Emma Brunskill.
\newblock Unifying {PAC} and regret: Uniform {PAC} bounds for episodic
  reinforcement learning.
\newblock \emph{arXiv preprint arXiv:1703.07710}, 2017.

\bibitem[Dean et~al.(2017)Dean, Mania, Matni, Recht, and Tu]{dean2017sample}
Sarah Dean, Horia Mania, Nikolai Matni, Benjamin Recht, and Stephen Tu.
\newblock On the sample complexity of the linear quadratic regulator.
\newblock \emph{arXiv preprint arXiv:1710.01688}, 2017.

\bibitem[Dean et~al.(2018)Dean, Mania, Matni, Recht, and Tu]{dean2018regret}
Sarah Dean, Horia Mania, Nikolai Matni, Benjamin Recht, and Stephen Tu.
\newblock Regret bounds for robust adaptive control of the linear quadratic
  regulator.
\newblock In \emph{Advances in Neural Information Processing Systems}, pages
  4188--4197, 2018.

\bibitem[Dooren(2003)]{Dooren03}
Paul M.~Van Dooren.
\newblock Numerical linear algebra for signals systems and control.
\newblock \emph{Draft notes prepared for the Graduate School in Systems and
  Control}, 2003.

\bibitem[Efroni et~al.(2021)Efroni, Kakade, Krishnamurthy, and
  Zhang]{efroni2021sparsity}
Yonathan Efroni, Sham Kakade, Akshay Krishnamurthy, and Cyril Zhang.
\newblock Sparsity in {P}artially {C}ontrollable {L}inear {S}ystems.
\newblock \emph{arXiv preprint arXiv:2110.06150}, 2021.

\bibitem[Fadali and Visioli(2013)]{fadali2013digital}
M~Sami Fadali and Antonio Visioli.
\newblock \emph{Digital {C}ontrol {E}ngineering: {A}nalysis and {D}esign}.
\newblock Academic Press, 2013.

\bibitem[Faradonbeh et~al.(2018{\natexlab{a}})Faradonbeh, Tewari, and
  Michailidis]{faradonbeh2018finite}
Mohamad Kazem~Shirani Faradonbeh, Ambuj Tewari, and George Michailidis.
\newblock Finite {T}ime {I}dentification in {U}nstable {L}inear {S}ystems.
\newblock \emph{Automatica}, 96:\penalty0 342--353, 2018{\natexlab{a}}.

\bibitem[Faradonbeh et~al.(2018{\natexlab{b}})Faradonbeh, Tewari, and
  Michailidis]{faradonbeh2018stabilization}
Mohamad Kazem~Shirani Faradonbeh, Ambuj Tewari, and George Michailidis.
\newblock Finite-time {A}daptive {S}tabilization of {L}inear {S}ystems.
\newblock \emph{IEEE Transactions on Automatic Control}, 64\penalty0
  (8):\penalty0 3498--3505, 2018{\natexlab{b}}.

\bibitem[Faradonbeh et~al.(2020)Faradonbeh, Tewari, and
  Michailidis]{faradonbeh2020adaptive}
Mohamad Kazem~Shirani Faradonbeh, Ambuj Tewari, and George Michailidis.
\newblock {On Adaptive Linear--Quadratic Regulators}.
\newblock \emph{Automatica}, 117:\penalty0 108982, 2020.

\bibitem[Fattahi et~al.(2019)Fattahi, Matni, and Sojoudi]{fattahi2019learning}
Salar Fattahi, Nikolai Matni, and Somayeh Sojoudi.
\newblock Learning sparse dynamical systems from a single sample trajectory.
\newblock \emph{arXiv preprint arXiv:1904.09396}, 2019.

\bibitem[Gevers(2005)]{gevers2005identification}
Michel Gevers.
\newblock Identification for {C}ontrol: {F}rom the {E}arly {A}chievements to
  the {R}evival of {E}xperiment {D}esign.
\newblock \emph{European journal of control}, 11\penalty0 (4-5):\penalty0
  335--352, 2005.

\bibitem[Jaksch et~al.(2010)Jaksch, Ortner, and Auer]{jaksch2010near}
Thomas Jaksch, Ronald Ortner, and Peter Auer.
\newblock Near-optimal {R}egret {B}ounds for {R}einforcement {L}earning.
\newblock \emph{Journal of Machine Learning Research}, 11:\penalty0 1563--1600,
  2010.

\bibitem[Jedra and Proutiere(2019)]{jedra2019sample}
Yassir Jedra and Alexandre Proutiere.
\newblock Sample complexity lower bounds for linear system identification.
\newblock In \emph{IEEE 58th Conference on Decision and Control (CDC)}, pages
  2676--2681. IEEE, 2019.

\bibitem[Jedra and Proutiere(2021)]{jedra2021minimal}
Yassir Jedra and Alexandre Proutiere.
\newblock Minimal {E}xpected {R}egret in {L}inear {Q}uadratic {C}ontrol.
\newblock \emph{arXiv preprint arXiv:2109.14429}, 2021.

\bibitem[Kakade et~al.(2020)Kakade, Krishnamurthy, Lowrey, Ohnishi, and
  Sun]{kakade2020information}
Sham Kakade, Akshay Krishnamurthy, Kendall Lowrey, Motoya Ohnishi, and Wen Sun.
\newblock Information {T}heoretic {R}egret {B}ounds for {O}nline {N}onlinear
  {C}ontrol.
\newblock \emph{Advances in Neural Information Processing Systems},
  33:\penalty0 15312--15325, 2020.

\bibitem[Lale et~al.(2020{\natexlab{a}})Lale, Azizzadenesheli, Hassibi, and
  Anandkumar]{lale2020explore}
Sahin Lale, Kamyar Azizzadenesheli, Babak Hassibi, and Anima Anandkumar.
\newblock Explore more and improve regret in {L}inear {Q}uadratic {R}egulators.
\newblock \emph{arXiv preprint arXiv:2007.12291}, 2020{\natexlab{a}}.

\bibitem[Lale et~al.(2020{\natexlab{b}})Lale, Azizzadenesheli, Hassibi, and
  Anandkumar]{lale2020logarithmic}
Sahin Lale, Kamyar Azizzadenesheli, Babak Hassibi, and Anima Anandkumar.
\newblock Logarithmic regret bound in partially observable linear dynamical
  systems.
\newblock \emph{arXiv preprint arXiv:2003.11227}, 2020{\natexlab{b}}.

\bibitem[Lee and Lamperski(2019)]{lee2019non}
Bruce Lee and Andrew Lamperski.
\newblock Non-asymptotic {C}losed-{L}oop {S}ystem {I}dentification using
  {A}utoregressive {P}rocesses and {H}ankel {M}odel {R}eduction.
\newblock \emph{arXiv preprint arXiv:1909.02192}, 2019.

\bibitem[Lee(2020)]{lee2020improved}
Holden Lee.
\newblock Improved rates for identification of partially observed linear
  dynamical systems.
\newblock \emph{arXiv preprint arXiv:2011.10006}, 2020.

\bibitem[Mania et~al.(2019)Mania, Tu, and Recht]{mania2019certainty}
Horia Mania, Stephen Tu, and Benjamin Recht.
\newblock Certainty equivalent control of {LQR} is efficient.
\newblock \emph{arXiv preprint arXiv:1902.07826}, 2019.

\bibitem[Matni and Tu(2019)]{matni2019tutorial}
Nikolai Matni and Stephen Tu.
\newblock A tutorial on concentration bounds for system identification.
\newblock In \emph{2019 IEEE 58th Conference on Decision and Control (CDC)},
  pages 3741--3749. IEEE, 2019.

\bibitem[Matni et~al.(2019)Matni, Proutiere, Rantzer, and Tu]{matni2019self}
Nikolai Matni, Alexandre Proutiere, Anders Rantzer, and Stephen Tu.
\newblock {From Self-Tuning Regulators to Reinforcement Learning and Back
  Again}.
\newblock In \emph{2019 IEEE 58th Conference on Decision and Control (CDC)},
  pages 3724--3740. IEEE, 2019.

\bibitem[Ortner and Ryabko(2012)]{ortner2012online}
Ronald Ortner and Daniil Ryabko.
\newblock Online {R}egret {B}ounds for {U}ndiscounted {C}ontinuous
  {R}einforcement {L}earning.
\newblock \emph{Advances in Neural Information Processing Systems}, 25, 2012.

\bibitem[Ouyang et~al.(2017)Ouyang, Gagrani, and Jain]{ouyang2017control}
Yi~Ouyang, Mukul Gagrani, and Rahul Jain.
\newblock {Control of Unknown Linear Systems with Thompson Sampling}.
\newblock In \emph{2017 55th Annual Allerton Conference on Communication,
  Control, and Computing (Allerton)}, pages 1198--1205. IEEE, 2017.

\bibitem[Oymak and Ozay(2018)]{oymak2018non}
Samet Oymak and Necmiye Ozay.
\newblock {N}on-asymptotic {I}dentification of {LTI} {S}ystems from a {S}ingle
  {T}rajectory.
\newblock \emph{arXiv preprint arXiv:1806.05722}, 2018.

\bibitem[Pasqualetti et~al.(2014)Pasqualetti, Zampieri, and
  Bullo]{pasqualetti2014controllability}
Fabio Pasqualetti, Sandro Zampieri, and Francesco Bullo.
\newblock Controllability {M}etrics, {L}imitations and {A}lgorithms for
  {C}omplex {N}etworks.
\newblock \emph{IEEE Transactions on Control of Network Systems}, 1\penalty0
  (1):\penalty0 40--52, 2014.

\bibitem[Recht(2019)]{recht2018tour}
Benjamin Recht.
\newblock A {T}our of {R}einforcement {L}earning: {T}he {V}iew from
  {C}ontinuous {C}ontrol.
\newblock \emph{Annual Review of Control, Robotics, and Autonomous Systems},
  2\penalty0 (1):\penalty0 253--279, 2019.

\bibitem[Sarkar and Rakhlin(2018)]{sarkar2018fast}
Tuhin Sarkar and Alexander Rakhlin.
\newblock Near optimal finite time identification of arbitrary linear dynamical
  systems.
\newblock \emph{arXiv preprint arXiv:1812.01251}, 2018.

\bibitem[Sarkar et~al.(2019)Sarkar, Rakhlin, and Dahleh]{sarkar2019finite}
Tuhin Sarkar, Alexander Rakhlin, and Munther~A Dahleh.
\newblock {F}inite-{T}ime {S}ystem {I}dentification for {P}artially {O}bserved
  {LTI} {S}ystems of {U}nknown {O}rder.
\newblock \emph{arXiv preprint arXiv:1902.01848}, 2019.

\bibitem[Simchowitz and Foster(2020)]{simchowitz2020naive}
Max Simchowitz and Dylan Foster.
\newblock Naive {E}xploration is {O}ptimal for {O}nline {LQR}.
\newblock In \emph{International Conference on Machine Learning}, pages
  8937--8948. PMLR, 2020.

\bibitem[Simchowitz et~al.(2018)Simchowitz, Mania, Tu, Jordan, and
  Recht]{simchowitz2018learning}
Max Simchowitz, Horia Mania, Stephen Tu, Michael~I Jordan, and Benjamin Recht.
\newblock Learning {W}ithout {M}ixing: {T}owards {A} {S}harp {A}nalysis of
  {L}inear {S}ystem {I}dentification.
\newblock \emph{arXiv preprint arXiv:1802.08334}, 2018.

\bibitem[Simchowitz et~al.(2019)Simchowitz, Boczar, and
  Recht]{simchowitz2019semi}
Max Simchowitz, Ross Boczar, and Benjamin Recht.
\newblock Learning {L}inear {D}ynamical {S}ystems with {S}emi-{P}arametric
  {L}east {S}quares.
\newblock \emph{arXiv preprint arXiv:1902.00768}, 2019.

\bibitem[Skogestad et~al.(1988)Skogestad, Morari, and
  Doyle]{skogestad1988robust}
Sigurd Skogestad, Manfred Morari, and John~C Doyle.
\newblock Robust {C}ontrol of {I}ll-{C}onditioned {P}lants: {H}igh-{P}urity
  {D}istillation.
\newblock \emph{IEEE transactions on automatic control}, 33\penalty0
  (12):\penalty0 1092--1105, 1988.

\bibitem[Tsiamis and Pappas(2019)]{tsiamis2019finite}
Anastasios Tsiamis and George~J Pappas.
\newblock Finite {S}ample {A}nalysis of {S}tochastic {S}ystem {I}dentification.
\newblock In \emph{IEEE 58th Conference on Decision and Control (CDC)}, 2019.

\bibitem[Tsiamis and Pappas(2021)]{tsiamis2021linear}
Anastasios Tsiamis and George~J. Pappas.
\newblock Linear {S}ystems can be {H}ard to {L}earn.
\newblock \emph{arXiv preprint arXiv:2104.01120}, 2021.

\bibitem[Tu et~al.(2017)Tu, Boczar, Packard, and Recht]{tu2017non}
Stephen Tu, Ross Boczar, Andrew Packard, and Benjamin Recht.
\newblock Non-{A}symptotic {A}nalysis of {R}obust {C}ontrol from
  {C}oarse-{G}rained {I}dentification.
\newblock \emph{arXiv preprint arXiv:1707.04791}, 2017.

\bibitem[Wagenmaker and Jamieson(2020)]{wagenmaker2020active}
Andrew Wagenmaker and Kevin Jamieson.
\newblock Active learning for identification of linear dynamical systems.
\newblock In \emph{Conference on Learning Theory}, pages 3487--3582. PMLR,
  2020.

\bibitem[Zheng and Li(2020)]{zheng2020non}
Yang Zheng and Na~Li.
\newblock Non-asymptotic identification of linear dynamical systems using
  multiple trajectories.
\newblock \emph{IEEE Control Systems Letters}, 5\penalty0 (5):\penalty0
  1693--1698, 2020.

\bibitem[Ziemann and Sandberg(2022)]{ziemann2022regret}
Ingvar Ziemann and Henrik Sandberg.
\newblock Regret {L}ower {B}ounds for {L}earning {L}inear {Q}uadratic
  {G}aussian {S}ystems.
\newblock \emph{arXiv preprint arXiv:2201.01680}, 2022.

\end{thebibliography}
\counterwithin{theorem}{section}
\counterwithin{equation}{section}
\newpage
\tableofcontents
\newpage
\appendix
\addcontentsline{toc}{part}{Appendix}
\section{System Theoretic Preliminaries}
In this section, we review briefly some system theoretic concepts. A system $(A,B)\in\R^{n\times (n+p)}$ is \textbf{controllable} if and only if the controllability matrix
\[
	\C_k(A,B)=\matr{{cccc}B&AB&\cdots&A^{k-1}B}
\]
has full column rank for some $k\le n$. The minimum such index $\kappa$ that the rank condition is satisfied is called the controllability index, and it is always less or equal than the state dimension $n$.
A system $(A,B)$ is called \textbf{stabilizable} if and only if there exists a matrix $K\in\R^{p\times n}$ such that $A+BK$ is stable, i.e. has spectral radius $\rho(A+BK)$. Any controllable system is also stabilizable.
A system $(A',B')$ is called \textbf{observable} if and only if $(A,B)$ is controllable. Similarly $(A',B')$ is \textbf{detectable} if and only if $(A,B)$ is stabilizable.

Let $A$ be stable ($\rho(A)<1$) and consider the transfer matrix $(zI-A)^{-1},z\in\mathcal{C}$ in the frequency domain. The $\mathcal{H}_{\infty}$-norm is given by
\[
\snorm{(zI-A)^{-1}}_{\mathcal{H}_{\infty}}=\sup_{\abs{z}=1}\snorm{(zI-A)^{-1}}_2.
\]
Using the identity $(I-D)^{-1}=I+D+D^2\dots$ for $\rho(D)<1$, we can upper bound the $\mathcal{H}_{\infty}$-norm by \[\snorm{(zI-A)^{-1}}_{\mathcal{H}_{\infty}}\le \sum_{t=0}^{\infty} \snorm{A^t}_2.\]

\subsection{Properties of the Riccati Equation}\label{CTRL_app_sec:Riccati}
Consider the infinite horizon LQR problem defined in~\eqref{CTRL_eq:LQR_objective}. Let $(A,B)$ be controllable and assume that $Q\succ 0$ is positive semi-definite and $R\succ 0$ is positive definite.
As we stated in Section~\ref{CTRL_sec:formulation}, 
the optimal policy $K_{\star}x_k$ has the following closed-form solution
\begin{equation*}
K_{\star}=-(B'PB+R)^{-1}B'PA,
\end{equation*}
where $P$ is the unique positive definite solution to the \textbf{Discrete Algebraic Riccati Equation}
\begin{equation*}
P=A'PA+Q-A'PB(B'PB+R)^{-1}B'PA.
\end{equation*}
Moreover, $A+BK_{\star}$ is stable, i.e. $\rho(A+BK_{\star})<1$. 
The above solution is well-defined under the conditions of $(A,B)$ controllable, $Q\succ 0$, $R\succ 0$.
Note that we can relax the conditions to $Q\succeq 0$ being positive semi-definite, $(A,Q^{1/2})$ detectable, and $(A,B)$ stabilizable, which is a well-known result in control theory~\citep[Th. 3.1]{chan1984convergence}.

Consider now the \textbf{finite-horizon} LQR problem, under the same assumptions of $(A,B)$ controllable, $Q\succ 0$, and $R\succ 0$
\begin{equation}
  J_T^*(S)\triangleq \min_{\pi} \E_{S,\pi} \clint{\sum^{T-1}_{t=0}(x'_tQx_t+u'_tRu_t)+x'_TQ_Tx_T}.
\end{equation}
The optimal policy is a feedback law $K_{t}x_t$, $t\le T-1$, with time varying gains. The gains satisfy the following closed-form expression
\begin{equation*}
K_{t}=-(B'P_{t+1}B+R)^{-1}B'P_{t+1}A,
\end{equation*}
where $P_t$ satisfies the \textbf{Riccati Difference Equation}
\begin{equation*}
P_{t}=A'P_{t+1}A+Q-A'P_{t+1}B(B'P_{t+1}B+R)^{-1}B'P_{t+1}A,\, P_T=Q_T.
\end{equation*}
It turns out that as we take the horizon to infinity $T\rightarrow \infty$, then we get $\lim_{T\rightarrow \infty} P_k=P$ exponentially fast, for any fixed $k$, where $P$ is the positive definite solution to the Algebraic Riccati Equation. The convergence is true under the conditions of $(A,B)$ controllable, $Q\succ 0$, $R\succ 0$. Again we could relax the conditions to $Q\succeq 0$ being positive semi-definite, $(A,Q^{1/2})$ detectable, and $(A,B)$ stabilizable~\citep[Th. 4.1]{chan1984convergence}.
Note that if we select the terminal cost $Q_T=P$, then trivially $P_t=P$ for all $t\le T$, and we recover the same controller as in the infinite horizon case.

Finally, a nice property of the Riccati recursion is that the right-hand side is order-preserving with respect to the matrices $P,Q$. In particular, define the operator:
\[
g(X,Y)=A'XA+Y-A'YB(B'XB+R)^{-1}B'YA.
\]
Then, if $X_1\succeq X_2$, we have that $g(X_1,Y)\succeq g(X_2,Y)$~\citep[Ch. 4.4]{anderson2005optimal}. Similarly, if $Y_1\succeq Y_2$ then $g(X,Y_1)\succeq g(X,Y_2)$.

\section{System Theoretic Bounds for Robustly Coupled Systems}
The first result lower bounds the least singular value of the controllability Gramian in terms of the sensitivity $M$, the coupling coefficient $\mu$, and the controllability index $\kappa$ of the system. 
\begin{theorem}[Gramian lower bound~\citep{tsiamis2021linear}]\label{CTRL_thm:gramian_lower_bound}
	Consider a system $(A,B,H)$ that satisfies Assumption~\ref{CTRL_ass:general_setting}, with $\kappa$ its controllability index. Assume that $(A,B)$ is $\mu$-robustly coupled. Then, the least singular value of the Gramian $\Gamma_{\kappa}=\Gamma_{\kappa}(A,B)$ is lower bounded by:
	\[
	\sigma_{\min}^{-1}(\Gamma_{\kappa})\le \mu^{-2}\bigg(\frac{3M}{\mu}\bigg)^{2\kappa}.
	\]
\end{theorem}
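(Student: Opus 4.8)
The plan is to reduce the Gramian bound to a statement about the smallest singular value of the controllability matrix, and then prove that statement by induction on the controllability index, peeling off one block of the Staircase form at a time. For the reduction, note that $\Gamma_\kappa=\C_\kappa\C_\kappa'$ with $\C_\kappa=\C_\kappa(A,B)$ of full row rank $n$, so $\sigma_{\min}(\Gamma_\kappa)=\sigma_n(\C_\kappa)^2$ and hence $\sigma_{\min}^{-1}(\Gamma_\kappa)=\snorm{\C_\kappa^{+}}_2^2$, the squared norm of the pseudoinverse. Equivalently $\snorm{\C_\kappa^{+}}_2=\max_{\snorm{y}_2=1}\min\set{\snorm{z}_2:\C_\kappa z=y}$, so it suffices to show that every unit target $y\in\R^n$ can be reached, $\sum_{t=0}^{\kappa-1}A^tBz_t=y$, with energy $\snorm{z}_2\le\mu^{-1}(3M/\mu)^{\kappa}$. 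Since both $\Gamma_\kappa$ and singular values are invariant under the unitary change of coordinates of Proposition~\ref{CTRL_prop:Hessenberg}, I would work throughout in Staircase coordinates, partitioning $\R^n=V_1\oplus\cdots\oplus V_\kappa$ with $\dim V_i=p_i$ and $p_1\ge\cdots\ge p_\kappa$.

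Next I would extract the quantitative consequences of robust coupling. In Staircase coordinates $B=[B_1;0;\cdots;0]$ and $A$ is block upper Hessenberg, so $A^tB$ has zero components below block $t+1$, and its deepest (block $t+1$) component equals the chain product $\Phi_{t+1}\triangleq A_{t+1,t}\cdots A_{2,1}B_1$. Using $YY'\succeq\sigma_{\min}(Y)^2 I$ repeatedly together with $\sigma_p(B_1)\ge\mu$ and $\sigma_{p_{i+1}}(A_{i+1,i})\ge\mu$, the decreasing-dimension products satisfy $\sigma_{\min}(\Phi_i)\ge\mu^{i}$, hence $\snorm{\Phi_i^{+}}_2\le\mu^{-i}$; the remaining (off-diagonal, shallower) blocks of $A^tB$ are bounded in norm by $\snorm{A}_2^t\snorm{B}_2\le M^{t+1}$ by Assumption~\ref{CTRL_ass:general_setting}.

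The core of the argument is an induction on $\kappa$ via deflation. Peeling block $1$, the states in $V_2\oplus\cdots\oplus V_\kappa$ together with $\bar A\triangleq(A_{i,j})_{i,j\ge 2}$ and input matrix $\bar B\triangleq A_{2,1}$ form a controllable pair of dimension $n-p_1$, controllability index $\kappa-1$, which is again $\mu$-robustly coupled (its subdiagonal blocks are $A_{i+1,i}$, $i\ge 2$, and its leading block $\bar B$ has $\sigma_{\min}\ge\mu$). I would apply the induction hypothesis to $(\bar A,\bar B)$ to obtain controls of bounded energy reaching any target in $V_2\oplus\cdots\oplus V_\kappa$, then lift these controls to the full system: realizing the subsystem excitation costs one chain pseudoinverse through $B_1$ and $A_{2,1}$ (a factor $\mu^{-1}$), while the off-diagonal couplings $A_{1,j}$ (norm $\le M$) contaminate block $1$, which I would cancel using the first input $z_0$ through $B_1^{+}$ (another $\mu^{-1}$ and an $M$). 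Tallying these contributions shows the energy grows by at most $(3M/\mu)^2$ per peeled block, with base case $\kappa=1$ giving $\sigma_{\min}^{-1}(\Gamma_1)=\sigma_{\min}(BB')^{-1}\le\mu^{-2}$; unrolling yields $\sigma_{\min}^{-1}(\Gamma_\kappa)\le\mu^{-2}(3M/\mu)^{2\kappa}$.

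The main obstacle is precisely this deflation step. A naive block back-substitution that reconstructs $y$ deepest-block-first and bounds each contamination term by the triangle inequality lets the attenuation factors $\mu^{-1},\ldots,\mu^{-\kappa}$ accumulate along the full coupling chain, producing a \emph{quadratic} exponent $\mu^{-\Theta(\kappa^2)}$ rather than the claimed linear one. Obtaining the linear exponent requires genuinely exploiting the Staircase zero-pattern and the monotonicity $p_1\ge\cdots\ge p_\kappa$, so that the correction at each level is charged only a constant-in-$\kappa$ factor; this is exactly what the inductive deflation onto a genuine $(\kappa-1)$-index robustly coupled subsystem accomplishes. Carefully verifying that the lifted controls incur only the factor $(3M/\mu)^2$ is the delicate bookkeeping at the heart of the proof.
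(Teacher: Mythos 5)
Your overall strategy --- reducing $\sigma_{\min}^{-1}(\Gamma_\kappa)$ to $\snorm{\C^{+}_\kappa}_2^2$, passing to Staircase coordinates, and peeling one block per induction step while paying a constant factor per block --- is exactly the mechanism behind the proof the paper invokes: the paper's own argument simply defers to \cite{tsiamis2021linear}, whose key inequality $\snorm{\C^{\dagger}_\kappa}_2\le\snorm{\Xi^{\kappa-1}}_2\snorm{\alpha}_2$ is precisely a ``fixed factor per peeled block'' statement, with $\snorm{\Xi}_2\le 3M/\mu$. However, there is a genuine gap at the heart of your induction: you claim the lifted controls cost at most $(3M/\mu)^2$ per level while your induction hypothesis tracks only the \emph{input energy} of the deflated subsystem. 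The lift requires $u_t=B_1^{-1}\bigl(d_{t+1}-\sum_j A_{1,j}x_t^{(j)}\bigr)$, where $d_{t+1}$ is the prescribed block-$1$ value at time $t+1$, so the contamination you must cancel is proportional to the running \emph{state} norm $\snorm{x_t}$, not to the input energy --- and this cancellation is needed at every time step, not only through $z_0$ as you suggest. Since Assumption~\ref{CTRL_ass:general_setting} allows $M\ge 1$ with only $\rho(A)\le 1$, the subsystem state can transiently be of size $M^{t}$ times the input energy; bounding it this crudely makes your per-level factor of order $\kappa M^{\kappa}/\mu$ and reintroduces, now in $M$ rather than $\mu$, exactly the $\Theta(\kappa^2)$-in-the-exponent compounding that you correctly diagnosed and set out to avoid. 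Energy alone is not a strong enough induction hypothesis.

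The repair --- and what the cited proof actually does --- is to strengthen the induction to a joint vector recursion tracking several quantities at once (input energy together with state-magnitude-type quantities), so that peeling one block multiplies this vector by a fixed $3\times 3$ matrix $\Xi$ whose entries are linear in $M$ and $\mu^{-1}$; the norm bound on $\Xi$ then delivers the linear exponent $(3M/\mu)^{\kappa}$ after $\kappa-1$ steps. This is precisely the ``delicate bookkeeping'' you flag but do not carry out, and it is not a routine verification: without the augmented hypothesis your inductive step fails quantitatively. A smaller issue: your deflated pair $(\bar A,\bar B)$ with $\bar B=A_{2,1}\in\R^{p_2\times p_1}$ is not literally $\mu$-robustly coupled under the paper's definition, which requires the leading block $B_1$ to be square $p\times p$; the induction hypothesis must instead be stated for block-Hessenberg pairs whose input block and subdiagonal blocks have full row rank with least nonzero singular value at least $\mu$, which is what your argument implicitly uses.
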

\begin{proof}
The result follows from Theorem 5 in~\cite{tsiamis2021linear}. The theorem statement requires a different condition, called robust controllability. However, the proof still goes through if we have $\mu-$robust coupling instead. 
Recall that $\C_{\kappa}=\C_{\kappa}(A,B)$ is the controllability matrix~\eqref{CTRL_eq:controllability_matrix} of $(A,B)$ at $\kappa$.
Following the proof in~\citep{tsiamis2021linear}, we arrive at
\[
\sqrt{\sigma_{\min}(\Gamma_{\kappa}})\le\snorm{\C^{\dagger}_\kappa}_2 \le \snorm{\Xi^{\kappa-1}}_2\snorm{\alpha}_{2},
\]
where
\[
\Xi=\matr{{ccc}1&1&\mu^{-1}\\\frac{M}{\mu}&\frac{2+M}{\mu}&\frac{M}{\mu}\\0&0&\mu^{-1}},\,\alpha=\matr{{c}\frac{1}{\mu}\\\frac{M}{\mu^2}\\ \frac{1}{\mu}}.
\]
The result follows from the crude bounds $\snorm{\Xi}_2\le 3 M/\mu $, $\snorm{\alpha}_2\le \sqrt{3}M/\mu^{-2}$ where we assumed that $M>1$.
\end{proof}

The following result, upper bounds the solution $P$ to the LQR Riccati equation in terms of the sensitivity $M$, the coupling coefficient $\mu$, and the controllability index $\kappa$ of the system. 
\begin{lemma}[Riccati Upper Bounds]\label{CTRL_lem:Riccati_Upper}
Let the system $(A,B)\in\R^{n\times (n+p)}$ be controllable and $\mu-$robustly coupled with controllability index $\kappa$. Let $R\in\R^{p\times p}$ be positive definite and $Q\in\R^{n\times n}$ be positive semi-definite. Assume $T>\kappa$ and consider the Riccati difference equation:
\[
P_{k-1}=A'P_{k}A+Q-A'P_kB(B'P_kB+R)^{-1}B'P_kA,\: P_T=Q.
\]
Then, the Riccati matrix evaluated at time $0$ is upper-bounded by
\[
\snorm{P_0}_2\le \poly\Big(\big(\frac{M}{\mu}\big)^{ \kappa},M^\kappa,\kappa,\snorm{Q}_2,\snorm{R}_2\Big).
\]
As a result, if $Q\succ 0$, then the unique positive definite solution $P$ of the algebraic Riccati equation:
\[
P=A'PA+Q-A'PB(B'PB+R)^{-1}B'PA
\]
satisfies the same bound
\[
\snorm{P}_2\le \poly\Big(\big(\frac{M}{\mu}\big)^{ \kappa},M^\kappa,\kappa,\snorm{Q}_2,\snorm{R}_2\Big).
\]
\end{lemma}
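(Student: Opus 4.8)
The plan is to bound $\snorm{P_0}_2$ by pairing the optimal-value-function interpretation of the Riccati recursion with a cheap, explicit suboptimal policy. Recall that for the deterministic (noiseless) version of system~\eqref{CTRL_eq:system} started at $x_0$, the quantity $x_0'P_0x_0$ equals the \emph{optimal} finite-horizon cost $\min_\pi\sum_{t=0}^{T-1}(x_t'Qx_t+u_t'Ru_t)+x_T'Qx_T$ associated with the difference equation in the statement. Hence $x_0'P_0x_0$ is upper bounded by the cost of \emph{any} admissible input sequence, and it suffices to construct one whose cost I can control, for every unit vector $x_0$.

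First I would use controllability to steer the state to the origin in exactly $\kappa$ steps and then apply zero input forever. Writing the reachability map as $x_\kappa=A^\kappa x_0+\C_\kappa(A,B)\,v$ with $v=(u_{\kappa-1}',\dots,u_0')'$, the minimum-energy choice $v=-\C_\kappa^\dagger A^\kappa x_0$ yields $x_\kappa=0$ and energy $\snorm{v}_2^2=(A^\kappa x_0)'\Gamma_\kappa^{-1}(A^\kappa x_0)$, using $\C_\kappa\C_\kappa'=\Gamma_\kappa$ and the full row rank of $\C_\kappa$. Combining $\snorm{A^\kappa}_2\le M^\kappa$ with the Gramian estimate $\sigma_{\min}^{-1}(\Gamma_\kappa)\le\mu^{-2}(3M/\mu)^{2\kappa}$ from Theorem~\ref{CTRL_thm:gramian_lower_bound} gives $\snorm{v}_2^2\le\mu^{-2}(3M/\mu)^{2\kappa}M^{2\kappa}\snorm{x_0}_2^2$, an input energy polynomial in $(M/\mu)^\kappa$ and $M^\kappa$. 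Since $x_\kappa=0$, keeping $u_t=0$ for $t\ge\kappa$ keeps $x_t=0$ for all $t\ge\kappa$ regardless of $\rho(A)$, so the running cost vanishes past step $\kappa$ and (as $T>\kappa$) the terminal term is zero.

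Next I would bound the cost over the first $\kappa$ steps. The input cost is immediate, $\sum_{t=0}^{\kappa-1}u_t'Ru_t\le\snorm{R}_2\snorm{v}_2^2$. For the state cost I would propagate $x_t=A^tx_0+\sum_{s<t}A^{t-1-s}Bu_s$ and use $\snorm{A}_2,\snorm{B}_2\le M$ to obtain the crude bound $\snorm{x_t}_2\le M^\kappa\snorm{x_0}_2+\kappa M^\kappa\snorm{v}_2$ for $t\le\kappa$, hence $\sum_{t=0}^{\kappa-1}x_t'Qx_t\le\kappa\snorm{Q}_2\,\poly(M^\kappa,\kappa)(\snorm{x_0}_2^2+\snorm{v}_2^2)$. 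Substituting the energy bound and collecting terms yields $x_0'P_0x_0\le\poly((M/\mu)^\kappa,M^\kappa,\kappa,\snorm{Q}_2,\snorm{R}_2)\snorm{x_0}_2^2$; maximizing over unit $x_0$ and using $P_0\succeq0$ gives the claimed bound on $\snorm{P_0}_2$, uniformly in the horizon $T$.

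For the algebraic Riccati equation with $Q\succ0$, I would pass to the limit: under $(A,B)$ controllable and $Q,R\succ0$ the finite-horizon iterate converges, $\lim_{T\to\infty}P_0=P$ (the appendix discussion and~\citep[Th.~4.1]{chan1984convergence}), and since the above bound is independent of $T$ the limit $P$ inherits it; alternatively, $x_0'Px_0$ is the optimal \emph{infinite}-horizon cost, which the same ``drive-to-zero-in-$\kappa$-steps'' policy upper bounds by precisely the finite cost already estimated. I expect the delicate points to be purely bookkeeping --- ensuring the intermediate-state estimate and the absolute constants raised to the power $\kappa$ are all absorbed into the $\poly$ notation, and correctly invoking the value-function identity --- since the genuinely hard estimate, controlling $\sigma_{\min}^{-1}(\Gamma_\kappa)$, is already supplied by Theorem~\ref{CTRL_thm:gramian_lower_bound}.
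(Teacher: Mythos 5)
Your proposal is correct and matches the paper's own argument essentially step for step: both use the value-function identity $x_0'P_0x_0=\min J$, bound it via the minimum-energy dead-beat input $u_{0:\kappa-1}=-\C_{\kappa}^{\dagger}A^{\kappa}x_0$ that zeroes the state at time $\kappa$, control the input and transient state costs with $\snorm{A}_2,\snorm{B}_2\le M$ and the Gramian bound of Theorem~\ref{CTRL_thm:gramian_lower_bound}, and pass to the ARE via the convergence $\lim_{T\to\infty}P_0=P$ under $Q\succ 0$. No gaps; only the bookkeeping constants differ.
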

\begin{proof}
The optimal policy of the LQR problem does not depend on the noise. Even for deterministic systems, the optimal policy still have the same form $u_t=K_{\star}x_t$. This property is known as certainty equivalence~\citep[Ch. 4]{bertsekas2017dynamic}. In fact, for deterministic systems, the cost of regulation is given explicitly by $x_0'Px_0$. We leverage this idea to upper bound the stabilizing solution of the Riccati equation $P$. 

\noindent\textbf{Step a) Noiseless system upper bound.}
Consider the noiseless version of system~\eqref{CTRL_eq:system}
\begin{equation}\label{CTRL_eq:noiseless_system}
    x_{k+1}=Ax_k+Bu_k,\quad \snorm{x_0}_2=1.
\end{equation}
Let $u_{0:t}$ be the shorthand notation for
\[
u_{0:t}=\matr{{c}u_{t}\\\vdots\\u_0}.
\]
Consider the deterministic LQR objective
\begin{align*}
    \min_{u_{0:T-1}}&\quad J(u_{0:T-1})\triangleq x_T'Qx_T+\sum_{k=0}^{N-1}x_k'Qx_k+u_k'Ru_k\\
    \mathrm{s.t.}& \quad \text{dynamics~\eqref{CTRL_eq:noiseless_system}}.
\end{align*}
The optimal cost of the problem is given by~\citep[Ch. 4]{bertsekas2017dynamic}
\[
\min_{u_{0:T-1}} J(u_{0:T-1})=x_0'P_0x_0,
\]
where $P_0$ is the value of $P_t$ at time $t=0$. Let $u_{0:T-1}$ be any input sequence. Immediately, by optimality, we obtain an upper bound for the Riccati matrix $P_0$:
\begin{equation}\label{CTRL_eq:deterministic_P_bound}
x_0'P_0x_0 \le J(u_{0:T-1}).
\end{equation}
Hence, it is sufficient to find a suboptimal policy that incurs a cost which is at most exponential with the controllability index $\kappa$.

\noindent\textbf{Step b) Suboptimal Policy.} It is sufficient to drive the state $x_{\kappa}$ to zero at time $\kappa$ with minimum energy $u_{0:\kappa-1}$ and then keep $x_{t+1}=0$, $u_{t}=0$, for $t\ge \kappa$. Recall that $\C_k$ is the controllability matrix at time $k$. By unrolling the state $x_{\kappa}$:
\[
x_{\kappa}=A^{\kappa}x_0+\C_{\kappa}u_{0:\kappa-1}.
\]
To achieve $x_{\kappa}=0$, it is sufficient to apply the minimum norm control
\[
u_{0:\kappa-1}=-\C^{\dagger}_{\kappa}A^{\kappa}x_0,
\]
which leads to input penalties
\[
\sum_{k=0}^{T-1}u'_{k}R u_k\le \snorm{R}_2\sigma^{-1}_{\min}(\Gamma_{\kappa})M^{2\kappa},
\]
where we used the fact that $\snorm{x_0}_2=1$.
For the state penalties, we can write in batch form
\[
x_{1:\kappa}\triangleq\matr{{c}x_{\kappa}\\\vdots\\x_1}=\matr{{cccc}B&AB&\cdots&A^{\kappa-1}B\\0&B&\cdots&A^{\kappa-2}B\\\vdots\\0&0&\cdots&B}u_{0:\kappa-1}+\matr{{c}A^{\kappa}\\A^{\kappa-1}\\\vdots\\A}x_0.
\]
Exploiting the Toeplitz structure of the first matrix above and by Cauchy-Schwartz
\begin{align*}
    \sum_{t=0}^{T}x'_{t}Qx_t&\le \snorm{Q}_2 (\snorm{x_{1:\kappa}}^2_2+1)\\
    &\le 2\snorm{Q}_2\big((\sum^{\kappa-1}_{t=0} \snorm{A^tB}_2)^2\snorm{u_{0:\kappa-1}}^2_2+\sum_{t=0}^{\kappa}\snorm{A^t}_2\big)\\
    &\le 2\kappa^2\snorm{Q}_2(M^{4\kappa} \snorm{R}_2\sigma^{-1}_{\min}(\Gamma_{\kappa})+M^{2\kappa}).
\end{align*}
Putting everything together and since $x_0$ is arbitrary, we finally obtain
\begin{equation}\label{CTRL_eq:P_upper_bound}
\snorm{P_0}_2\le \frac{\snorm{R}_2}{\sigma_{\min}(\Gamma_{\kappa})}(M^{2\kappa}+2\kappa^2\snorm{Q}_2 M^{4\kappa})+2\kappa^2\snorm{Q}_2 M^{2\kappa}.
\end{equation}
The result for $P_0$ now follows from Theorem~\ref{CTRL_thm:gramian_lower_bound}.

\noindent\textbf{Step c) Steady State Riccati.}
If the pair $(A,Q^{1/2})$ is observable, then from standard LQR theory-see Section~\ref{CTRL_app_sec:Riccati}, $\lim_{T\rightarrow \infty} P_0=P$ and the bound for $P$ follows directly.
\end{proof}
Similar results have been reported before~\citep{cohen2018online,chen2021black}. However, instead of $\kappa$ and $(M/\mu)^{\kappa}$, the least singular value $\sigma^{-1}_{\min}(\Gamma_{k})$ shows up in the bounds, for some $k\ge \kappa$.

Finally, based on Lemmas B.10, B.11 of~\cite{simchowitz2020naive}, we provide some upper bounds on the $\mathcal{H}_{\infty}-$norm of the closed loop response $(zI-A+BK)^{-1}$, where $K$ is the control gain of the optimal LQR controller for some $Q$ and $R$.
\begin{lemma}[LQR Robustness Margins]\label{CTRL_lem:margins}
Let the system $(A,B)\in\R^{n\times (n+p)}$ be controllable and $\mu-$robustly coupled. Let $R=I_p,\,Q=I_n$. Let $P$ be the stabilizing solution of the algebraic Riccati equation:
\[
P=A'PA+Q-A'PB(B'PB+R)^{-1}B'PA
\]
with $K_{\star}$ the respective control gain
$
K_{\star}=-(B'PB+R)^{-1}B'PA.
$
The spectral radius and the $\mathcal{H}_{\infty}$-norm of the closed loop response are upper bounded by
\begin{align}
(1-\rho(A+BK_{\star}))^{-1}&\le \poly\Big(\big(\frac{M}{\mu}\big)^{ \kappa},M^\kappa,\kappa\Big) \label{CTRL_eq:spectral_radius_margin}\\
\snorm{(zI-A-BK_{\star})^{-1}}_{\mathcal{H}_{\infty}}&\le \poly\Big(\big(\frac{M}{\mu}\big)^{ \kappa},M^\kappa,\kappa\Big)\label{CTRL_eq:hinfinity_margin}
\end{align}
\end{lemma}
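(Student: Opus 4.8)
The plan is to reduce both inequalities to a single a priori bound on the spectral norm of the Riccati matrix $P$, which we have already established: with $Q=I_n$ and $R=I_p$, Lemma~\ref{CTRL_lem:Riccati_Upper} gives $\snorm{P}_2\le L$, where $L=\poly((M/\mu)^{\kappa},M^{\kappa},\kappa)$ (here $\snorm{Q}_2=\snorm{R}_2=1$, and $(A,Q^{1/2})=(A,I)$ is observable so the algebraic equation is the relevant one). The entire exponential scaling is therefore front-loaded into that lemma, and what remains is a pair of standard Lyapunov arguments -- essentially Lemmas B.10 and B.11 of~\cite{simchowitz2020naive} specialized to our setting. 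First I would rewrite the algebraic Riccati equation in closed-loop form. Writing $A_{\mathrm{cl}}=A+BK_{\star}$ and using the definition of $K_{\star}$, the ARE is equivalent to the discrete Lyapunov identity
\[
P=A_{\mathrm{cl}}'PA_{\mathrm{cl}}+S,\qquad S\triangleq Q+K_{\star}'RK_{\star}=I_n+K_{\star}'K_{\star}\succeq I_n .
\]
In particular $P\succeq I_n$, and since $A_{\mathrm{cl}}$ is stable this Lyapunov equation may be unrolled.

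For the spectral-radius margin~\eqref{CTRL_eq:spectral_radius_margin}, unrolling gives $\sum_{t\ge 0}(A_{\mathrm{cl}}')^tSA_{\mathrm{cl}}^t=P$, and since $S\succeq I_n$ this yields $\sum_{t\ge 0}(A_{\mathrm{cl}}')^tA_{\mathrm{cl}}^t\preceq P\preceq LI_n$. Testing the associated Hermitian form against a (possibly complex) unit eigenvector $v$ of $A_{\mathrm{cl}}$ with $A_{\mathrm{cl}}v=\lambda v$, $|\lambda|=\rho(A_{\mathrm{cl}})=:\rho$, gives $\sum_{t\ge 0}|\lambda|^{2t}=\tfrac{1}{1-\rho^2}\le L$. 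Since $\rho<1$ implies $1-\rho^2\le 2(1-\rho)$, this forces $(1-\rho(A_{\mathrm{cl}}))^{-1}\le 2L$, the claimed polynomial bound.

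For the $\Hinf$ margin~\eqref{CTRL_eq:hinfinity_margin}, I would argue in the frequency domain. Fix $z$ with $|z|=1$; since $\rho(A_{\mathrm{cl}})<1$ the resolvent $R(z)=(zI-A_{\mathrm{cl}})^{-1}$ exists, and the identity $A_{\mathrm{cl}}R(z)=zR(z)-I$ holds. For a test vector $w$ set $v=R(z)w$, so $A_{\mathrm{cl}}v=zv-w$. Evaluating $v^*(P-A_{\mathrm{cl}}'PA_{\mathrm{cl}})v=v^*Sv\ge\snorm{v}_2^2$ and expanding the left-hand side using $|z|^2=1$ collapses it to $\bar z\,v^*Pw+z\,w^*Pv-w^*Pw$. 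Discarding the term $-w^*Pw\le 0$ and applying Cauchy--Schwarz gives $\snorm{v}_2^2\le 2|w^*Pv|\le 2\snorm{P}_2\snorm{w}_2\snorm{v}_2$, hence $\snorm{R(z)w}_2\le 2\snorm{P}_2\snorm{w}_2$. Taking the supremum over $\snorm{w}_2=1$ and over $|z|=1$ yields $\snorm{(zI-A_{\mathrm{cl}})^{-1}}_{\Hinf}\le 2\snorm{P}_2\le 2L$, as required.

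The only genuinely substantive ingredient is Lemma~\ref{CTRL_lem:Riccati_Upper}; the two margin estimates are routine once $\snorm{P}_2$ is controlled, so I do not expect a real obstacle here. The points needing mild care are bookkeeping ones: verifying that the ARE is indeed equivalent to the closed-loop Lyapunov identity with $S\succeq I_n$ (so that the unrolled sum dominates $\sum_t(A_{\mathrm{cl}}')^tA_{\mathrm{cl}}^t$), and handling complex $z$ and complex eigenvectors correctly inside the Hermitian quadratic forms. If one prefers to avoid reproducing these computations, both displays can instead be quoted directly from Lemmas B.10 and B.11 of~\cite{simchowitz2020naive}, whose hypotheses hold here with $Q=I_n$, $R=I_p$, after which the substitution $\snorm{P}_2\le L$ closes the argument.
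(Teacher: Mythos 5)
Your proof is correct, and its skeleton coincides with the paper's: both rewrite the ARE as the closed-loop Lyapunov identity $P=A_{\mathrm{cl}}'PA_{\mathrm{cl}}+I_n+K_{\star}'K_{\star}$, deduce $P\succeq I_n$, and funnel all exponential dependence through Lemma~\ref{CTRL_lem:Riccati_Upper}. Where you genuinely diverge is in how the two margins are extracted from that identity. The paper works in the time domain: from $A_{\mathrm{cl}}'PA_{\mathrm{cl}}=P-I_n\preceq(1-\snorm{P}_2^{-1})P$ it iterates to obtain $\snorm{A_{\mathrm{cl}}^t}_2^2\le(1-\snorm{P}_2^{-1})^t\snorm{P}_2$, reads off $\rho(A_{\mathrm{cl}})\le\sqrt{1-\snorm{P}_2^{-1}}$, and then bounds the $\Hinf$ norm by summing the powers, $\sum_{t\ge 0}\snorm{A_{\mathrm{cl}}^t}_2\le 2\snorm{P}_2^{3/2}$. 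You instead test the unrolled Gramian $\sum_{t\ge 0}(A_{\mathrm{cl}}')^tA_{\mathrm{cl}}^t\preceq P$ against a unit eigenvector to get $1/(1-\rho^2)\le\snorm{P}_2$, and for the $\Hinf$ bound you run a direct frequency-domain resolvent computation (setting $v=R(z)w$ so that $A_{\mathrm{cl}}v=zv-w$, with the $|z|^2=1$ cancellation) yielding $\snorm{R(z)}_2\le 2\snorm{P}_2$ uniformly on the unit circle. Your route is slightly sharper for the $\Hinf$ margin ($2\snorm{P}_2$ versus the paper's $2\snorm{P}_2^{3/2}$) and bypasses the geometric-decay estimate entirely, whereas the paper's iteration buys explicit exponential decay of the closed-loop powers as a byproduct; since the lemma only asserts $\poly$ bounds, either suffices. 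Both of your computations check out, including the points you flag: the complexified Hermitian forms of the real symmetric $P$ and $S$ obey the same spectral bounds, the unrolling is legitimate because $P$ is the stabilizing solution, and your fallback of quoting Lemmas B.10--B.11 of~\cite{simchowitz2020naive} matches the paper's own attribution for this result.
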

\begin{proof}
First, note that since $Q=I$, immediately $(A,Q^{1/2})$ is observable and the stabilizing solution $P$ is well-defined. Note that the Riccati solution $P$ also satisfies the Lyapunov equation
\[
P=(A+BK_{\star})'P(A+BK_{\star})+I+K_{\star}'K_{\star}\succeq (A+BK_{\star})'P(A+BK_{\star})+I\succeq I.
\]
As a result,
\begin{equation}\label{CTRL_eq:Riccati_Inequality}
(A+BK_{\star})'(A+BK_{\star})\stackrel{i)}{\preceq} (A+BK_{\star})'P(A+BK_{\star})=P-I \stackrel{ii)}{\preceq} (1-\snorm{P}^{-1}_2)P,
\end{equation}
where i) follows from $P\succeq I$. To prove ii) observe that $P-I=P^{1/2}(I-P^{-1})P^{1/2}$ and $P^{-1}\succeq \snorm{P}^{-1}_2 I$.
Hence
\[
P-I\succeq P^{1/2}(I-\snorm{P}^{-1}_2 I)P^{1/2}=(1-\snorm{P}^{-1}_2)P.
\]
Applying inequality~\eqref{CTRL_eq:Riccati_Inequality} recursively
\[
(A+BK_{\star})^{t'}(A+BK_{\star})^t=\snorm{(A+BK_{\star})^t}^2_2 \le \big(1-\snorm{P}^{-1}_2\big )^{t} P.
\]
From here, we immediately deduce that
\[
\rho(A+BK_{\star})\le \sqrt{1-\snorm{P}^{-1}_2},
\]
which by Lemma~\ref{CTRL_lem:Riccati_Upper} proves~\eqref{CTRL_eq:spectral_radius_margin}.
For the $\mathcal{H}_{\infty}$ norm bound
\begin{align*}
\snorm{(zI-A-BK_{\star})^{-1}}_{\mathcal{H}_{\infty}}&\le \sum_{t\ge 0} \snorm{(A+BK_{\star})^t}_2\le \snorm{P}_2^{1/2}\frac{1}{1-\sqrt{1-\snorm{P}^{-1}_2}}\\
&\le \snorm{P}_2^{1/2}\frac{1+\sqrt{1-\snorm{P}^{-1}_2}}{\snorm{P}^{-1}_2}\le 2\snorm{P}_2^{3/2}.
\end{align*}
The proof of~\eqref{CTRL_eq:hinfinity_margin} now follows from Lemma~\ref{CTRL_lem:Riccati_Upper}.
\end{proof}

\section{Lower Bounds for the problem of Stabilization}\label{CTRL_app_sec:STAB_lower_bounds}
In this section, we prove Theorem~\ref{CTRL_thm:STAB_lower_exponential} by using information theoretic methods.
The main idea is to find systems that are nearly indistinguishable from data but require completely different stabilization schemes. 
We rely on Birgé's inequality~\citep{boucheron2013concentration}, which we review below for convenience. 

\begin{definition}[KL divergence]
Let $\P$, $\mathbb{Q}$ be two probability measures on some space $(\Omega,\mathcal{A})$. Let $\mathbb{Q}$ be absolutely continuous with respect to $\mathbb{P}$, that is $\mathbb{Q}(A)=\E_{\mathbb{P}}(Y 1_{A})$ for some integrable non-negative random variable with $\E_{\mathbb{P}}(Y)=1.$ The KL divergence $D(\mathbb{Q}||\P)$ is given by
\[
D(\mathbb{Q}||\P)\triangleq \E_{\mathbb{Q}}(\log Y).
\]
\end{definition}

\begin{theorem}[Birgé's Inequality~\citep{boucheron2013concentration}]\label{CTRL_thm:Birge}
Let $\P_0,\,\P_1$ be probability measures on $(\Omega,\mathcal{E})$ and let $E_0,\,E_1\in\mathcal{E}$ be disjoint events. If $1-\delta\triangleq \min_{i=0,1}\P_i(E_i)\ge 1/2$ then
\[
(1-\delta)\log\frac{1-\delta}{\delta}+\delta \log \frac{\delta}{1-\delta}\le D(\P_1||\P_0).
\]
\end{theorem}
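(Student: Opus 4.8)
The plan is to reduce the statement to the two-point (Bernoulli) case via the data-processing inequality for relative entropy, and then to finish with an elementary monotonicity argument. The key preliminary observation is that the right-hand side of the claimed bound is itself a binary KL divergence: writing $d(p\|q)\triangleq p\log\tfrac{p}{q}+(1-p)\log\tfrac{1-p}{1-q}$ for the relative entropy between two Bernoulli laws, we have $(1-\delta)\log\tfrac{1-\delta}{\delta}+\delta\log\tfrac{\delta}{1-\delta}=d(1-\delta\|\delta)$. So it suffices to prove $D(\P_1\|\P_0)\ge d(1-\delta\|\delta)$.

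First I would coarsen the sample space through the test event $E_1$. Consider the indicator map $\mathbf{1}_{E_1}\colon\Omega\to\{0,1\}$; pushing $\P_i$ forward through this map yields the Bernoulli law with parameter $\P_i(E_1)$. Since relative entropy cannot increase under a deterministic measurable map — the data-processing inequality — we obtain $D(\P_1\|\P_0)\ge d\big(\P_1(E_1)\,\big\|\,\P_0(E_1)\big)$. If one prefers not to invoke data processing as a black box, this is precisely the statement that $D$ can only decrease when restricted to the sub-$\sigma$-algebra $\sigma(E_1)=\{\emptyset,E_1,E_1^c,\Omega\}$, which follows from the log-sum inequality applied to the Radon--Nikodym density $Y=d\P_1/d\P_0$ integrated separately over $E_1$ and $E_1^c$.

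Next I would use the hypotheses to locate the two Bernoulli parameters. By definition $\P_1(E_1)\ge\min_{i}\P_i(E_i)=1-\delta$. Disjointness gives $E_1\subseteq E_0^c$, hence $\P_0(E_1)\le 1-\P_0(E_0)\le 1-(1-\delta)=\delta$. Writing $p=\P_1(E_1)$ and $q=\P_0(E_1)$, the assumption $1-\delta\ge 1/2$ yields the ordering $p\ge 1-\delta\ge 1/2\ge\delta\ge q$. It then remains to verify the monotonicity of $d$ on this range: differentiating gives $\partial_p d(p\|q)=\log\tfrac{p(1-q)}{q(1-p)}\ge 0$ whenever $p\ge q$, and $\partial_q d(p\|q)=\tfrac{q-p}{q(1-q)}\le 0$ whenever $q\le p$. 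Hence $d$ is nondecreasing in its first slot and nonincreasing in its second, so $d(p\|q)\ge d(1-\delta\|q)\ge d(1-\delta\|\delta)$, which is the desired bound.

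The genuinely substantive step is the coarsening inequality of the second paragraph; everything afterwards is bookkeeping and one-variable calculus. The only delicacies I anticipate are (i) making the data-processing / log-sum step rigorous under the paper's Radon--Nikodym formulation of $D$, in particular ensuring $\P_1\ll\P_0$ so that $Y$ exists, and (ii) the boundary cases $q=\P_0(E_1)=0$ or $p=\P_1(E_1)=1$, where $d(p\|q)=+\infty$ and the inequality holds vacuously; these should be dispatched by a short separate remark rather than through the derivative computation.
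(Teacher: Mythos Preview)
Your proof is correct and is essentially the standard argument for the two-hypothesis Birg\'e inequality: coarsen to the Bernoulli law on $\{E_1,E_1^c\}$ via data processing, then use monotonicity of the binary KL $d(p\|q)$ in each argument on the range $p\ge 1-\delta\ge 1/2\ge\delta\ge q$. The paper itself does not prove this statement at all; it is quoted verbatim from \cite{boucheron2013concentration} as a tool, so there is no ``paper's own proof'' to compare against. Your argument is in fact the same one given in that reference, so nothing further is needed.
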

The KL divergence between two Gaussian distributions with same variance is given below.
\begin{lemma}[Gaussian KL divergence]\label{CTRL_app_lem:Gaussian_KL}
Let $\P=\mathcal{N}(\mu_1,\sigma^2)$ and $\mathbb{Q}=\mathcal{N}(\mu_2,\sigma^2)$ then
\[
D(\mathbb{Q}||\P)=\frac{1}{2\sigma^2}(\mu_1-\mu_2)^2.
\]
\end{lemma}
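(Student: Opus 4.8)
The plan is to compute the divergence directly from its definition $D(\mathbb{Q}\|\P)=\E_{\mathbb{Q}}(\log Y)$, where $Y=d\mathbb{Q}/d\P$ is the likelihood ratio of the two densities. First I would write down the Gaussian densities $p(x)=(2\pi\sigma^2)^{-1/2}\exp(-(x-\mu_1)^2/(2\sigma^2))$ and $q(x)=(2\pi\sigma^2)^{-1/2}\exp(-(x-\mu_2)^2/(2\sigma^2))$. Since both share the same variance $\sigma^2$, the normalizing constants cancel in the ratio, leaving $Y(x)=q(x)/p(x)=\exp\big(((x-\mu_1)^2-(x-\mu_2)^2)/(2\sigma^2)\big)$.

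Taking the logarithm gives $\log Y(x)=((x-\mu_1)^2-(x-\mu_2)^2)/(2\sigma^2)$, and expanding the two squares cancels the quadratic-in-$x$ terms, yielding a quantity that is \emph{affine} in $x$: $\log Y(x)=(2(\mu_2-\mu_1)x+\mu_1^2-\mu_2^2)/(2\sigma^2)$. This linearity is exactly what makes the equal-variance case clean, since there is no $x^2$ term left to integrate against a second moment.

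Next I would take the expectation with respect to $\mathbb{Q}=\mathcal{N}(\mu_2,\sigma^2)$, using only the first moment $\E_{\mathbb{Q}}[x]=\mu_2$. Substituting this in gives the numerator $2(\mu_2-\mu_1)\mu_2+\mu_1^2-\mu_2^2$, which collects into the perfect square $(\mu_1-\mu_2)^2$. Dividing by $2\sigma^2$ then yields the claimed identity.

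There is no genuine obstacle here; the entire argument is an elementary moment computation, and the only point requiring care is the cancellation of the quadratic term, which relies crucially on the two variances being equal. For unequal variances the $x^2$ term would survive and one would pick up additional log-variance and second-moment contributions, but those do not arise under the present hypotheses.
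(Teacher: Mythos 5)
Your computation is correct and complete: the likelihood ratio $Y = d\mathbb{Q}/d\P$ is exactly the ratio of densities, the quadratic terms cancel as you note, and taking $\E_{\mathbb{Q}}$ of the resulting affine function yields $(\mu_1-\mu_2)^2/(2\sigma^2)$. The paper states this lemma without proof as a standard fact, and your direct calculation is precisely the canonical argument it implicitly relies on, so there is nothing to add.
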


\subsection{Proof of Theorem~\ref{CTRL_thm:STAB_lower_exponential}}
It is sufficient to prove it for $\kappa=n$. The proof for $\kappa<n$ is similar. Let $\alpha>0$ be such that $\alpha+\mu<1$. Consider the systems:
\[
S_1:\quad 	x_{k+1}=\matr{{ccccc}1 &\mu&0&\cdots&0\\0& \alpha&\mu&\cdots&0\\& &\ddots &\ddots&\\0&0&0&\cdots&\mu\\0&0&0&\cdots&\alpha}x_k+\matr{{c}0\\0\\\vdots\\0\\\mu}u_k+ \matr{{c}1\\0\\\vdots\\0\\0}w_k,
\]
\[
S_2:\quad x_{k+1}=\matr{{ccccc}1 &-\mu&0&\cdots&0\\0& \alpha&\mu&\cdots&0\\& &\ddots &\ddots&\\0&0&0&\cdots&\mu\\0&0&0&\cdots&\alpha}x_k+\matr{{c}0\\0\\\vdots\\0\\\mu}u_k+ \matr{{c}1\\0\\\vdots\\0\\0}w_k.
\]
By construction, the systems are $\mu-$robustly coupled.
Denote the state matrices by $A_1,A_2$ for $S_1,S_2$ respectively.
Let $\phi_1(z)=\det(zI-A_1-B\hat{K}_N)$, $\phi_2(z)=\det(zI-A_2-B\hat{K}_N)$ be the respective characteristic polynomials. By Jury's criterion~\citep[Ch. 4.5]{fadali2013digital}, a necessary (but not sufficient) condition for stability is:
\[
\phi_1(1)>0,\,\phi_2(1)>0.
\]
An direct computation gives:
\[
\phi_1(1)=\abs{\begin{array}{ccccc}0 &-\mu&0&\cdots&0\\0& 1-\alpha&-\mu&\cdots&0\\& &\ddots &\ddots&\\0&0&0&\cdots&-\mu\\-\hat{K}_{N,1}&-\hat{K}_{N,2}&-\hat{K}_{N,3}&\cdots&1-\alpha-\hat{K}_{N,n} \end{array}}=-\hat{K}_{N,1}\mu^{n-1},\, \phi_2(1)=\hat{K}_{N,1}\mu^{n-1}.
\]
As a result, the events:
\[
E_1=\set{\rho(A_1+B\hat{K}_{N})<1}\subseteq\set{\hat{K}_{N,1}<0},\quad E_2=\set{\rho(A_2+B\hat{K}_{N})<1}\subseteq\set{\hat{K}_{N,1}>0}
\]
are disjoint.  By Theorem~\ref{CTRL_thm:Birge}, a necessary condition for stabilizing both systems with probability larger than $1-\delta$ is:
\begin{equation}\label{CTRL_eq:STAB_necessary}
D(\mathbb{P}_{1}||\mathbb{P}_{2})\ge (1-2\delta)\log\frac{1-\delta}{\delta}\ge \log\frac{1}{2.4 \delta}\ge \log\frac{1}{3 \delta}.
\end{equation}
Here $\mathbb{P}_i$ is a shorthand notation for $\P_{S_i,\pi}$, for $i=1,2$.

Meanwhile, by the chain rule of KL divergence (see Exercise 4.4 in~\cite{boucheron2013concentration}):
\begin{align*}
D(\mathbb{P}_1||\mathbb{P}_2)&=\mathbb{E}_{\mathbb{P}_1}\Big( D(\mathbb{P}_1(\AUX)||\mathbb{P}_2(\AUX))\\&+\sum^{N}_{k=0}D(\mathbb{P}_1(x_k|x_{0:k-1},u_{0:k-1},\AUX)||\mathbb{P}_2(x_k|x_{0:k-1},u_{0:k-1},\AUX))\\
&+\sum^{N-1}_{k=0}D(\mathbb{P}_1(u_k|x_{0:k},u_{0:k-1},\AUX)||\mathbb{P}_2(u_k|x_{0:k},u_{0:k-1},\AUX)\Big),
\end{align*}
where $x_{0:k}$ is a shorthand notation for $x_{0},\dots,x_k$ (same for $u_{0:k}$). By $\mathbb{P}(X|Y)$ we denote the conditional distribution of $X$ given $Y$. Note that the inputs have the same conditional distributions under both measures hence their KL divergence is zero. As a result
\begin{align*}
D(\mathbb{P}_1||\mathbb{P}_2)&=\mathbb{E}_{\mathbb{P}_1}\sum^{N}_{k=0}D(\mathbb{P}_1(x_k|x_{0:k-1},u_{0:k-1},\AUX)||\mathbb{P}_2(x_k|x_{0:k-1},u_{0:k-1},\AUX))\\
&\stackrel{1)}{=}\mathbb{E}_{\mathbb{P}_1}
\sum^{N}_{k=0}D(\mathbb{P}_1(x_{k}|x_{k-1},u_{k-1})||\mathbb{P}_2(x_k|x_{k-1},u_{k-1})\\
&\stackrel{2)}{=}\mathbb{E}_{\mathbb{P}_1}
\sum^{N}_{k=0}D(\mathbb{P}_1(x_{k,1}|x_{k-1,1},x_{k-1,2})||\mathbb{P}_2(x_{k,1}|x_{k-1,1},x_{k-1,2})
\Big),
\end{align*}
where $1)$ follows from the Markov property of the linear system and 2) follows from an application of the chain rule, the structure of the dynamics, and the fact that all $x_{k,j}$ have the same distribution for $j\ge 2$. Recall that the normal distribution is denoted by $\mathcal{N}(\mu,\Sigma)$.
Now we can explicitly compute the KL divergence:
\begin{align}
D(\mathbb{P}_1||\mathbb{P}_2)&=\mathbb{E}_{\mathbb{P}_1}\sum^{N}_{k=1}D(\mathcal{N}(\alpha x_{k-1,1}+\mu x_{k-1,2},1)||\mathcal{N}(\alpha x_{k-1,1}-\mu x_{k-1,2},1))\nonumber\\
&\stackrel{i)}{=}\mathbb{E}_{\mathbb{P}_1}\sum_{k=1}^{N}2\mu^2 x^2_{k-1,2}=2\mu^2 \sum_{k=1}^{N}\mathbb{E}_{\mathbb{P}_1} x^2_{k-1,2}\label{CTRL_eq:KL_explicit_STAB},
\end{align}
where $i)$ follows by Lemma~\ref{CTRL_app_lem:Gaussian_KL}.
By~\eqref{CTRL_eq:STAB_necessary},~\eqref{CTRL_eq:KL_explicit_STAB}, and Lemma~\ref{CTRL_lem:S1_Gramian_like_bound_STAB}, it is necessary to have
\[
N\sigma^2_u  \ge \frac{1}{2}\paren{\frac{1}{\alpha+\mu}}^{2n-2}\paren{\frac{1-a-\mu}{\mu}}^{2}\log\frac{1}{3 \delta}
\]
Since we are free to choose $\alpha$, it is sufficient to choose $\alpha=0$. \hfill $\blacksquare$

\begin{lemma}\label{CTRL_lem:S1_Gramian_like_bound_STAB}
Consider system $S_1$ as defined above. Recall that $\P_1$ is a shorthand notation for $\P_{S_1,\pi}$. Then, under Assumption~\ref{CTRL_ass:input_budget}, we have
\[
\mathbb{E}_{\P_1} x^2_{k,2}\le \sigma^2_u (\alpha+\mu)^{2n-2} \paren{\frac{1}{1-(a+\mu)}}^2
\]
\end{lemma}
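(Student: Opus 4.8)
The plan is to exploit the fact that the lower-right $(n-1)\times(n-1)$ block of $A_1$ forms a subsystem that is completely decoupled from the first coordinate and, crucially, from the process noise $w_k$, which enters only through the first coordinate. Writing $y_k \triangleq (x_{k,2},\dots,x_{k,n})' \in \R^{n-1}$, the dynamics restricted to these coordinates read $y_{k+1}=\bar A y_k+\bar B u_k$ with $y_0=0$, where $\bar A=\alpha I+\mu N$ (with $N$ the nilpotent upper-shift carrying a unit superdiagonal) and $\bar B=\mu e_{n-1}$. Since $x_{k,2}=e_1'y_k$, the quantity we must control is a purely input-driven, noise-free linear functional of the state.

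First I would unroll the recursion to obtain $x_{k,2}=\sum_{j=0}^{k-1} c_{k-1-j}\,u_j$ with impulse-response coefficients $c_t \triangleq e_1'\bar A^t\bar B=\mu\,[\bar A^t]_{1,n-1}$. Two structural facts about $\bar A$ drive the estimate. Because $\bar A$ is upper-triangular and bidiagonal, its $t$-th power has a nonzero $(1,n-1)$ entry only once $t\ge n-2$, so $c_t=0$ for $t<n-2$; and the binomial expansion $\bar A^t=\sum_{s=0}^{t}\binom{t}{s}\alpha^{t-s}\mu^s N^s$ together with $\snorm{N^s}_2\le 1$ gives $\snorm{\bar A^t}_2\le(\alpha+\mu)^t$, hence $\abs{c_t}\le\mu(\alpha+\mu)^t$ for every $t\ge n-2$ (using $\alpha,\mu\ge 0$).

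The main obstacle is that the inputs $u_j$ are chosen adaptively by the exploration policy and may be arbitrarily correlated across time and with the noise, so I cannot compute $\E(\sum_j c\,u_j)^2$ as a clean sum of variances. To sidestep this I would apply a weighted Cauchy–Schwarz inequality pathwise, $x_{k,2}^2=\paren{\sum_j c_{k-1-j}u_j}^2\le\paren{\sum_j\abs{c_{k-1-j}}}\paren{\sum_j\abs{c_{k-1-j}}\,u_j^2}$, and only afterwards take expectations. This reduces everything to the marginal second moments, which Assumption~\ref{CTRL_ass:input_budget} bounds by $\E_{\P_1}u_j^2\le\sigma_u^2$, yielding $\E_{\P_1}x_{k,2}^2\le\sigma_u^2\paren{\sum_{t\ge0}\abs{c_t}}^2$ since the leading factor is deterministic.

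It then remains to sum a geometric series. Using $c_t=0$ for $t<n-2$, the bound $\abs{c_t}\le\mu(\alpha+\mu)^t$, and the hypothesis $\alpha+\mu<1$ that guarantees convergence, I obtain $\sum_{t\ge0}\abs{c_t}\le\mu\sum_{t\ge n-2}(\alpha+\mu)^t=\frac{\mu(\alpha+\mu)^{n-2}}{1-(\alpha+\mu)}\le\frac{(\alpha+\mu)^{n-1}}{1-(\alpha+\mu)}$, where the last inequality uses $\mu\le\alpha+\mu$. Squaring and combining with the previous display produces exactly $\E_{\P_1}x_{k,2}^2\le\sigma_u^2(\alpha+\mu)^{2n-2}\paren{\frac{1}{1-(\alpha+\mu)}}^2$, which is the claim.
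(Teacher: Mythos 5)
Your proposal is correct and follows essentially the same route as the paper's proof: unroll $x_{k,2}$ as a noise-free convolution of the inputs with the impulse response of the bidiagonal subsystem, note the coefficients vanish for small lags, bound them by powers of $\alpha+\mu$ via the triangle/binomial argument, use Cauchy--Schwarz to reduce the adaptively correlated inputs to their marginal second moments under Assumption~\ref{CTRL_ass:input_budget}, and sum the geometric series. Your pathwise weighted Cauchy--Schwarz is an equivalent variant of the paper's term-by-term bound $\E\abs{u_ku_t}\le\sigma_u^2$, and by retaining the factor $\mu\le\alpha+\mu$ in $\abs{c_t}\le\mu(\alpha+\mu)^t$ you in fact land exactly on the stated exponent $(\alpha+\mu)^{2n-2}$.
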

\begin{proof}
Let $e_2$ denote the canonical vector $e_2=\matr{{ccccc}0&1&0&\cdots&0}'$. Then
\[
 x_{k,2}=\sum_{t=1}^{k}e'_2A^{t-1}Bu_{k-t}=\sum_{t=n-1}^{k}e'_2A^{t-1}Bu_{k-t},
\]
where the second equality follows from the fact that $e'_2A^{t-1}B$, for $t\le n-1$. Moreover, we can upper bound:
\[
\abs{e'_2A^{t-1}B}\le (\alpha+\mu)^{t-1},
\]
which follows from the fact that the sub-matrix $[A_1]_{2:n,2:n}$ of $A_1$ if we delete the first row and column is bi-diagonal and Toeplitz hence $\snorm{[A_1]_{2:n,2:n}}_2\le \alpha+\mu$. Define $c_t\triangleq (\alpha+\mu)^{t-1}$. Then, we can upper bound $\abs{x_{k,2}}$ by
\[
\abs{x_{k,2}}\le \sum_{t=n-1}^{k}c_t \abs{u_{k-t}}.
\]
By Cauchy-Schwartz and Assumption~\ref{CTRL_ass:input_budget}
\[
\E_{S_1,\pi} u^2_{k}\le \sigma^2_u,\quad \E_{S_1,\pi} \abs{u_{k}u_t} \le \sigma^2_u.
\]
Finally, combining the above results
\[
\mathbb{E}_{S_1,\pi} x^2_{k,2}\le \sigma^2_u (\sum_{t=n-1}^k c_t)^2 \le \sigma^2_u (\alpha+\mu)^{2n-2} \paren{\frac{1}{1-(a+\mu)}}^2,
\]
which completes the proof.
\end{proof}

\section{Upper Bounds for the problem of Stabilization}\label{CTRL_app_sec:STAB_upper_bounds}
We employ a naive passive learning algorithm, where we employ a white-noise exploration policy to excite the state. Our gain design proceeds in two parts. First, we perform system identification based on least squares~\citep{simchowitz2018learning}. Second, we use robust control to design the gain  based on the identified model and bounds on the identification error of $A$ and $B$, similar to~\cite{dean2017sample}. 

\subsection{Algorithm}
 \begin{figure}
			\centerline{					\begin{tikzpicture}[auto,>=latex']
				\node [block] (plant) [draw, align=center] {White Noise\\Experiments};
				\node [block, right=2.5cm of plant] (SID) [draw, align=center] {System\\Identification};
				\node [block, right=1.8cm of SID] (controller) [draw, align=center] {Controller\\Design};
				\coordinate [right=1.5cm of controller] (help_1) {};   
				\draw [->] (plant) -- node[name=y,above] {$x_0,\dots,x_N$} node[name=y,below] {$u_0,\dots,u_{N-1}$}(SID);
				\draw [->] (SID) -- node[name=y,above] { $\hat{A}_N$, $\hat{B}_N$} node[name=y,below] { $\epsilon_A$, $\epsilon_B$} (controller);
				\draw [->] (controller) --
				node[name=y,above] {$\hat{K}_N$}
				(help_1);	
				\end{tikzpicture}
			}
			\caption{The block diagram of the stabilization scheme. First, we generate white noise inputs $u_t\sim\mathcal{N}(0,\bar{\sigma}^2_uI)$ to excite the system. Then we perform system identification based on least squares to obtain estimates $\hat{A}_N,\hat{B}_N$ of the true system matrices. Finally, we design a controller gain $\hat{K}_N$, based on the system estimates and upper bounds $\epsilon_A,\epsilon_B$ on the estimation error.}
			\label{CTRL_fig:stabilization_architecture}
			\end{figure}
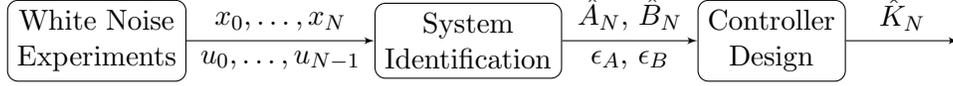
The block diagram for the algorithm is shown in~Fig.~\ref{CTRL_fig:stabilization_architecture}. To generate the input data $u_0,\dots,u_{N-1}$, we employ white noise inputs $u_k\sim\mathcal{N}(0,\bar{\sigma}^2_u I)$, $\bar{\sigma}^2_u=\sigma^2_u/p$, where we normalize with $p$ in order to satisfy Assumption~\ref{CTRL_ass:input_budget}. For the system identification part, we use a least squares algorithm
\begin{equation}\label{CTRL_eq:least_squares}
\matr{{cc}\hat{A}_N&\hat{B}_N}=\arg\min_{\set{F\in\R^{n\times n},G\in \R^{n\times p}}} \sum_{t=0}^{N-1}\snorm{x_{t+1}-Fx_t-Gu_t}^2_2,
\end{equation}
to obtain estimates of the matrices $A\,,B$. Now, let $\epsilon_A,\,\epsilon_B$ be large enough constants such that $\snorm{A-\hat{A}_N}_2\le \epsilon_A$, $\snorm{B-\hat{B}_N}_2\le \epsilon_B$. To design the controller gain $\hat{K}_N$, it is sufficient to solve the following problem
\begin{equation}
\begin{aligned}\label{CTRL_eq:stabilization_scheme}
    \mathrm{find}\:&\quad  {K\in\R^{p\times n}}\\
    \mathrm{s.t.}
    &\quad \norm{\matr{{c}\sqrt{2}\epsilon_A (zI-\hat{A}_N-\hat{B}_NK)^{-1}\\\sqrt{2}\epsilon_B K(zI-\hat{A}_N-\hat{B}_NK)^{-1}}}_{\mathcal{H}_{\infty}}<1.
\end{aligned}
\end{equation}
The idea behind the scheme is the following. Let $\hat{K}_N$ be a gain that stabilizes the estimated plant $(\hat{A}_N,\hat{B}_N)$. To make sure that it also stabilizes the nominal plant $(A,B)$ we impose some additional robustness conditions. In fact, as we show in Theorem~\ref{CTRL_thm:feasibility_STAB}, any feasible gain of problem~\eqref{CTRL_eq:stabilization_scheme} will stabilize any plant $(\hat A,\hat B)$ that satisfies $\snorm{\hat A-\hat{A}_N}_2\le \epsilon_A$, $\snorm{\hat B-\hat{B}_N}_2\le \epsilon_B$, including the nominal one.
In this work, we do not study how to efficiently solve~\eqref{CTRL_eq:stabilization_scheme}. For efficient implementations one can refer to~\cite{dean2017sample}. Note that the certainty equivalent LQR design~\citep{mania2019certainty} or the SDP relaxation method~\citep{cohen2018online,chen2021black} could also work as stabilization schemes.

\subsection{System Identification Analysis}
Here we review a fundamental system identification result from~\cite{simchowitz2018learning}. The original proof can be easily adapted to the case of singular noise matrices $H$~\citep{tsiamis2021linear}.
\begin{theorem}[Identification Sample Complexity]\label{CTRL_thm:identification}
Consider a system $S=(A,B,H)$ such that Assumption~\ref{CTRL_ass:general_setting} is satisfied. Let $(A,B)$ be controllable with $\Gamma_{k}=\Gamma_k(A,B)$ the respective controllability Gramian and $\kappa=\kappa(A,B)$ the respective controllability index.  Then, under the least squares system identification algorithm~\eqref{CTRL_eq:least_squares} and white noise inputs $u_k~\sim\N(0,\bar{\sigma}^2_u I_p)$, we obtain
\begin{align}
		\P_{S,\pi}(\snorm{\matr{{cc}A-\hat{A}_N&B-\hat{B}_N}}_2\ge \epsilon)\le \delta \nonumber
	\end{align}
if we have a large enough sample size
	\[
	N\bar{\sigma}^2_u\ge    \frac{\poly(n,\log1/\delta,M)}{\epsilon^2\sigma_{\min}(\Gamma_{\kappa})}\log N.
	\]
\end{theorem}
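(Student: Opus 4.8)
The plan is to recast~\eqref{CTRL_eq:least_squares} as a single multivariate linear regression and then split the estimation error into an excitation term and a noise term, controlling each separately. Stacking the regressors as $z_t\triangleq[x_t'\;\; u_t']'\in\R^{n+p}$ and writing $\Theta\triangleq[A\;\; B]$, the dynamics become $x_{t+1}=\Theta z_t+Hw_t$, and the least squares solution satisfies
\begin{equation*}
\hat\Theta_N-\Theta=\Big(\sum_{t=0}^{N-1}Hw_tz_t'\Big)\Big(\sum_{t=0}^{N-1}z_tz_t'\Big)^{-1}.
\end{equation*}
Thus the whole argument reduces to (i) a persistency-of-excitation lower bound on $\sum_t z_tz_t'$ and (ii) a self-normalized upper bound on the martingale cross-term $\sum_t Hw_tz_t'$.

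For the excitation lower bound I would invoke the block-martingale small-ball (BMSB) framework of \cite{simchowitz2018learning}. The key observation is that the input is isotropic white noise with covariance $\bar\sigma_u^2 I_p$, so over any window of length $\kappa$ its contribution to the state covariance is exactly $\bar\sigma_u^2\,\Gamma_\kappa(A,B)$; since $x_t$ is independent of $u_t$, the joint covariance of $z_t$ is block diagonal and its binding (smallest) direction is governed by $\bar\sigma_u^2\sigma_{\min}(\Gamma_\kappa)$. A block-wise small-ball argument, with block length tied to the controllability index $\kappa$, then upgrades this one-step estimate into the high-probability bound $\sum_t z_tz_t'\succeq c\,N\bar\sigma_u^2\sigma_{\min}(\Gamma_\kappa)\,I$ once $N\gtrsim\poly(n,\log 1/\delta)$. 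This is precisely where $\sigma_{\min}(\Gamma_\kappa)$ enters the final sample complexity.

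The noise term I would handle with a matrix self-normalized martingale tail bound. Because $\{w_t\}$ is Gaussian and the input is exogenous, $w_t$ is conditionally independent of the past regressors, so $\sum_t Hw_tz_t'$ is a martingale whose self-normalized norm concentrates: with probability $1-\delta$,
\begin{equation*}
\Big\|\Big(\sum_t z_tz_t'\Big)^{-1/2}\sum_t z_tw_t'\Big\|_2\lesssim\sqrt{(n+p)\log\tfrac{1}{\delta}+\log\det\Big(\sum_t z_tz_t'\Big)}.
\end{equation*}
Combining this with the excitation lower bound, the crude norm bounds $\snorm{H}_2\le M$ and $\snorm{z_t}_2$ controlled through $\snorm{A}_2,\snorm{B}_2\le M$ and $\rho(A)\le1$ from Assumption~\ref{CTRL_ass:general_setting}, and absorbing the logarithmic determinant into a $\log N$ factor, yields $\snorm{\hat\Theta_N-\Theta}_2\lesssim\poly(n,\log 1/\delta,M)\sqrt{\log N}/\sqrt{N\bar\sigma_u^2\sigma_{\min}(\Gamma_\kappa)}$; setting the right-hand side equal to $\epsilon$ and rearranging gives the stated sample size.

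The hard part will be the singular-noise adaptation. The original proof of \cite{simchowitz2018learning} establishes the small-ball lower bound using full-rank (isotropic) process noise to guarantee excitation in all $n$ directions; when $H$ has rank $r<n$ this argument breaks, since the noise alone may excite only an $r$-dimensional subspace. Following \cite{tsiamis2021linear}, the fix is to route all excitation through the input channel: because $B$ has full column rank and $u_t$ is isotropic, the controllability Gramian $\Gamma_\kappa$ already certifies excitation in every direction, so one verifies the BMSB inequality against the input-driven part of the state rather than the noise-driven part. Once this replacement is justified, the remainder of the self-normalized analysis carries over unchanged.
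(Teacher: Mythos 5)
Your proposal is correct and takes essentially the same route as the paper: the paper likewise verifies the small-ball (BMSB) condition of \cite{simchowitz2018learning} with block length tied to $\kappa$ and a block-diagonal small-ball matrix built from the combined input/noise Gramian $\Gamma_\kappa(A,[\bar{\sigma}_u B\;\; H])=\bar{\sigma}^2_u\Gamma_\kappa+\Gamma_\kappa(A,H)$, handling singular $H$ exactly as you describe by lower-bounding via the input channel, $\Gamma^c_\kappa\succeq\bar{\sigma}^2_u\Gamma_\kappa$, following \cite{tsiamis2021linear}. The only cosmetic difference is that you re-derive the two ingredients (excitation lower bound plus self-normalized martingale bound) separately, whereas the paper invokes the packaged Theorem~2.4 of \cite{simchowitz2018learning} together with a Markov-inequality upper bound on the Gram matrix to absorb the $\log\det$ term into $\poly(n,M,\log 1/\delta)\log N$.
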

\begin{proof}
The proof is almost identical to the one of Theorem~4 in~\cite{tsiamis2021linear}. The difference is that here we consider only the Gramian and index of $(A,B)$ in the final bound, while in~\cite{tsiamis2021linear} the Gramian and index of $(A\matr{{cc}H&B})$ appears. We repeat the proof here to avoid notation ambiguity. Our goal is to apply Theorem~2.4 in~\citep{simchowitz2018learning}. Define the noise-controllability Gramian $\Gamma^h_{t}=\Gamma_t(A,H)$ as well as the combined controllability Gramian
\[
\Gamma^c_t=\Gamma_t(A,\matr{{cc}\bar{\sigma}_u B&H})=\bar{\sigma}^2_u\Gamma_{t}+\Gamma^h_{t}.
\]
Define $y_{k}=\matr{{cc}x'_k&u'_k}'$. It follows that for all $j\ge 0$ and all unit vectors $v\in\R^{(n+p)\times 1},$ the following small-ball condition is satisfied:
\begin{equation}\label{CTRL_eq:small_ball}
    \frac{1}{2\kappa}\sum_{t=0}^{2\kappa}\P(\abs{v'y_{t+j}}\ge \sqrt{v'\Gamma_{\mathrm{sb}}v} |\bar{\F}_j)\ge \frac{3}{20},
\end{equation}
where
\begin{equation}\label{CTRL_eq:small_ball_covariance}
    \Gamma_{\mathrm{sb}}=\matr{{cc}\Gamma^c_{\kappa}&0\\0&\bar{\sigma}^2_uI_p}.
\end{equation}
Equation~\eqref{CTRL_eq:small_ball} follows from the same steps as in Proposition~3.1 in~\cite{simchowitz2018learning} with the choice $k=2\kappa$.

Next, we determine an upper bound $\bar{\Gamma}$ for the gram matrix $\sum_{t=0}^{N-1}y_ty'_t$. Using a Markov inequality argument as in~\cite[proof of Th 2.1]{simchowitz2018learning}, we obtain that
\[
\P(\sum_{t=0}^{N-1}y_ty'_t\preceq \bar{\Gamma})\ge 1-\delta,
\]
where 
\[\bar{\Gamma}=\frac{n+p}{\delta}N\matr{{cc}\Gamma^c_N&0\\0&\bar{\sigma}^2_u I_p}.\]

Now, we can apply Theorem 2.4 of~\cite{simchowitz2018learning}. With probability at least $1-3\delta$ we have $\snorm{\matr{{cc}A-\hat{A}_N&B-\hat{B}_N}}_2\le \epsilon$ if:
\begin{align*}
   N&\ge \frac{\poly(n,\log1/\delta,M)}{\epsilon^2\sigma_{\min}(\Gamma^c_{\kappa})}\log\det (\bar{\Gamma}\Gamma^{-1}_{\mathrm{sb}}),
\end{align*}
where we have simplified the expression by including terms in the polynomial term.
Using Lemma~1 in~\cite{tsiamis2021linear}, we obtain
\[
\log\det (\bar{\Gamma}\Gamma^{-1}_{\mathrm{sb}})=\poly(n,M,\log 1/\delta)\log N.
\]
Moreover, we use the lower bound $\Gamma^c_k\succeq \bar{\sigma}^2_u\Gamma_k$, which holds for every $k\ge 0$.
\end{proof}
We note that we can easily obtain sharper bounds by considering the combined controllability Gramian $\Gamma_k(A,\matr{{cc}\bar{\sigma}_u B&H})$ for the identification stage. For the economy of the presentation, we omit such an analysis here.

\subsection{Sensitivity of Stabilization}
Here we prove that when~\eqref{CTRL_eq:stabilization_scheme} is feasible, then $\hat{K}_N$ stabilizes all plants $(A,B)$ such that $\snorm{A-\hat{A}_N}_2\le \epsilon_A$, $\snorm{B-\hat{B}_N}_2\le \epsilon_B$. We also show that feasibility is guaranteed as long as we can achieve small enough error bounds $\epsilon_A$, $\epsilon_B$.
\begin{theorem}\label{CTRL_thm:feasibility_STAB}
Let $\hat{K}_N$ be a feasible solution to problem~\eqref{CTRL_eq:stabilization_scheme} for some $\epsilon_A,\epsilon_B>0$. Then for any system $(A,B)$ such that $\snorm{A-\hat{A}_N}_2\le \epsilon_A$, $\snorm{B-\hat{B}_N}_2\le \epsilon_B$ we have that
\[
\rho(A+B\hat{K}_N)<1.
\]
Moreover, there exists an $\epsilon_0>0$ such that
\[
\epsilon_0=\poly\Big(\big(\frac{M}{\mu}\big)^{ \kappa},M^\kappa,\kappa\Big)
\]
and Problem~\eqref{CTRL_eq:stabilization_scheme} is feasible if $\epsilon_A,\epsilon_B\le \epsilon_0$.
\end{theorem}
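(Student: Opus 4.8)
The plan is to treat the two claims separately. The sufficiency (robustness) statement I would establish with a small-gain argument in the spirit of the System Level Synthesis robustness analysis of~\cite{dean2017sample}, and the feasibility statement by exhibiting the true LQR gain as an explicit certificate and controlling it through the margin bounds of Lemma~\ref{CTRL_lem:margins}.

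For the robustness claim, write $\Delta_A = A - \hat{A}_N$, $\Delta_B = B - \hat{B}_N$ with $\snorm{\Delta_A}_2 \le \epsilon_A$, $\snorm{\Delta_B}_2 \le \epsilon_B$, and set $\hat{A}_{cl} = \hat{A}_N + \hat{B}_N\hat{K}_N$. Feasibility of~\eqref{CTRL_eq:stabilization_scheme} forces the $\Hinf$-norm to be finite, hence $\rho(\hat{A}_{cl}) < 1$ and $(zI - \hat{A}_{cl})^{-1}$ is analytic on $\set{|z|\ge 1}$. I would then factor
\[
zI - (A + B\hat{K}_N) = (zI - \hat{A}_{cl})\paren{I - (zI - \hat{A}_{cl})^{-1}(\Delta_A + \Delta_B\hat{K}_N)},
\]
so that $\rho(A + B\hat{K}_N) < 1$ follows once $\det(I - (zI - \hat{A}_{cl})^{-1}(\Delta_A + \Delta_B\hat{K}_N)) \ne 0$ for all $|z|\ge 1$. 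Writing $\Delta_A + \Delta_B\hat{K}_N = \matr{{cc}\Delta_A & \Delta_B}\matr{{c}I\\\hat{K}_N}$ and applying $\det(I - XY) = \det(I - YX)$, this determinant equals $\det(I - \tfrac{1}{\sqrt 2}\matr{{cc}\Delta_A/\epsilon_A & \Delta_B/\epsilon_B}G(z))$, where $G(z)$ denotes the stacked transfer matrix inside the $\Hinf$-norm in~\eqref{CTRL_eq:stabilization_scheme}. Since $\snorm{\matr{{cc}\Delta_A/\epsilon_A & \Delta_B/\epsilon_B}}_2 \le \sqrt 2$ (each block has norm at most one), the relevant matrix has spectral norm at most $\snorm{G(z)}_2$, which by the maximum-modulus principle is bounded on $\set{|z|\ge 1}$ by $\snorm{G}_{\Hinf} < 1$; the determinant is therefore nonzero, proving the first claim. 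This is also where the $\sqrt 2$ weighting originates: it absorbs the worst-case combination of the two uncertainty blocks.

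For feasibility, I would take the candidate gain to be the true LQR gain $K_{\star}$ of $(A,B)$ for $Q = I_n$, $R = I_p$, whose existence and margins come from Lemma~\ref{CTRL_lem:margins}: both $\snorm{(zI - A - BK_{\star})^{-1}}_{\Hinf} =: h$ and $\snorm{K_{\star}}_2$ are at most $\poly\big((M/\mu)^{\kappa}, M^{\kappa}, \kappa\big)$ (the latter via $\snorm{K_{\star}}_2 \le M^2\snorm{P}_2$ and Lemma~\ref{CTRL_lem:Riccati_Upper}). The nominal closed loop in the constraint is $\hat{A}_{cl} = (A + BK_{\star}) - E$ with $E = \Delta_A + \Delta_B K_{\star}$ and $\snorm{E}_2 \le \epsilon_A + \epsilon_B\snorm{K_{\star}}_2$. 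A Neumann-series perturbation argument, $(zI - \hat{A}_{cl})^{-1} = (I + (zI - A - BK_{\star})^{-1}E)^{-1}(zI - A - BK_{\star})^{-1}$, shows that once $\snorm{E}_2 \le 1/(2h)$ the estimated closed loop is stable with $\snorm{(zI - \hat{A}_{cl})^{-1}}_{\Hinf}\le 2h$. Substituting $K_{\star}$ into the constraint and using $\sqrt{a^2 + b^2}\le a + b$ frequency-wise then bounds the objective by $2\sqrt 2\,h(\epsilon_A + \epsilon_B\snorm{K_{\star}}_2)$, which is strictly below one whenever $\epsilon_A, \epsilon_B \le \epsilon_0$ with $\epsilon_0^{-1}$ a suitable multiple of $h(1 + \snorm{K_{\star}}_2)$; hence $\epsilon_0^{-1} = \poly\big((M/\mu)^{\kappa}, M^{\kappa}, \kappa\big)$, matching the stated form up to the reciprocal.

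The routine parts are the determinant factorization and the Neumann-series estimate; the substantive content is imported from Lemma~\ref{CTRL_lem:margins} and Lemma~\ref{CTRL_lem:Riccati_Upper}, so the main obstacle is conceptual rather than computational. One must be careful that the margins used as a feasibility certificate (closed-loop $\Hinf$ response and gain norm) are those of the \emph{true} $\mu$-robustly coupled system, whereas the constraint is expressed through the \emph{estimated} matrices $\hat{A}_N, \hat{B}_N$, and bridge the two with a perturbation controlled solely by $\epsilon_A, \epsilon_B$. Making that bridge quantitative, and tracking that the resulting threshold $\epsilon_0$ degrades only as the reciprocal of a quantity exponential in $\kappa$, is the crux, since this is precisely what feeds the exponential sample-complexity bound of Theorem~\ref{CTRL_thm:upper_bounds_STAB} through the identification guarantee of Theorem~\ref{CTRL_thm:identification}.
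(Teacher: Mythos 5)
Your proposal is correct and follows essentially the same route as the paper: the identical factorization $zI-A-B\hat{K}_N$ into the estimated closed loop times a small-gain perturbation term for the robustness claim, and the true LQR gain $K_{\star}$ as the feasibility certificate, with $\snorm{K_{\star}}_2$ and the closed-loop $\Hinf$-norm controlled via Lemmas~\ref{CTRL_lem:Riccati_Upper} and~\ref{CTRL_lem:margins}, yielding $\epsilon_0^{-1}\asymp h(1+\snorm{K_{\star}}_2)$ just as in the paper's~\eqref{CTRL_eq:epsilon_feasibility_STAB}. The only difference is cosmetic: where the paper cites Proposition~3.5 and Lemma~4.2 of \cite{dean2017sample} as black boxes, you inline self-contained proofs (the determinant/maximum-modulus argument and the Neumann-series bridge), both of which are accurate.
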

\begin{proof}
Let $\hat{K}_N$ be a feasible solution to problem~\eqref{CTRL_eq:stabilization_scheme}.
Define $\mbf{\Phi}_x=(zI-\hat{A}_N-\hat{B}_N\hat{K}_N)^{-1}$, which is well-defined and stable since $\epsilon_A>0$ and $\snorm{\mbf{\Phi}_x}_{\mathcal{H}_{\infty}}<1/(\sqrt{2}\epsilon_A)$. Define the system difference
\[
\mbf{\Delta}\triangleq (\hat{A}_N-A)\mbf{\Phi}_x+(\hat{B}_N-B)\hat{K}_N\mbf{\Phi}_x
\]
It follows from simple algebra that:
\begin{align*}
zI-A-B\hat{K}_N&=zI-\hat{A}_N-\hat{B}_N\hat{K}_N+(\hat{A}_N-A)+(\hat{B}_N-B)\hat{K}_N\\
&=(I+\mbf{\Delta})(zI-\hat{A}_N-\hat{B}_N\hat{K}_N).
\end{align*}
If $(I+\mbf{\Delta})^{-1}$ is stable then the closed loop response is stable and well-defined
\[
(zI-A-B\hat{K}_N)^{-1}=(zI-\hat{A}_N-\hat{B}_N\hat{K}_N)^{-1}(I+\mbf{\Delta})^{-1}.
\]
But $(I+\mbf{\Delta})^{-1}$ being stable is equivalent to
\[
\snorm{(I+\mbf{\Delta})^{-1}}_{\mathcal{H}_{\infty}}<\infty.
\]
A sufficient condition for this to occur is to require~\citep{dean2017sample}
\[
\snorm{\mbf{\Delta}}_{\mathcal{H}_{\infty}}<1.
\]
By Proposition~3.5 (select $\alpha=1/2$) of~\citep{dean2017sample}
\[
\snorm{\mbf{\Delta}}_{\mathcal{H}_{\infty}}<\norm{\matr{{c}\sqrt{2}\epsilon_A (zI-\hat{A}_N-\hat{B}_NK)^{-1}\\\sqrt{2}\epsilon_B K(zI-\hat{A}_N-\hat{B}_NK)^{-1}}}_{\mathcal{H}_{\infty}}<1.
\]
This completes the proof of $\rho(A+B\hat{K}_N)<1$.

To prove feasibility consider the optimal LQR gain $K_{\star}$, for $Q=I_n$, $R=I_p$. Following Lemma~4.2 in~\cite{dean2017sample}, if the following sufficient condition holds
\[
(\epsilon_{A}+\epsilon_B\snorm{K_{\star}}_2)\snorm{(zI-A-BK_{\star})^{-1}}_{\mathcal{H}_{\infty}}\le 1/5,
\]
then $K_{\star}$ is a feasible solution
\[
\norm{\matr{{c}\sqrt{2}\epsilon_A (zI-\hat{A}_N-\hat{B}_NK_{\star})^{-1}\\\sqrt{2}\epsilon_B K_{\star}(zI-\hat{A}_N-\hat{B}_NK_{\star})^{-1}}}_{\mathcal{H}_{\infty}}<1.
\]
Hence, we can choose
\begin{equation}\label{CTRL_eq:epsilon_feasibility_STAB}
\epsilon_0=\big(5(1+\snorm{K_{\star}}_2)\snorm{(zI-A-BK_{\star})^{-1}}_{\mathcal{H}_{\infty}}\big)^{-1}.
\end{equation}
The fact that $\epsilon_0=\poly\Big(\big(\frac{M}{\mu}\big)^{ \kappa},M^\kappa,\kappa\Big)$ follows from Lemmas~\ref{CTRL_lem:Riccati_Upper},~\ref{CTRL_lem:margins}.
\end{proof}

\subsection{Proof of Theorem~\ref{CTRL_thm:upper_bounds_STAB}}
 Let $u_t\sim\mathcal{N}(0,\bar{\sigma}^2_u I)$, with $\bar{\sigma}^2_u=\sigma^2_u/p$. Consider the stabilization algorithm as described in~\eqref{CTRL_eq:least_squares},~\eqref{CTRL_eq:stabilization_scheme}. Consider the $\epsilon_0$ defined in~\eqref{CTRL_eq:epsilon_feasibility_STAB}. By Theorems~\ref{CTRL_thm:identification},~\ref{CTRL_thm:feasibility_STAB}, if
\[
N\sigma^2_{u}\ge \triangleq \underbrace{\frac{\poly(n,\log 1/\delta,M)}{\epsilon^2_0\sigma_{\min}(\Gamma_{\kappa})}}_{\mathcal{N}}\log N
\]
we have with probability at least $1-\delta$ that $\snorm{A-\hat{A}_N}_2,\snorm{B-\hat{B}_N}_2\le \epsilon_0$ and problem~\eqref{CTRL_eq:stabilization_scheme} is feasible with $\epsilon_B=\epsilon_A=\epsilon_0$. By Theorems~\ref{CTRL_thm:gramian_lower_bound}~\ref{CTRL_thm:feasibility_STAB},
\[
\mathcal{N}=\poly\paren{\Big(\frac{M}{\mu}\Big)^\kappa,M^{\kappa},n,\log 1/\delta}.
\]
To complete the proof we use the fact that
\[
N\ge c\log N\text{ if }N\ge 2c\log 2c.
\]
\section{Regret Lower Bounds}
First let us state an application of the main result of~\cite{ziemann2022regret}. Consider a system $(A,B,H)\in\R^{n\times (n+p+n)}$, where $(A,B)$ is controllable and $H=I_n$. Let $P$ be the respective Riccati matrix for $Q=I_n$, $R=I_p$, with $K_{\star}$ the respective optimal LQR gain. Fix a matrix $\Delta\in\R^{p\times n}$ and define the family of systems:
\begin{equation}\label{CTRL_eq:general_system_family}
A(\theta)=A-\theta B\Delta,\, B(\theta)=B+\theta \Delta,\, H(\theta)=I_n,
\end{equation}
where $\theta\in \mathcal{B}(0,\epsilon)$, for some small $\epsilon$.
Assume that $\epsilon$ is small enough, such that the Riccati equation has a stabilizing solution for every system in the above family. The respective Riccati matrix is denoted by $P(\theta)$ and the LQR gain by $K(\theta)$. The derivative of $K_{\star}(\theta)$ with respect to $\theta$ at point $\theta=0$ is given by the following formula.
\begin{lemma}[Lemma 2.1~\citep{simchowitz2020naive}]\label{CTRL_lem:K_derivative}
If the system $(A,B)$ is stabilizable, then
\[
\frac{d}{d\theta}K_{\star}(\theta)|_{\theta=0}=-(B'PB+R)^{-1}\Delta'P(A+BK_*).
\]
\end{lemma}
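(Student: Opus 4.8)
The plan is to differentiate both the Riccati equation~\eqref{CTRL_eq:LQR_DARE} and the gain formula~\eqref{CTRL_eq:LQR_gain} with respect to $\theta$ and evaluate at $\theta=0$. Before differentiating I would record that $\theta\mapsto(P(\theta),K_{\star}(\theta))$ is smooth near $0$: the Riccati map is a smooth function of $(A,B,P)$ and its linearization in $P$ is the operator $X\mapsto X-(A+BK_{\star})'X(A+BK_{\star})$, which is invertible precisely because $\rho(A+BK_{\star})<1$ (standard LQR theory, Section~\ref{CTRL_app_sec:Riccati}); the implicit function theorem then supplies differentiability. Write $A_{\mathrm{cl}}=A+BK_{\star}$, $G=B'PB+R$, let $\dot A,\dot B$ denote the $\theta$-derivatives of $A(\theta),B(\theta)$ at $0$ (so $\dot B=\Delta$), and $\dot P,\dot K$ the derivatives of $P(\theta),K_{\star}(\theta)$.

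The first key step is to show $\dot P=0$. For this I would use the closed-loop (Lyapunov) form of the Riccati equation, $P=A_{\mathrm{cl}}'PA_{\mathrm{cl}}+Q+K_{\star}'RK_{\star}$, which follows from~\eqref{CTRL_eq:LQR_DARE} and the definition of $K_{\star}$. Differentiating it, the terms proportional to $\dot K$ are $\dot K'(B'PA_{\mathrm{cl}}+RK_{\star})$ together with its transpose, and these vanish by the stationarity identity $B'PA_{\mathrm{cl}}+RK_{\star}=0$ (equivalently $GK_{\star}+B'PA=0$, i.e.\ the defining equation of the optimal gain). This envelope-type cancellation is the conceptual crux of the proof. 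What remains is the Lyapunov equation $\dot P-A_{\mathrm{cl}}'\dot P A_{\mathrm{cl}}=(\dot A+\dot B K_{\star})'PA_{\mathrm{cl}}+A_{\mathrm{cl}}'P(\dot A+\dot B K_{\star})$. The defining feature of the family~\eqref{CTRL_eq:general_system_family} is that the nominal closed loop is frozen, $A(\theta)+B(\theta)K_{\star}\equiv A+BK_{\star}$, so $\dot A+\dot B K_{\star}=0$ and the right-hand side is zero; since $\rho(A_{\mathrm{cl}})<1$ makes the Lyapunov operator invertible, I conclude $\dot P=0$.

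With $\dot P=0$ in hand, the second step is routine differentiation of $K_{\star}(\theta)=-G(\theta)^{-1}B(\theta)'P(\theta)A(\theta)$. Using $\tfrac{d}{d\theta}G^{-1}=-G^{-1}\dot G\,G^{-1}$ with $\dot G=\dot B'PB+B'P\dot B$ (the $B'\dot P B$ term drops since $\dot P=0$) and the identity $G^{-1}B'PA=-K_{\star}$, I obtain $\dot K=-G^{-1}(\dot B'PB+B'P\dot B)K_{\star}-G^{-1}\dot B'PA-G^{-1}B'P\dot A$. Regrouping, the terms carrying $\dot B'P$ on the left combine to $-G^{-1}\dot B'P(A+BK_{\star})$, while the terms carrying $B'P$ on the left combine to $-G^{-1}B'P(\dot A+\dot B K_{\star})$, which vanishes again by the frozen-closed-loop property. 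Since $\dot B=\Delta$, this leaves exactly $\dot K=-G^{-1}\Delta'P(A+BK_{\star})=-(B'PB+R)^{-1}\Delta'P(A+BK_{\star})$, the claimed formula.

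I expect the main obstacle to be the bookkeeping of the two cancellations---the envelope cancellation of the $\dot K$-terms in the Riccati derivative and the cancellation of the $B'P(\dot A+\dot B K_{\star})$ terms in the gain derivative---together with cleanly justifying $\dot P=0$. Once the stationarity identity $B'PA_{\mathrm{cl}}+RK_{\star}=0$ and the frozen-closed-loop property $\dot A+\dot B K_{\star}=0$ are isolated, the remaining manipulations are purely algebraic and do not even require the explicit form of $\dot A$, only that $\dot B=\Delta$.
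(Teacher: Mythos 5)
Your proof is correct, and it matches the standard argument: the paper itself gives no proof of this lemma (it is imported verbatim from \cite{simchowitz2020naive}), and the proof there proceeds exactly along your lines --- implicit-function-theorem differentiability of $P(\theta)$, the envelope cancellation via the stationarity identity $B'P(A+BK_{\star})+RK_{\star}=0$ combined with the frozen-closed-loop property $\dot A+\dot B K_{\star}=0$ to conclude $\dot P=0$ from an invertible Lyapunov equation, and then direct differentiation of the gain formula. One small remark: you implicitly corrected a typo in the paper, since display~\eqref{CTRL_eq:general_system_family} literally reads $A(\theta)=A-\theta B\Delta$ (under which the closed loop is \emph{not} frozen), whereas the paper's own later computation takes $\frac{d}{d\theta}A(\theta)=-\Delta K_{\star}$ as in~\eqref{CTRL_eq:parameterized_system_family}, which is precisely the hypothesis $\dot A+\dot B K_{\star}=0$ your argument uses.
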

Finally, let $\Sigma_x$ be the solution to the Lyapunov equation:
\begin{equation}\label{CTRL_eq:appendix_steady_state_covariance}
\Sigma_x=(A+BK_{\star})\Sigma_x(A+BK_{\star})'+I_n.
\end{equation}

\begin{theorem}[Application of Theorem~1 in~\cite{ziemann2022regret}]\label{CTRL_thm:variation_lower_bounds}
Consider a system $S=(A,B,H)\in\R^{n\times (n+p+n)}$, where $(A,B)$ is controllable and $H=I_n$. Let $P$ be the respective solution of the algebraic Riccati equation for $Q=I_n$, $R=I_p$, with $K_{\star}$ the respective optimal LQR gain. Recall the definition of $\Sigma_x$ in~\eqref{CTRL_eq:appendix_steady_state_covariance}. Define the family of systems $\CC_S({\epsilon})\triangleq\set{(A(\theta),B(\theta),I_n),\,\theta\in\mathcal{B}(0,\epsilon)}$ as defined in~\eqref{CTRL_eq:general_system_family}, for any $\epsilon>0$ sufficiently small such that $P(\theta)$ and $K_{\star}(\theta)$ are well-defined. Let $Q_T=P(\theta)$.  Then for any $\alpha\in (0,1/4)$:
\begin{align}\label{CTRL_eq:main_technical_lower_bound}
&\lim\inf_{T\rightarrow \infty}\sup_{\hat{S}\in\CC_S(T^{-a})}\E_{\hat{S},\pi}\frac{R_{T}(\hat{S})}{\sqrt{T}}\ge \frac{1}{2\sqrt{2}}\sqrt{\frac{F}{L}},
\end{align}
where 
\begin{align*}
F&=\Tr\bigg((B'PB+R)^{-1}\Delta' P \clint{\Sigma_x-I_n}P\Delta\bigg)\\
L&= n (\snorm{\Delta K_{\star}}^2_2+\snorm{\Delta}^2_2)\snorm{(B'PB+R)^{-1}}_{2}
\end{align*}
\end{theorem}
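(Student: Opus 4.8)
The plan is to apply Theorem~1 of~\cite{ziemann2022regret} as a black box to the one-parameter family~\eqref{CTRL_eq:general_system_family} and then evaluate its abstract ``signal'' and ``information'' quantities explicitly. That theorem delivers a local-minimax regret lower bound of exactly the shape $\frac{1}{2\sqrt 2}\sqrt{F/L}$, in which $F$ encodes the curvature of the LQR cost along the parameter direction and $L$ upper bounds the per-step Fisher information that any policy can extract about $\theta$; the shrinking neighborhood $\CC_S(T^{-\alpha})$ with $\alpha<1/4$ together with the $\liminf_T$ is precisely the local-asymptotic regime in which that analysis is valid. The first task is therefore to verify its hypotheses here: controllability of $(A,B)$ makes every nearby $(A(\theta),B(\theta))$ stabilizable so that $P(\theta)$ and $K_\star(\theta)$ exist and vary smoothly for $\epsilon$ small, the noise $H=I_n$ is full rank, and the map $\theta\mapsto(A(\theta),B(\theta))$ is affine. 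I would also record the structural feature that makes the bound nontrivial: the family is built so that the nominal closed loop $A(\theta)+B(\theta)K_\star$ is independent of $\theta$, which is what renders the systems indistinguishable under the optimal nominal policy and forces all information to come from costly deviations.

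The computation of the signal $F$ is the heart of the argument. I would begin from the policy gradient supplied by Lemma~\ref{CTRL_lem:K_derivative}, $G\triangleq\frac{d}{d\theta}K_\star(\theta)|_{\theta=0}=-(B'PB+R)^{-1}\Delta'P(A+BK_\star)$, and use the standard LQR suboptimality-gap expansion, by which replacing $K_\star$ with $K_\star+\delta G$ inflates the steady-state cost by $\delta^2\,\Tr\!\paren{G'(B'PB+R)G\,\Sigma_x}$ to second order. The key algebraic move is to eliminate $\Sigma_x-I_n$ using the closed-loop Lyapunov equation~\eqref{CTRL_eq:appendix_steady_state_covariance}, which rearranges to $\Sigma_x-I_n=(A+BK_\star)\Sigma_x(A+BK_\star)'$. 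Substituting $\Delta'P(A+BK_\star)=-(B'PB+R)G$ and invoking the cyclic property of the trace then shows $\Tr\!\paren{G'(B'PB+R)G\,\Sigma_x}=\Tr\!\paren{(B'PB+R)^{-1}\Delta'P\clint{\Sigma_x-I_n}P\Delta}=F$, matching the claimed signal exactly.

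For the information constant $L$, I would bound the rate at which any policy can accumulate Fisher information about $\theta$. Since the disturbance is isotropic with unit covariance, the $\theta$-score of one state transition is governed by the parameter sensitivities $\partial_\theta A(\theta)|_0=-\Delta K_\star$ acting on $x_t$ and $\partial_\theta B(\theta)|_0=\Delta$ acting on $u_t$; splitting the resulting squared norm across the two channels produces the factors $\snorm{\Delta K_\star}_2^2$ and $\snorm{\Delta}_2^2$. Folding in the input-penalty weighting $\snorm{(B'PB+R)^{-1}}_2$ and the state-dimension factor $n$ demanded by the normalization in Theorem~1 of~\cite{ziemann2022regret} yields the stated upper bound $L=n\paren{\snorm{\Delta K_\star}_2^2+\snorm{\Delta}_2^2}\snorm{(B'PB+R)^{-1}}_2$; inserting $F$ and $L$ into the abstract bound finishes the proof.

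I expect the principal difficulty to lie in faithfully matching the abstract quantities of~\cite{ziemann2022regret} to this concrete family, and in particular in the Lyapunov rewriting that turns the natural curvature $\Tr\!\paren{G'(B'PB+R)G\,\Sigma_x}$ into the closed form $F$; getting the normalization consistent so that the crude constant $L$ provably dominates the information rate \emph{uniformly} over the ball $\CC_S(T^{-\alpha})$ is the other delicate point. A minor technical issue is justifying that the $\liminf_T$ may be evaluated with all quantities frozen at $\theta=0$: continuity of $P(\theta)$, $K_\star(\theta)$ and $\Sigma_x(\theta)$ at the origin, together with the radius $T^{-\alpha}\to 0$, is exactly what the constraint $\alpha<1/4$ is there to guarantee.
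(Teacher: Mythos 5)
Your proposal is correct and takes essentially the same route as the paper: both apply Theorem~1 of \cite{ziemann2022regret} to the family \eqref{CTRL_eq:general_system_family}, using the $\theta$-independence of the closed loop together with $\frac{d}{d\theta}K_{\star}(\theta)|_{\theta=0}\neq 0$ (Lemma~\ref{CTRL_lem:K_derivative}) to establish uninformativeness, identifying $F$ via trace cyclicity and the Lyapunov identity $(A+BK_{\star})\Sigma_x(A+BK_{\star})'=\Sigma_x-I_n$, and bounding $L$ by the split $\snorm{\matr{{cc}-\Delta K_{\star}&\Delta}}_2^2\le \snorm{\Delta K_{\star}}_2^2+\snorm{\Delta}_2^2$. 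The only detail you omit is the degenerate case $\Delta'P(A+BK_{\star})=0$, which the paper disposes of separately by observing that then $F=0$ and the claimed bound holds trivially.
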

\begin{proof}
Note that if $\Delta'P(A+BK_{\star})=0$, then since $\Sigma_x\succeq I_n$ is invertible
\begin{align*}
\Delta'P(A+BK_{\star})=0&\Leftrightarrow \Delta'P(A+BK_{\star})\Sigma_x(A+BK_{\star})'P\Delta=0\\
&\Leftrightarrow \Delta'P(\Sigma_x-I_n)P\Delta=0.
\end{align*}
This implies that $F=0$ and the regret lower bound becomes $0$, in which case the claim of the theorem is trivially true.
Hence, we will assume that $\Delta'P(A+BK_{\star})\neq 0$.

All systems in the family have the same closed-loop response under the control policy $u=K_{\star}x$. In particular, for all $\theta\in \mathcal{B}(0,\epsilon)$:
\[
\frac{d}{d\theta}\matr{{cc}A(\theta)&B(\theta)}\matr{{c}I_n\\K_{\star}}=\matr{{cc}-\Delta K_{\star}&\Delta}\matr{{c}I_n\\K_{\star}}=0.
\]
Moreover, by Lemma~\ref{CTRL_lem:K_derivative}
\[
\frac{d}{d\theta}K_{\star}(\theta)|_{\theta=0}=(B'PB+R)^{-1}\Delta'P(A+BK_{\star})\neq 0.
\]
By Proposition 3.4 in~\cite{ziemann2022regret}, the above two conditions imply that the family $\CC_{S}(\epsilon)$ is $\epsilon-$uninformative (see Section 3 in~\cite{ziemann2022regret} for definition). 

Next, by Lemma 3.6 in~\cite{ziemann2022regret}, the family is also $L-$information regret bounded (see Section 3 in~\cite{ziemann2022regret} for the definition), where
\[
L=\Tr(I_n)\snorm{\matr{{cc}-\Delta K_{\star}&\Delta}}^2_2\snorm{(B'PB+R)^{-1}}_{2}\stackrel{i)}{\le} n (\snorm{\Delta K_{\star}}^2_2+\snorm{\Delta}^2_2)\snorm{(B'PB+R)^{-1}}_{2}.
\]
Inequality $i)$ follows from $\Tr(I_n)=n$ and the norm property \[\snorm{\matr{{cc}M_1&M_2}}^2_2=\snorm{\matr{{cc}M_1&M_2}\matr{{cc}M_1&M_2}'}_2=\snorm{M_1M_1'+M_2M_2'}_2 \le \snorm{M_1}^2_2+\snorm{M_2}^2_2.\]

Applying Theorem~1 in~\cite{ziemann2022regret}, we get~\eqref{CTRL_eq:main_technical_lower_bound}, for $L$ defined as above and
\[
F=\Tr\bigg(\clint{\Sigma_x \otimes (B'P(\theta)B+R)}(\frac{d}{d\theta}\VEC K_{\star}(\theta)|_{\theta=0})(\frac{d}{d\theta}\VEC K_{\star}(\theta)|_{\theta=0})'\bigg),
\]
where $\otimes$ is the Kronecker product and $\VEC$ is the vectorization operator (mapping a matrix into a column vector by stacking its
columns). Using the identities:
\[
\VEC(XYZ)=(Z'\otimes X)\VEC(Y),\qquad \Tr(\VEC(X)\VEC(Y)')=\Tr (XY'),
\]
we can rewrite $F$ as
\[
F=\Tr\bigg((B'P(\theta)B+R)\frac{d}{d\theta}K(\theta)|_{\theta=0}\Sigma_x \frac{d}{d\theta}K'(\theta)|_{\theta=0}\bigg).
\]
By Lemma~\ref{CTRL_lem:K_derivative} and the property $\Tr(XY)=\Tr(YX)$, we finally get
\[
F=\Tr\bigg((B'PB+R)^{-1} \Delta'P(A+BK_*)\Sigma_x(A+BK_*) P\Delta\bigg).
\]
The result follows from $(A+BK_*)\Sigma_x(A+BK_*)'=\Sigma_x-I_{n}$.
\end{proof}

\subsection{Proof of Lemma~\ref{CTRL_lem:modular_bound_two_subsystems}}
The result follows by Theorem~\ref{CTRL_thm:variation_lower_bounds}. We only need to compute and simplify $F$, $L$.
Due to the structure of system~\eqref{CTRL_eq:composite_system}, we have
\[
P=\matr{{cc}1&0\\0&P_0},\,K_{\star}=\matr{{cc}0&0\\0&K_{0,\star}}.
\]
Moreover, due to the structure of the perturbation $\Delta$ in~\eqref{CTRL_eq:perturbation_structure}
\[
B'PB+R=\matr{{cc}2&0\\0&B'_0P_0B_0+R_0},\, P\Delta(B'PB+R)^{-1}\Delta'P=\frac{1}{2}\matr{{cc}0&0\\0&P_0\Delta_1\Delta'_1P_0}.
\]
Hence
\[
F=\frac{1}{2}\Tr\bigg(\matr{{cc}0&0\\0&P_0\Delta_1\Delta'_1P_0}(\Sigma_{x}-I_n)\bigg)=\frac{1}{2}\Delta'_1P_0(\Sigma_{0,x}-I_{n-1})P_0\Delta_1
\]
Finally we have $L\le n$, since $\Delta K_{\star}=0$, $\Delta_1$ has unit norm, and $R=I_p$.\hfill $\blacksquare$

\subsection{Proof of Lemma~\ref{CTRL_lem:integrator_system_theoretic_parameters}}
First note that $P_0\succeq Q_0=I_{n-1}$. As a result, we have
\[
\snorm{P_0(\Sigma_{0,x}-I_{n-1})P_0}_2\ge \snorm{\Sigma_{0,x}-I_{n-1}}_2.
\]
It is sufficient to lower bound $\snorm{\Sigma_{0,x}-I_{n-1}}_2$.
Consider the recursion:
\[
\Sigma_k=(A_0+B_0K_{0,\star})\Sigma_{k-1}(A_0+B_0K_{0,\star})'+I_{n-1},\,\Sigma_{0}=0.
\]
Then $\Sigma_{0,x}=\lim_{k\rightarrow \infty}\Sigma_{k}\succeq \Sigma_{n-1}\succeq I_{n-1}$. The second inequality follows from monotonicity of the Lyapunov operator:
\[
g(X)=(A_0+B_0K_{0,\star})X(A_0+B_0K_{0,\star})'+I_{n-1},
\]
i.e. $g(X)\succeq g(Y) $ if $X\succeq Y$. What remains is to lower bound $\snorm{\Sigma_{n-1}-I_{n-1}}_2$. Let $e_1=\matr{{cccc}1&0&\cdots&0}'$ be the first canonical vector. Due to the structure of $A_0,B_0$
\[
e_1'(A_0+B_0K_{0,\star})^i=e_1'(A_0)^i,\text{ for }i\le n-1.
\]
Hence
\begin{align*}
\snorm{\Sigma_{n-1}-I_{n-1}}_2&\ge e_1'(\Sigma_{n-1}-I_{n-1})e_1\\
&=\sum_{k=1}^{n-1} e_1' A^k_0(A'_0)^k e_1.
\end{align*}
After some algebra we can compute analytically
\begin{align*}
\snorm{\Sigma_{n-1}-I_{n-1}}_2&\ge \sum_{k=1}^{n-1} \sum_{t=0}^{k}\binom{k}{t}^2=\sum_{k=1}^{n-1} \binom{2k}{k}\ge \binom{2(n-1)}{n-1}\ge \paren{2\frac{n-1}{n-1}}^{n-1}=2^{n-1},
\end{align*}
which completes the proof. \hfill $\blacksquare$

\subsection{Proof of Theorem~\ref{CTRL_thm:REG_lower_exponential}}
It is sufficient to prove the result for the class $\C^{\mu}_{n,n-1}$. If $n>\kappa+1$, then we can consider the system:
\[
\tilde{A}=\matr{{c|c}0&0\\\hline 0&A},\, \tilde{B}=\matr{{c|c}I_{n-\kappa-1}&0\\\hline 0&B},\, \tilde{H}=\matr{{c|c}I_{n-\kappa-1}&0\\\hline 0&H}
\]
where $(A,B,H)\in\CC^{\mu}_{\kappa,\kappa-1}$ and repeat the same arguments.

The proof follows from Lemma~\ref{CTRL_lem:modular_bound_two_subsystems} and Lemma~\ref{CTRL_lem:integrator_system_theoretic_parameters}. What remains to show that for every $\epsilon$
\[
\CC_{S}(\epsilon)\subseteq \CC^{\mu}_{n,n-1}(\epsilon).
\]
This follows from the fact that $\Delta K_{\star}=0$, hence $A=A(\theta)$ and $\snorm{B-B(\theta)}=\theta\snorm{\Delta}_2=\theta \le \epsilon$. Thus,
\[
\snorm{\matr{{cc}A-A(\theta)&B-B(\theta)}}_2\le \epsilon.
\]
Since $\CC_{S}(\epsilon)\subseteq \CC^{\mu}_{n,n-1}(\epsilon)$, we get
\[
\lim\inf_{T\rightarrow \infty}\sup_{S\in \CC^{\mu}_{n,n-1}({T^{-a}})}\E_{\hat{S},\pi}\frac{R_{T}(\hat{S})}{\sqrt{T}}\ge \lim\inf_{T\rightarrow \infty}\sup_{\hat{S}\in \CC_S({T^{-a}})}\E_{\hat{S},\pi}\frac{R_{T}(\hat{S})}{\sqrt{T}}
\tag*{$\blacksquare$}\]

\subsection{Stable System Example}\label{CTRL_sec:REG_stable}
Here we show that the local minimax expected regret can be exponential in the dimension even for stable systems. 
Using again the two subsystems trick, consider the following stable system
\begin{equation}\label{CTRL_eq:REG_difficult_example_stable}
S:\qquad x_{k+1}=\matr{{c|ccccc}0&0&0&&0&0\\\hline 0&\rho&2&&0&0\\& &&\ddots&\\0&0&0& &\rho&2\\0&0&0& &0&\rho}x_k+\matr{{c|c}1&0\\0&0\\\vdots\\0&1}u_k+w_{k},\,0<\rho<1,
\end{equation}
with $Q=I_n$, $R=I_2$.
Following the notation of~\eqref{CTRL_eq:composite_system} let:
\begin{equation}\label{CTRL_eq:REG_difficult_example_stable_subsystem}
A_0=\matr{{cccccc}\rho&2&0&&0&0\\0&\rho&2&&0&0\\& &&\ddots&\\0&0&0& &\rho&2\\0&0&0& &0&\rho},\, B_0=\matr{{c}0\\0\\\vdots\\0\\1},\,Q_0=I_{n-1},\,R_0=1,
\end{equation}
where $A_0\in\R^{(n-1)\times (n-1)}$ and $B_0\in \R^{n-1}$. Note that $A_0$ has spectral radius $\rho<1$.
Let $\Delta=\matr{{cc}0&0\\\Delta_1&0}$. Then, by Lemma~\ref{CTRL_lem:modular_bound_two_subsystems}, the local minimax expected regret for system $S$, given the perturbation $\Delta_1$ is lower bounded by
\begin{align*}
&\lim\inf_{T\rightarrow \infty}\sup_{\hat{S}\in \CC_S({T^{-a}})}\E_{\hat{S},\pi}\frac{R_{T}(\hat{S})}{\sqrt{T}}\ge \frac{1}{4\sqrt{n}}\sqrt{\Delta'_1 P_0 \clint{\Sigma_{0,x}-I_{n-1}}P_0\Delta_1}.
\end{align*}
As we show in the following lemma, the quantity $\sqrt{\Delta'_1 P_0 \clint{\Sigma_{0,x}-I_{n-1}}P_0\Delta_1}$ is exponential with $n$ if we choose $\Delta_1$ appropriately. Although the system is stable, it is very sensitive to inputs and noises. Any signal $u_{k,2}$ that we apply gets amplified by $2$ as we move up the chain from state $x_{k,n}$ to state $x_{k,2}$. As a result, any suboptimal policy will result in excessive excitation of the state.

\begin{lemma}[Stable systems can be hard to learn]\label{CTRL_lem:stable_system_theoretic_parameters}
Consider system~\eqref{CTRL_eq:REG_difficult_example_stable_subsystem}
Let $P_0$ be the Riccati matrix for $Q_0=I_{n-1},R_0=1$, with $K_{\star,0}$, $\Sigma_{0,x}$ the corresponding LQR control gain and steady-state covariance, respectively. Then 
\[\snorm{P_0 \clint{\Sigma_{0,x}-I_{n-1}}P_0}_2\ge 2^{4n-8}+o(1), \]
where $o(1)$ goes to zero as $n\rightarrow \infty$.
\end{lemma}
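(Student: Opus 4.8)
The plan is to follow the template of Lemma~\ref{CTRL_lem:integrator_system_theoretic_parameters}, but with one essential difference: there the factor $P_0$ could be discarded via $P_0\succeq I_{n-1}$, whereas here both copies must be kept. Indeed, the off-diagonal $2$'s make the chain so sensitive that the controllability-type factor $\Sigma_{0,x}-I_{n-1}$ alone only produces growth of order $4^{n-2}$, and the second exponential factor must come from the growth of the Riccati solution itself. Writing $M\triangleq A_0+B_0K_{\star,0}$ for the closed loop, I would first record the two Gramian representations $\Sigma_{0,x}-I_{n-1}=\sum_{t\ge1}M^t(M')^t$ and $P_0=\sum_{t\ge0}(M')^t(I_{n-1}+K_{\star,0}'K_{\star,0})M^t\succeq\sum_{t\ge0}(M')^tM^t$, so that every object is expressed through powers of $M$, and then reduce the spectral norm to a single quadratic form $\Delta_1'P_0(\Sigma_{0,x}-I_{n-1})P_0\Delta_1$ for a well-chosen unit vector $\Delta_1$ (as already permitted by Lemma~\ref{CTRL_lem:modular_bound_two_subsystems}).

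The structural fact that drives the computation is that the feedback is invisible to the top state for the first $n-2$ steps: since $e_1'B_0=e_1'e_{n-1}=0$ and $e_1'A_0^j$ is supported on coordinates $1,\dots,j+1$, one has $e_1'M^j=e_1'A_0^j$ for all $j\le n-2$, exactly as in the integrator case. This replaces the closed loop $M$ by the explicit bidiagonal $A_0$, whose powers are known in closed form, $(A_0^j)_{1,1+s}=\binom{j}{s}\rho^{j-s}2^s$, so that $\|e_1'A_0^j\|^2=\sum_s\binom{j}{s}^2\rho^{2(j-s)}4^s$. Summing over $j\le n-2$ controls the controllability factor and already yields the $(1,1)$ entry of $\Sigma_{0,x}-I_{n-1}$ up to exponential order; the dominant contribution is a term of magnitude at least $4^{n-2}$.

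The remaining, and harder, half is to extract a second factor of the same order from $P_0$. Because $P_0$ is an observability-type Gramian it is large precisely in the coordinates near the actuated state $e_{n-1}$, while $\Sigma_{0,x}-I_{n-1}$ is large near the top coordinate $e_1$; the diagonal $\rho$ is what links the two ends of the chain. I would therefore lower bound the entries of the first row/column of $P_0$ using the Gramian lower bound for $P_0$ together with the same $A_0$-combinatorics (an entrywise Riccati estimate, sharpening the crude norm bound of Lemma~\ref{CTRL_lem:Riccati_Upper}), choose $\Delta_1$ so that $P_0\Delta_1$ is aligned with the top directions of $\Sigma_{0,x}-I_{n-1}$, and verify that the product of the two binomial growths is of the shape $\big(\sum_s\binom{n-2}{s}^2\rho^{2(n-2-s)}4^s\big)^2$, whose leading term is $2^{4n-8}$. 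The subexponential prefactors, the finite-horizon truncation ($\Sigma_{0,x}\succeq\Sigma_{n-1}$, cf.\ \eqref{CTRL_eq:Covariance_steady_state}), and the boundary/feedback corrections are all absorbed into the $o(1)$.

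I expect the entrywise lower bound on $P_0$ and the non-cancellation of the cross terms to be the delicate step. The point is exactly the duality noted above: the two Gramians concentrate on complementary coordinate blocks, so a naive estimate (e.g.\ bounding $\snorm{P_0(\Sigma_{0,x}-I_{n-1})P_0}_2$ by $\snorm{\Sigma_{0,x}-I_{n-1}}_2$ as for the integrator) loses half the exponent. Making the observability growth of $P_0$ and the controllability growth of $\Sigma_{0,x}-I_{n-1}$ multiply rather than live in orthogonal subspaces, by tracking how $\rho$ couples the chain, is the technical heart; once the relevant entries of $P_0$ are pinned down, the conclusion follows from the same binomial identity used for the integrator.
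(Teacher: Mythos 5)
Your reduction to the quadratic form $\Delta_1'P_0\clint{\Sigma_{0,x}-I_{n-1}}P_0\Delta_1$, your insistence that the factor $P_0$ cannot be discarded here (the integrator-style bound $\snorm{P_0\clint{\Sigma_{0,x}-I_{n-1}}P_0}_2\ge\snorm{\Sigma_{0,x}-I_{n-1}}_2$ indeed loses half the exponent), and the structural fact $e_1'(A_0+B_0K_{\star,0})^j=e_1'A_0^j$ for $j\le n-2$ are all correct and all appear, in some form, in the paper. But the two steps you yourself flag as the technical heart are not carried out, and the mechanism you propose for them fails as stated. First, ``lower bound the entries of the first row/column of $P_0$ using the Gramian lower bound'' has no proof route: positive-semidefinite domination $P_0\succeq\sum_t (M')^tM^t$ (with $M=A_0+B_0K_{\star,0}$) controls diagonal entries and quadratic forms, but $X\succeq Y$ implies nothing about off-diagonal entries, so the $2^{n-2}$-sized entry $[P_0]_{1,n-2}$ or $[P_0]_{1,n-1}$ you need cannot be extracted this way. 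Second, your alignment step implicitly requires $v'(\Sigma_{0,x}-I_{n-1})v\gtrsim [\Sigma_{0,x}-I_{n-1}]_{11}\,v_1^2$ for $v=P_0\Delta_1$, which is false for general PSD matrices — cross terms can cancel exactly (consider the all-ones $2\times 2$ matrix against $v=(1,-1)'$) — and you name this danger without supplying any device to rule it out. So the proposal is a plausible program whose decisive steps are only announced, with one of them resting on an unsound tool.

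The paper sidesteps both difficulties by a different decomposition: it extracts \emph{both} exponential factors from the single diagonal corner entry $\alpha=[P_0]_{n-1,n-1}$ and discards the exponential content of $\Sigma_{0,x}$ entirely. Concretely, taking $\Delta_1=e_{n-2}$ and using only $\Sigma_{0,x}\succeq I_{n-1}$, so that $\Sigma_{0,x}-I_{n-1}=M\Sigma_{0,x}M'\succeq MM'$, the quadratic form is bounded below by $\snorm{\Delta_1'P_0M}_2^2$. Lemma~\ref{CTRL_lem_app:stable_aux1} then shows $\snorm{e_{n-2}'P_0M}_2\ge(0.5+o(1))(B_0'P_0B_0+R_0)$ by exact scalar manipulation of the algebraic Riccati equation at the three corner entries $\alpha$, $\beta=[P_0]_{n-2,n-2}$, $\gamma=[P_0]_{n-1,n-2}$, with the cross term tamed via $P_0\succeq K_{\star,0}'R_0K_{\star,0}$, which gives $\abs{\gamma}/(\alpha+1)^2=o(1)$ (Lemma~\ref{CTRL_lem_app:stable_aux2}); and Lemma~\ref{CTRL_lem:exponential_riccati_stable} gives $\alpha+1\ge 2^{2n-4}+1$ by monotone value iteration, where your ``feedback invisible for $n-2$ steps'' observation reappears as the statement that the rank-one lower bounds $\sum_j(A_0')^je_1e_1'A_0^j$ are orthogonal to $B_0$, killing the Riccati correction term. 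Note that the diagonal entry $\alpha$ is exactly the kind of quantity PSD monotonicity can reach, whereas your plan targets off-diagonal, cross-corner quantities that it cannot; squaring the row norm then yields the $2^{4n-8}$ rate without ever touching $[\Sigma_{0,x}-I_{n-1}]_{11}$ or the top rows of $P_0$. To rescue your route you would need an exact entrywise analysis of the first row of $P_0$ analogous to the paper's auxiliary lemmas — precisely the step your proposal leaves open.
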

\begin{proof}
 Let $\Delta_1=\matr{{ccccc}0&0&\cdots&1&0}'$. It is sufficient to prove that
\[
\Delta_1'P_0(\Sigma_{0,x}-I_{n-1})P_0\Delta_1
\]
is exponential.
Using the identity $\Sigma_{0,x}-I_{n-1}=(A_0+B_0K_{\star,0})\Sigma_{0,x}(A_0+B_0K_{\star,0})'$, $\Sigma_{0,x}\succeq I$, we have:
\[
\Delta_1'P_0(\Sigma_{0,x}-I_{n-1})P_0\Delta_1 \ge \snorm{\Delta'_1 P_0 (A_0+B_0K_{\star,0})}^2_2.
\]
By Lemma~\ref{CTRL_lem_app:stable_aux1} and Lemma~\ref{CTRL_lem:exponential_riccati_stable} it follows that
\[
\snorm{\Delta'_1 P_0 (A_0+B_0K_{\star,0})}^2_2 \ge 2^{4n-8}+o(1).
\]
\end{proof}
\begin{lemma}[Riccati matrix can grow exponentially]\label{CTRL_lem:exponential_riccati_stable}
For system~\eqref{CTRL_eq:REG_difficult_example_stable_subsystem} we have:
\[
B_0'P_{0}B_0+R_0\ge 2^{2n-4}+1.
\]
\end{lemma}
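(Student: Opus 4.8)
The plan is to reduce the claim to a lower bound on the bottom-right entry of $P_0$ and then exploit the fact that the optimal feedback cannot suppress the upward amplification along the chain. Since $R_0=1$ and $B_0=e_{n-1}$, the last canonical vector, we have $B_0'P_0B_0+R_0=[P_0]_{n-1,n-1}+1$, so it suffices to show $[P_0]_{n-1,n-1}\ge 2^{2n-4}$.

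First I would rewrite the algebraic Riccati equation in its closed-loop (Lyapunov) form. Writing $A_{cl}=A_0+B_0K_{\star,0}$ for the stabilizing closed loop, the standard identity gives $P_0=A_{cl}'P_0A_{cl}+Q_0+K_{\star,0}'R_0K_{\star,0}$. Because $\rho<1$ makes $(A_0,B_0)$ stabilizable and $Q_0=I_{n-1}\succ 0$ makes $(A_0,Q_0^{1/2})$ observable, $A_{cl}$ is stable, and unrolling the recursion yields $P_0=\sum_{t\ge 0}(A_{cl}')^t(Q_0+K_{\star,0}'R_0K_{\star,0})A_{cl}^t\succeq (A_{cl}')^{t}A_{cl}^{t}$ for every fixed $t$, using $Q_0=I_{n-1}$ to drop the nonnegative remainder. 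Taking $t=n-2$ and sandwiching with $B_0=e_{n-1}$ gives the clean bound $[P_0]_{n-1,n-1}=B_0'P_0B_0\ge \snorm{A_{cl}^{n-2}B_0}_2^2$.

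The crux is to lower bound $\snorm{A_{cl}^{n-2}e_{n-1}}_2$, and here the key structural observation is that the feedback correction $B_0K_{\star,0}=e_{n-1}K_{\star,0}$ is supported on the last row only, so $e_i'A_{cl}=e_i'A_0$ for every $i\le n-2$. Propagating this row-wise equality, I would show by induction that $e_1'A_{cl}^{t}=e_1'A_0^{t}$ for all $t\le n-2$: the left vector $e_1'A_0^{t-1}$ is supported on coordinates $1,\dots,t$ (since $A_0$ is upper bidiagonal), and on those coordinates $A_{cl}$ and $A_0$ act identically. Since $A_0$ has $\rho$ on the diagonal and $2$ on the superdiagonal, the binomial expansion gives $(A_0^{n-2})_{1,n-1}=\binom{n-2}{n-2}2^{n-2}=2^{n-2}$, hence $(A_{cl}^{n-2})_{1,n-1}=2^{n-2}$ as well. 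Therefore the first entry of the column $A_{cl}^{n-2}e_{n-1}$ already equals $2^{n-2}$, giving $\snorm{A_{cl}^{n-2}e_{n-1}}_2^2\ge 2^{2n-4}$, and the claim follows after adding $R_0=1$.

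The main obstacle, and the conceptual heart of the argument, is precisely this decoupling: one must argue that no matter how the optimal gain $K_{\star,0}$ is chosen, it acts only on the last coordinate and is therefore powerless to cancel the $2^{n-2}$ amplification that an excitation at the bottom of the chain accumulates as it propagates to the top. Everything else, namely the Lyapunov lower bound $P_0\succeq (A_{cl}')^t A_{cl}^t$ and the binomial computation of $(A_0^{n-2})_{1,n-1}$, is routine once this invariance of the upper rows under feedback is established.
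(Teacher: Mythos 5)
Your proof is correct, but it takes a genuinely different route from the paper's. The paper never touches the closed-loop matrix: it runs value iteration $X_{t+1}=g(X_t,Q_0)$ from $X_0=0$, uses monotonicity of the Riccati operator to replace $Q_0=I_{n-1}$ by the rank-one lower bound $e_1e_1'$, and observes that the resulting iterates $\tilde{X}_k=\sum_{j=0}^{k-1}(A_0')^je_1e_1'A_0^j$ satisfy $\tilde{X}_kB_0=0$ for $k\le n-2$, so the Riccati downdate term vanishes and the iteration behaves like an \emph{open-loop} Lyapunov recursion; this yields $P_0\succeq\sum_{j=0}^{n-2}(A_0')^je_1e_1'A_0^j$ and hence $[P_0]_{n-1,n-1}\ge([A_0^{n-2}]_{1,n-1})^2$. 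You instead work with the \emph{closed-loop} Lyapunov representation $P_0=\sum_{t\ge0}(A_{cl}')^t(Q_0+K_{\star,0}'R_0K_{\star,0})A_{cl}^t\succeq(A_{cl}')^{n-2}A_{cl}^{n-2}$ and prove the row-support invariance $e_1'A_{cl}^t=e_1'A_0^t$ for $t\le n-2$, since the feedback $B_0K_{\star,0}=e_{n-1}K_{\star,0}$ perturbs only the last row and the support of $e_1'A_0^{t-1}$ (coordinates $1,\dots,t$) has not yet reached it. The two structural facts are dual manifestations of the same $n-2$-step propagation delay from the input at the bottom of the chain to the top state: the paper's iterates annihilate $B_0$, your row vectors never see the feedback. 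Your version is arguably more transparent—it directly formalizes the intuition that no gain acting through $e_{n-1}$ can cancel the $2^{n-2}$ amplification within $n-2$ steps, and it needs only the standard Lyapunov identity rather than value-iteration convergence and operator monotonicity; the paper's version buys the slightly stronger matrix inequality $P_0\succeq\sum_{j=0}^{n-2}(A_0')^je_1e_1'A_0^j$ (of which only the corner entry is used) and avoids having to reason about $A_{cl}$ at all. Both proofs then terminate in the identical binomial computation $(A_0^{n-2})_{1,n-1}=2^{n-2}$ from $A_0=\rho I+2J$.
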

\begin{proof} 
Consider the Riccati operator:
\[
g(X,Y)=A_0'XA_0+Y-A_0'XB_0(B_0'XB_0+R_0)^{-1}B_0'XA_0.
\]
Based on the above notation, we have $P_{0}=g(P_{0},Q_0)$.
The Riccati operator is monotone~\citep{anderson2005optimal}, i.e
\[
X_1\succeq X_2\Rightarrow g(X_1,Y)\succeq g(X_1,Y).
\]
It is also trivially monotone with respect to $Y$.
Let $X_0=0$, then the recursion $X_{t+1}=g(X_t,Q_0)$ converges to $P_{0}$. By monotonicity \[P_{0}\succeq X_t, \text{ for all }t\ge 0\]
Let $e_i$ denote the $i$-th canonical vector in $\R^{n-1}$. By monotonicity, we also have:
\[
X_1=g(X_0,Q_0)\succeq g(X_0,e_1e_1')=\underbrace{e_1e_1'}_{\tilde{X}_1}
\]
Repeating the argument:
\begin{align*}
X_2&=g(X_1,Q_0)\succeq g(\tilde{X}_1,Q_0)\succeq g(\tilde{X}_1,e_1e_1')=\underbrace{A_0'\tilde{X}_1A_0+e_1e_1'}_{\tilde{X}_2}=A_0'e_1e_1'A_0+e_1e_1'\\
&=2^2e_2e_2'+\rho^2 e_1e_1'+2\rho e_1e_2'+2\rho e_2e_1'
\end{align*}
Similarly,
\begin{align*}
X_{n-1}=g(X_{n-2},Q_0)\succeq g(\tilde{X}_{n-2},e_1e_1')= (A_0')^{n-2}e_1e_1'A_0^{n-2}+(A_0')^{n-1}e_1e_1'A_0^{n-1}+\dots+e_1e_1',
\end{align*}
where we use the fact that every $\tilde{X}_k$ is orthogonal to $B_0$ for $k\le n-2$. As a result:
\begin{align}
[P_{0}]_{n-1,n-1}&\ge [X_n]_{n-1,n-1}\ge e'_{n-1}(A_0')^{n-2}e_1e_1'A_0^{n-2}e_{n-1}\nonumber\\
&=(e_1'A_0^{n-2}e_{n-1})^2=([A_0^{n-2}]_{1,n-1})^2 \label{CTRL_app_eq:P_lower_bound_stable}
\end{align}
What remains is to compute $[A_0^{n-2}]_{1,n-1}$.
Define by $J\in\R^{(n-1)\times (n-1) }$ the companion matrix:
\[
J=\matr{{cccccc}0&1&0&&0&0\\0&0&1&&0&0\\& &&\ddots&\\0&0&0& &0&1\\0&0&0& &0&0}.
\]
Since $A_0=\rho I+2 J$ and $I$ commutes with $J$ by the binomial expansion formula:
\[
A_0^{n-2}=2^{n-2}J^{n-2}+\sum_{t=0}^{n-3}2^{t}\binom{n-2}{t}J^{t}.
\]
Since $e_1'J^{n-1}e_{n-1}=1$, $e_1'J^{t}e_{n-1}=0,$ for $t\le n-2$, we obtain:
\begin{equation}\label{CTRL_app_eq:P_lower_bound_stable_A}
([A_0^{n-2}]_{1,n-1})^2=2^{2n-4}.
\end{equation}
By~\eqref{CTRL_app_eq:P_lower_bound_stable} and~\eqref{CTRL_app_eq:P_lower_bound_stable_A} we finally get
\[
B_0'P_\mathrm{0}B_0+R_0=[P_{0}]_{n-1,n-1}+1\ge 2^{2n-4}+1
\]
\end{proof}

\begin{lemma}\label{CTRL_lem_app:stable_aux1}
We have:
\[
\snorm{\Delta'_1 P_0 (A_0+B_0K_{\star,0})}_2 \ge (0.5+o(1))(B_0'P_0B_0+R_0),
\]
where the $o(1)$ is in the large $n$ regime.
\end{lemma}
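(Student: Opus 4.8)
The plan is to produce an exact, closed-form expression for the row vector $\Delta_1'P_0(A_0+B_0K_{\star,0})$ and then show that a single entry of it already forces the norm to be $\approx\tfrac12\beta$, where I write $\beta\triangleq B_0'P_0B_0+R_0$ and $A_{\mathrm{cl}}\triangleq A_0+B_0K_{\star,0}$. The starting point is the standard Riccati identity $P_0=A_0'P_0A_{\mathrm{cl}}+Q_0$, which holds because $A_0'P_0A_{\mathrm{cl}}=A_0'P_0A_0-A_0'P_0B_0(B_0'P_0B_0+R_0)^{-1}B_0'P_0A_0=P_0-Q_0$. Since $Q_0=I_{n-1}$, setting $W\triangleq P_0A_{\mathrm{cl}}$ this reads $A_0'W=P_0-I_{n-1}$.

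Next I would read off one row of this identity using the shift structure. Because $A_0=\rho I+2J$ (with $J$ the upper shift), we have $e_{n-1}'A_0'=(A_0e_{n-1})'=\rho e_{n-1}'+2e_{n-2}'$, so the $(n-1)$-st row of $A_0'W=P_0-I_{n-1}$ gives $\rho W_{n-1,\cdot}+2W_{n-2,\cdot}=e_{n-1}'P_0-e_{n-1}'$. The last row simplifies dramatically: since $B_0=e_{n-1}$, $W_{n-1,\cdot}=B_0'P_0A_{\mathrm{cl}}=B_0'P_0A_0+(B_0'P_0B_0)K_{\star,0}=-\beta K_{\star,0}+(\beta-1)K_{\star,0}=-K_{\star,0}$. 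As $\Delta_1=e_{n-2}$, this yields the clean identity $\Delta_1'P_0A_{\mathrm{cl}}=W_{n-2,\cdot}=\tfrac12\paren{e_{n-1}'P_0-e_{n-1}'+\rho K_{\star,0}}$.

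It then remains to lower bound $\snorm{p'-e_{n-1}'+\rho K_{\star,0}}_2$, where $p'\triangleq e_{n-1}'P_0$. The dominant term is $p'$, whose norm is at least its $(n-1)$-st entry $[P_0]_{n-1,n-1}=\beta-1$, while $-e_{n-1}'$ contributes only $1$. The main obstacle is the feedback term $\rho K_{\star,0}$, which in absolute terms can be large; a crude bound through $\snorm{K_{\star,0}}_2^2\le\snorm{P_0}_2$ is too lossy here since $\snorm{P_0}_2$ can exceed $\beta^2$. The key trick is instead to keep the factor $1/\beta$ explicit by writing $K_{\star,0}=-\beta^{-1}B_0'P_0A_0=-\beta^{-1}p'A_0$, so that $\snorm{\rho K_{\star,0}}_2\le\tfrac{\rho}{\beta}\snorm{p'}_2\snorm{A_0}_2\le\tfrac{3\rho}{\beta}\snorm{p'}_2$ using $\snorm{A_0}_2\le\rho+2<3$. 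Because $\beta\ge 2^{2n-4}+1$ is exponentially large (by Lemma~\ref{CTRL_lem:exponential_riccati_stable}), this is a vanishing fraction of $\snorm{p'}_2$. A reverse triangle inequality then gives $\snorm{p'-e_{n-1}'+\rho K_{\star,0}}_2\ge\snorm{p'}_2\paren{1-\tfrac{3\rho}{\beta}}-1\ge(\beta-1)(1-o(1))-1=\beta(1-o(1))$, whence $\snorm{\Delta_1'P_0A_{\mathrm{cl}}}_2\ge\tfrac12\beta(1-o(1))=(0.5+o(1))\beta$, as claimed. The only nonroutine point is recognizing that the troublesome term must be handled in its $\beta^{-1}p'A_0$ form rather than bounded by $\snorm{K_{\star,0}}_2$ directly.
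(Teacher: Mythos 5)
Your proof is correct, and it takes a genuinely different route from the paper's. The paper works entrywise: it expands the single scalar $(B_0'P_0B_0+R_0)^{-1}\Delta_1'P_0(A_0+B_0K_{\star,0})e_{n-1}$ in terms of $\alpha=[P_0]_{n-1,n-1}$, $[P_0]_{n-2,n-2}$, and the cross entry $\gamma=[P_0]_{n-1,n-2}$, then substitutes the $(n-1,n-1)$ entry of the algebraic Riccati equation to show this normalized entry equals $0.5+o(1)$; killing the $\gamma$-terms there requires the separate auxiliary Lemma~\ref{CTRL_lem_app:stable_aux2}. You instead use the matrix identity $A_0'P_0(A_0+B_0K_{\star,0})=P_0-Q_0$ together with $A_0e_{n-1}=\rho e_{n-1}+2e_{n-2}$ and the clean observation $B_0'P_0(A_0+B_0K_{\star,0})=-R_0K_{\star,0}$ to obtain the exact closed form $\Delta_1'P_0(A_0+B_0K_{\star,0})=\tfrac12\paren{e_{n-1}'P_0-e_{n-1}'+\rho K_{\star,0}}$; I checked that this identity is consistent with the paper's entrywise formulas once one substitutes the scalar ARE relation the paper derives, so your algebra is sound. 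Your handling of the gain term is the genuinely nonroutine step and it works: rewriting $K_{\star,0}=-(B_0'P_0B_0+R_0)^{-1}e_{n-1}'P_0A_0$ gives $\snorm{\rho K_{\star,0}}_2\le 3\snorm{e_{n-1}'P_0}_2/(B_0'P_0B_0+R_0)$, which is a vanishing fraction of the dominant term $\snorm{e_{n-1}'P_0}_2\ge B_0'P_0B_0+R_0-2$, so the reverse triangle inequality closes the argument with no control on $\gamma$ whatsoever — Lemma~\ref{CTRL_lem_app:stable_aux2} becomes unnecessary in your version. Both arguments invoke Lemma~\ref{CTRL_lem:exponential_riccati_stable}, though you (like the paper) only need $B_0'P_0B_0+R_0\rightarrow\infty$, not the exponential rate. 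What each approach buys: yours is shorter, eliminates an auxiliary lemma, bounds the full row norm, and makes transparent that the row is asymptotically half the $(n-1)$-st row of $P_0$; the paper's scalar computation in exchange produces the exact normalized limit $0.5$ with explicit error terms. One small presentational point: like the paper, you silently use $\Delta_1=e_{n-2}$ and $R_0=1$, which hold in the context where the lemma is applied but are not stated in the lemma itself — worth making explicit.
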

\begin{proof}
Let $e_i$ denote the $i$-th canonical vector in $\R^{n-1}$.
It is sufficient to show that
\[
\abs{(B_0'P_{0}B_0+R_0)^{-1}\Delta'_1P_{0}(A_0+B_0K_{\star,0})e_{n-1}} \ge 0.5+o(1).
\]
For simplicity we will denote:
\[
\alpha \triangleq [P_{0}]_{n-1,n-1},\quad\beta \triangleq [P_{0}]_{n-2,n-2},\quad \gamma\triangleq [P_{0}]_{n-1,n-2}.
\]
Due to the structure of $A_0$, we have 
\[A_0e_{n-1}=\rho e_{n-1}+2e_{n-2}.\]
Using this, we obtain
\begin{align}\label{CTRL_eq:aux_K_en}
K_{\star,0}e_{n-1}&=-(B_0'P_0B_0+1)^{-1}B_0'P_0A_0e_{n-1}=-(\alpha+1)^{-1}e'_{n-1}P_0(\rho e_{n-1}+2e_{n-2})\nonumber\\
&=-(\alpha+1)^{-1}(\rho \alpha+2\gamma).
\end{align}
Combining the above results
\begin{align*}
&(B_0'P_{0}B_0+R)^{-1}\Delta'_1P_{0}(A_0+B_0K_{\star,0})e_{n-1}=(B_0'P_0B_0+1)^{-1}e_{n-2}'P_0(A_0+B_0K_{\star,0})e_{n-1}\\
&=(\alpha+1)^{-1}\bigg\{e_{n-2}'P_0(\rho e_{n-1}+2e_{n-2})-e_{n-2}'P_0e_{n-1}(\alpha+1)^{-1}(\rho \alpha+2\gamma)\bigg\}\\
&=(\alpha+1)^{-1}\set{\rho \gamma+2\beta-\gamma(\alpha+1)^{-1}(\rho \alpha+2\gamma)}\\
&=2(\alpha+1)^{-1}\set{\beta-(\alpha+1)^{-1}\gamma^2}-(\alpha+1)^{-2}\rho \gamma \\
&\stackrel{i)}{=}\frac{2}{\alpha+1}\set{\beta-\frac{\gamma^2}{\alpha+1}}+o(1),
\end{align*}
where i) follows from Lemma~\ref{CTRL_lem_app:stable_aux2}.
What remains to show is that 
\begin{equation}\label{CTRL_eq:riccati_stable_aux}
    \frac{2}{\alpha+1}\set{\beta-\frac{\gamma^2}{\alpha+1}}=0.5+o(1).
\end{equation}
Using the algebraic Riccati equation:
\begin{align*}
\alpha&=e'_{n-1}A_0'P_0A_0e_{n-1}+1-e'_{n-1}A_0'P_0B_0(\alpha+1)^{-1}B_0'P_0A_0e_{n-1}\\
&=(\rho e_{n-1}+2e_{n-2})'P_0(\rho e_{n-1}+2e_{n-2})+1\\
&-(\rho e_{n-1}+2e_{n-2})'P_0e_{n-1}(\alpha+1)^{-1}e_{n-1}'P_0(\rho e_{n-1}+2e_{n-2})\\
&= \rho^2\alpha+4\beta+4\rho \gamma+1-\frac{(\rho \alpha+2\gamma)^2}{\alpha+1}\\
&=4\beta+\frac{\rho^2\alpha+4\rho \gamma+\alpha+1-4\gamma^2}{\alpha+1}.
\end{align*}
Dividing both sides with $\alpha+1$:
\begin{align*}
\frac{\alpha}{1+\alpha}=\frac{4}{\alpha+1}\set{\beta-\frac{\gamma^2}{\alpha+1}}+\frac{4\rho \gamma}{(\alpha+1)^2}+\frac{1+\rho^2 \alpha}{(1+\alpha)^2}
\end{align*}
Rearranging the terms gives:
\begin{align*}
\frac{2}{\alpha+1}\set{\beta-\frac{\gamma^2}{\alpha+1}}-0.5=-\frac{0.5}{1+\alpha}-\frac{2\rho \gamma}{(\alpha+1)^2}-\frac{1+\rho^2 \alpha}{2(1+\alpha)^2}
\end{align*}
By~Lemma~\ref{CTRL_lem_app:stable_aux2} the second term in the right-hand side is $o(1)$. By Lemma~\ref{CTRL_lem:exponential_riccati_stable}, $\alpha=\Omega(2^{2n})$, hence all remaining terms also go to zero, which completes the proof of~\eqref{CTRL_eq:riccati_stable_aux}.
\end{proof}

\begin{lemma}\label{CTRL_lem_app:stable_aux2}
Recall the notation in the proof of Lemma~\ref{CTRL_lem_app:stable_aux1} 
\[
\alpha\triangleq [P_{0}]_{n-1,n-1},\quad \gamma\triangleq [P_{0}]_{n-1,n-2}.
\]
Then, we have:
\[
\abs{\frac{\gamma}{(\alpha+1)^2}}=o(1)
\]
\end{lemma}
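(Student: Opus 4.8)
The plan is to reduce everything to a single scalar identity relating the three entries $\alpha\triangleq[P_0]_{n-1,n-1}$, $\beta\triangleq[P_0]_{n-2,n-2}$ and $\gamma\triangleq[P_0]_{n-1,n-2}$, and then to close the argument using positive semidefiniteness together with the fact that $\alpha\to\infty$. Since $P_0\succeq0$, the $2\times2$ principal minor on indices $n-2,n-1$ gives $\gamma^2\le\alpha\beta$. Hence it suffices to show $\beta=O(\alpha^2)$: this yields $|\gamma|\le\sqrt{\alpha\beta}=O(\alpha^{3/2})$, so that $|\gamma|/(\alpha+1)^2=O(\alpha^{-1/2})$, which vanishes because Lemma~\ref{CTRL_lem:exponential_riccati_stable} forces $\alpha\ge 2^{2n-4}\to\infty$. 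So the whole lemma hinges on a polynomial-in-$\alpha$ bound for the adjacent diagonal entry $\beta$.

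The key step will be to evaluate the scalar $e_{n-1}'A_0'P_0A_0e_{n-1}$ in two different ways. On the one hand, the structure of $A_0$ gives $A_0e_{n-1}=\rho e_{n-1}+2e_{n-2}$, so expanding directly in terms of the entries of $P_0$ yields $e_{n-1}'A_0'P_0A_0e_{n-1}=\rho^2\alpha+4\rho\gamma+4\beta$. On the other hand, the algebraic Riccati equation $A_0'P_0A_0=P_0-Q_0+A_0'P_0B_0(B_0'P_0B_0+R_0)^{-1}B_0'P_0A_0$, evaluated at $B_0=e_{n-1}$, $Q_0=I$, $R_0=1$ and using $B_0'P_0A_0e_{n-1}=\rho\alpha+2\gamma$, gives the same quantity as $\alpha-1+(\rho\alpha+2\gamma)^2/(\alpha+1)$. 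Equating the two expressions,
\[
\rho^2\alpha+4\rho\gamma+4\beta=\alpha-1+\frac{(\rho\alpha+2\gamma)^2}{\alpha+1},
\]
and clearing the denominator, the cross terms cancel and I expect the clean exact relation
\[
4\beta(\alpha+1)=\alpha^2-\rho^2\alpha-4\rho\gamma+4\gamma^2-1 .
\]

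From here the conclusion is routine. Using $\rho<1$, discarding the nonpositive terms $-\rho^2\alpha-1$, and inserting $|\gamma|\le\sqrt{\alpha\beta}$ and $\gamma^2\le\alpha\beta$ gives $4\beta(\alpha+1)\le\alpha^2+4\sqrt{\alpha\beta}+4\alpha\beta$, i.e.\ after cancelling $4\alpha\beta$, the inequality $4\beta\le\alpha^2+4\sqrt{\alpha\beta}$. Viewing this as a quadratic inequality in $\sqrt{\beta}$ yields $\beta\le\tfrac14\alpha^2(1+o(1))=O(\alpha^2)$, which is exactly what the reduction needs. I regard the \emph{identification} of this two-way evaluation as the main obstacle: the generic Riccati upper bound (Lemma~\ref{CTRL_lem:Riccati_Upper}) is far too lossy here, and a naive open-loop ($u=0$) comparison, while giving a clean recursion via $2e_{n-2}=(A_0-\rho I)e_{n-1}$, degrades by a factor growing like $(1-\rho)^{-\Theta(n)}$ and so fails for $\rho$ near $1$. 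The virtue of the scalar identity above is that it is exact and uses only $\rho<1$, so the resulting bound $\beta=O(\alpha^2)$ — and hence the whole statement — holds uniformly over $\rho\in(0,1)$; the only remaining care is checking that the $o(1)$ terms are genuinely controlled by $\alpha\to\infty$ rather than by $\rho$.
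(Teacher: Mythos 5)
Your proposal is correct: the principal-minor inequality $\gamma^2\le\alpha\beta$ is valid since $P_0\succ 0$; the two-way evaluation of $e_{n-1}'A_0'P_0A_0e_{n-1}$ is exactly right (with $B_0=e_{n-1}$ and $A_0e_{n-1}=\rho e_{n-1}+2e_{n-2}$), and clearing denominators does yield the exact identity $4\beta(\alpha+1)=\alpha^2-\rho^2\alpha-4\rho\gamma+4\gamma^2-1$; and your quadratic-in-$\sqrt{\beta}$ step then gives $\beta\le\tfrac14\alpha^2(1+o(1))$ uniformly over $\rho\in(0,1)$, whence $|\gamma|\le\sqrt{\alpha\beta}=O(\alpha^{3/2})$ and $|\gamma|/(\alpha+1)^2=O(\alpha^{-1/2})=o(1)$, using $\alpha\ge 2^{2n-4}$ from Lemma~\ref{CTRL_lem:exponential_riccati_stable}, whose lower bound is indeed $\rho$-independent, so your uniformity claim stands.

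The route is genuinely different from the paper's, which is shorter and entirely $\beta$-free: it starts from the closed-loop Lyapunov form
\[
P_0=(A_0+B_0K_{\star,0})'P_0(A_0+B_0K_{\star,0})+Q_0+K_{\star,0}'R_0K_{\star,0}\succeq K_{\star,0}'R_0K_{\star,0},
\]
evaluates at $e_{n-1}$ using $K_{\star,0}e_{n-1}=-(\alpha+1)^{-1}(\rho\alpha+2\gamma)$ to get $\alpha\ge\bigl(\tfrac{\rho\alpha+2\gamma}{\alpha+1}\bigr)^2$, and solves a quadratic inequality in $|2\gamma/(\alpha+1)|$ to obtain $2|\gamma|/(\alpha+1)\le\rho+\sqrt{\alpha+1}$ --- the same $|\gamma|=O(\alpha^{3/2})$ bound and the same final rate $O(\alpha^{-1/2})$. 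Two points of comparison are worth recording. First, your ``key'' scalar identity is not new to the paper: it is precisely the relation the paper derives inside the proof of Lemma~\ref{CTRL_lem_app:stable_aux1} (there written as $\alpha=4\beta+\tfrac{\rho^2\alpha+4\rho\gamma+\alpha+1-4\gamma^2}{\alpha+1}$), where it is used \emph{together with} the present lemma; your proof avoids circularity by deriving it directly from the ARE, but it does cost you the extra PSD-minor input $\gamma^2\le\alpha\beta$ and the detour through $\beta$. Second, what your route buys is the quantitative side product $\beta\le\tfrac14\alpha^2(1+o(1))$, which is of some independent interest given the role $\beta$ plays in Lemma~\ref{CTRL_lem_app:stable_aux1}, whereas the paper's one-line use of $P_0\succeq K_{\star,0}'R_0K_{\star,0}$ reaches the conclusion about $\gamma$ with less machinery. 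Your closing observation is also apt: an open-loop ($u=0$) comparison would degrade like $(1-\rho)^{-\Theta(n)}$, and both your identity and the paper's Lyapunov inequality are exact and uniform in $\rho\in(0,1)$, with the $o(1)$ controlled solely through $\alpha\to\infty$ as $n\to\infty$.
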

\begin{proof}
We use the relation:
\[
P_{0}=(A_0+B_0K_{\star,0})'P_{0}(A_0+B_0K_{\star,0})+Q_0+K_{\star,0}'R_0K_{\star,0}\succeq K_{\star,0}'R_0K_{\star,0}.
\]
Multiplying from the left and right by $e_{n-1}$ and by invoking~\eqref{CTRL_eq:aux_K_en} we obtain:
\[
\alpha\ge \paren{\frac{\rho \alpha+2\gamma}{\alpha+1}}^2=(\xi+\lambda)^2,
\]
where for simplicity we define $
\xi=\frac{\rho \alpha}{\alpha+1},\,\lambda=\frac{2\gamma}{\alpha+1}.
$
We can further lower bound the above expression by:
\[
\alpha \ge (\xi+\lambda)^2 \ge \xi^2+\lambda^2-2\xi\abs{\lambda}.
\]
This is a quadratic inequality and holds if and only if:
\[
\xi-\sqrt{\alpha}\le \abs{\lambda} \le \xi+\sqrt{\alpha}.
\]
As a result:
\[
2\frac{\abs{\gamma}}{\alpha+1}\le \rho+\sqrt{\alpha+1}
\]
which leads to
\[
\frac{\abs{\gamma}}{\alpha+1}\le 0.5\frac{\rho+\sqrt{\alpha+1}}{\alpha+1}=O(1/\sqrt{\alpha})=o(1)
\]
since $\alpha=\Omega(2^{2n})$.
\end{proof}

\end{document}